\documentclass[letter,11pt]{article}
\usepackage[margin=1in]{geometry}
\usepackage{algorithm,algorithmic}
\usepackage[numbers,sort&compress]{natbib}
\usepackage[utf8]{inputenc} % allow utf-8 input
\usepackage[T1]{fontenc}    % use 8-bit T1 fonts
\usepackage{hyperref}       % hyperlinks
\usepackage{url}            % simple URL typesetting
\usepackage{booktabs}       % professional-quality tables
\usepackage{amsfonts}       % blackboard math symbols
\usepackage{nicefrac}       % compact symbols for 1/2, etc.
\usepackage{microtype}      % microtypography
\usepackage{xcolor}         % colors
\usepackage{amsmath}
\usepackage{graphicx}
\usepackage{amsthm}
\usepackage{amsfonts}
\usepackage{multicol}
\usepackage{amssymb}
\usepackage{bm}
\usepackage{epstopdf}
\usepackage{subcaption}
\usepackage{url}
\usepackage{mathtools}
\usepackage{xcolor}

\newtheorem{assumption}[]{Assumption}
\newtheorem{remark}{\textbf{Remark}}

\newtheorem{theorem}{Theorem}[]
\newtheorem{corollary}{Corollary}[theorem]
\newtheorem{lemma}[]{Lemma}
\usepackage{multirow}
\usepackage{makecell}   % \makecell, \thead
\usepackage{colortbl}   % \rowcolor
\usepackage{hhline}
\usepackage{cleveref}
\usepackage{enumitem}
\usepackage{blindtext}
\definecolor{tabcolor}{RGB}{245,237,227} % light beige (tune if needed)
\usepackage{tcolorbox}
\usepackage{titletoc}

\usepackage[table]{xcolor}
\usepackage{pifont}
\newcommand{\yesmark}{\ding{51}}
\newcommand{\nomark}{\ding{55}}
\usepackage{tabularx}
\newcommand{\E}{\mathbb{E}}
\newcommand{\R}{\mathbb{R}}

\newcommand{\cF}{\mathcal{F}}
\newcommand{\Ocal}{\mathcal{O}}
\newcommand{\Lcal}{\mathcal{L}}

\title{\vspace{-10mm}\textbf{Beyond Bounded Variance: Variance-Reduced Normalized Methods for Nonconvex Optimization under Blum--Gladyshev Noise}}
\date{}
\vspace{-0.05in}
\author{
Antesh Upadhyay, Arda Fazla, Abolfazl Hashemi\thanks{Authors with the School of Electrical and Computer Engineering, Purdue University, West Lafayette, IN 47907, USA.}}

\begin{document}
\maketitle
%%%%%%%%%%%%%%%%%%%%%%%%%%%%%%%%%%%%%%%%%%%%%%%%%%%%%%%%%%%%%
%%%%%%%%%%%%%%%%%%%%%%%%%%%%%%%%%%%%%%%%%%%%%%%%%%%%%%%%%%%%%
\begin{abstract}
We study nonconvex stochastic optimization under the Blum--Gladyshev ($\mathsf{BG}$-0) noise model, where the stochastic gradient variance grows quadratically with the distance from the initialization. We consider this problem under both standard smoothness and the symmetric generalized-smoothness framework~\cite{chen2023generalized}, which captures objectives whose local curvature can scale with the gradient norm. We prove that normalized stochastic gradient descent with momentum, using only one stochastic gradient per iteration, converges under $\mathsf{BG}$-0 noise with oracle complexity $\Ocal(\varepsilon^{-6})$. This rate holds both for standard smoothness and for $\alpha$-symmetric generalized smoothness, showing that generalized smoothness is rate-neutral for normalized momentum in this setting. We then study a variance-reduced normalized STORM method. Under mean-square smoothness and sharp initialization, the method achieves the minimax optimal $\Ocal(\varepsilon^{-4})$ complexity, matching the lower bound~\cite{fazla2026lower}. Under expected $\alpha$-symmetric generalized smoothness, the STORM recursion couples gradient-dependent smoothness with distance-dependent noise, leading to complexity $\Ocal(\varepsilon^{-(4+\alpha)})$ for $\alpha\in(0,1)$ and $\Ocal(\varepsilon^{-5})$ for $\alpha=1$. When the distance-growth parameter in the noise model vanishes, our guarantees recover the standard bounded-variance rates: $\Ocal(\varepsilon^{-4})$ for momentum, $\Ocal(\varepsilon^{-3})$ for variance reduction, and $\Ocal(\varepsilon^{-2})$ in the deterministic case. To our knowledge, these are the first convergence guarantees for normalized methods in non-convex stochastic optimization under $\mathsf{BG}$-0 noise without bounded domains, increasing batch sizes, or explicit anchoring, covering both standard and generalized smoothness regimes.
\end{abstract}
%%%%%%%%%%%%%%%%%%%%%%%%%%%%%%%%%%%%%%%%%%%%%%%%%%%%%%%%%%%%%
%%%%%%%%%%%%%%%%%%%%%%%%%%%%%%%%%%%%%%%%%%%%%%%%%%%%%%%%%%%%%
\section{Introduction}
\label{sec:intro}
Large-scale nonconvex learning is typically formulated as the stochastic optimization problem
\begin{equation}
    \label{eq:op_problm}
    \min_{x\in\mathbb R^d} f(x),
    \qquad
    f(x):=\mathbb E_\xi[f(x;\xi)],
\end{equation}
where \(f:\mathbb R^d\to\mathbb R\) is generally nonconvex. The goal is to find an \(\varepsilon\)-stationary point, i.e., a point \(x\) satisfying \(\E\|\nabla f(x)\|\le \varepsilon\).

Under standard smoothness and uniformly bounded stochastic noise, the complexity theory for this problem is well developed. In the deterministic setting, gradient descent achieves the optimal \(\Ocal(\varepsilon^{-2})\) iteration complexity for smooth nonconvex optimization~\cite{nesterov2004introductory,nesterov2018lectures}. In the stochastic setting, SGD achieves the classical \(\Ocal(\varepsilon^{-4})\) sample complexity under bounded variance $\mathbb E_\xi \bigl\| \nabla f(x;\xi)-\nabla f(x) \bigr\|^2 \le \sigma^2,$ and this rate is unimprovable for general smooth nonconvex stochastic optimization~\cite{ghadimi2013stochastic,arjevani2023lower}.
Under stronger mean-square smoothness or average-smoothness conditions, variance-reduced methods such as SVRG, SAGA,
SARAH, SPIDER, SpiderBoost, SNVRG, PAGE, and STORM improve the complexity to the optimal $\Ocal(\varepsilon^{-3})$~\cite{johnson2013accelerating,defazio2014saga,nguyen2017sarah,fang2018spider,wang2018spiderboost,zhou2020stochastic,li2021page,cutkosky2019momentum}.

These classical stochastic guarantees typically rely on a uniform bounded-noise model. While mathematically convenient, this assumption can fail even in elementary unconstrained stochastic optimization problems, such as least squares, and in modern deep learning, where the stochastic gradient variance is not uniformly constant but can grow as the iterate moves farther from a reference point. This motivates the $\mathsf{BG}$-0 noise model~\cite{blum1954approximation,gladyshev1965stochastic}, which has recently been identified as one of the weakest viable variance assumptions for theoretical analysis~\cite{alacaoglu2025towards}. Under $\mathsf{BG}$-0, the variance is allowed to grow quadratically with the distance from the initialization 
$\mathbb E_\xi \bigl\| \nabla f(x;\xi)-\nabla f(x) \bigr\|^2 \le B^2\|x-x^0\|^2+G^2.$
The classical bounded variance setting is recovered by taking $B=0$, while the deterministic case corresponds to $B=G=0$.

Although \(\mathsf{BG}\)-0 is more permissive than bounded variance, it creates an inherent feedback mechanism. If the iterates move far from $x^0$, then the oracle becomes noisier; the larger stochastic noise can then perturb future directions and push the trajectory even farther from $x^0$. Thus, over unbounded domains, the variance scale $B^2\|x^k-x^0\|^2+G^2$ cannot be treated as a harmless constant. Recently \citet{fazla2026lower} show that this difficulty is intrinsic: under $\mathsf{BG}$-0 noise, the lower bound for smooth nonconvex optimization worsens from the classical $\Omega(\varepsilon^{-4})$ to $\Omega(\varepsilon^{-6})$, and even under mean-square smoothness, the best possible rate is $\Omega(\varepsilon^{-4})$ rather than the bounded variance $\Omega(\varepsilon^{-3})$ rate~\cite{arjevani2023lower}. They also proposed PASTA, which matches these limits by combining Halpern-style anchoring~\cite{halpern1967fixed}, Tikhonov regularization, and dynamic batching, connecting $\mathsf{BG}$-0 stochastic approximation with classical fixed-point schemes such as Halpern and Krasnoselskii--Mann iterations~\cite{mann1953mean,krasnosel1955two,halpern1967fixed,fazla2026lower}.

In this paper, we take a complementary viewpoint. Rather than adding an explicit stabilizing anchor or increasing per-iteration batch sizes, we ask whether normalized stochastic methods with momentum or recursive variance reduction can handle $\mathsf{BG}$-0 noise without bounded domains.

\textbf{Why Normalization?} The key observation is that normalization decouples the length of the update from the magnitude of the stochastic gradient estimator. Given an estimator $v^k$, define
\[
    d^k=\frac{v^k}{\|v^k\|},
    \qquad
    x^{k+1}=x^k-\gamma d^k\footnote{Throughout, we use the convention \(v/\|v\|=0\) when \(v=0\), so that the update is well defined in degenerate cases.}.
\]
Then, $\|x^{k+1}-x^k\|=\gamma\|d^k\|\le\gamma,$ regardless of the size of $\|v^k\|$. Consequently, $\|x^k -x^0\|\le\sum_{t=0}^{k-1}\|x^{t+1}-x^t\|\le k\gamma.$ Thus, along a normalized trajectory, the $\mathsf{BG}$-0 variance scale satisfies $\left( B^2\|x^k-x^0\|^2+G^2 \right)^{1/2} \le G+B k\gamma.$ Normalization, therefore, transforms the potentially uncontrolled effect of large stochastic gradient magnitudes into controlled trajectory growth. In contrast, PASTA~\cite{fazla2026lower} controls the $\mathsf{BG}$-0 variance via dynamic batching. There, the batch size must scale with the upper bound on the local variance, i.e., $N_k \ \propto\ \tfrac{B^2\|x^k-x^0\|^2+G^2}{\sigma^2}$, up to problem-dependent accuracy and step-size factors. Such a rule requires
knowledge, or at least reliable estimates, of the variance-growth constants $B$ and $G$, which may be difficult to obtain in practice.

\textbf{Generalized Smoothness.} A second challenge is that many learning objectives are not globally smooth, i.e., their local smoothness can grow with the gradient norm~\cite{jin2021non,zhang2019gradient}. This is a natural setting for normalized updates~\cite{chen2023generalized}. We therefore study $\mathsf{BG}$-0 noise under the deterministic class $\Lcal_{\rm sym}^*(\alpha)$ and its stochastic analogue $\E\Lcal_{\rm sym}^*(\alpha)$ from~\citet{chen2023generalized}, for $\alpha\in(0,1]$. The class $\Lcal^*_{\rm sym}(\alpha)$ contains standard smooth functions $(\Lcal)$, asymmetric generalized smooth functions $(\Lcal_{\rm asym}^*)$, and Hessian-based generalized smooth functions $(\Lcal_{\rm H}^*)$, and also includes high-order polynomial and exponential-type objectives~\cite{chen2023generalized,zhang2019gradient,levy2020large}. Appendix~\ref{apdx:primer_smoothness} summarizes these definitions.

Prior work shows that, under suitable bounded or relative variance assumptions, generalized smooth nonconvex optimization can be as efficient as smooth nonconvex optimization: normalized gradient methods recover the
deterministic $\Ocal(\varepsilon^{-2})$ rate, while SPIDER-type variance reduction recovers the stochastic $\Ocal(\varepsilon^{-3}$) rate under expected generalized smoothness~\cite{chen2023generalized,fang2018spider}. These results, however, do not address $\mathsf{BG}$-0 noise. In this setting, the noise level is distance-dependent, while under generalized smoothness, the local curvature is gradient-dependent. This creates a new interplay between trajectory growth and gradient growth that is absent from the bounded variance setting. It is therefore unclear a priori whether the principle that generalized smooth optimization is ``as efficient as'' smooth optimization continues to hold under $\mathsf{BG}$-0 noise. This paper studies exactly this interaction.

\subsection{Contributions}
\label{subsec:contribution}
To this end, we develop a convergence theory for normalized stochastic methods under $\mathsf{BG}$-0 noise across standard and generalized smoothness regimes. The results answer four questions.
\begin{itemize}[leftmargin=*,label={}]
    \item \textbf{Q1. Can single-sample normalized momentum converge under $\mathsf{BG}$-0?}

    Yes. We prove that normalized stochastic gradient descent with momentum ($\mathsf{NSGDM}$) converges under $\mathsf{BG}$-0 using one fresh stochastic gradient sample per iteration. The method does not require bounded domains, bounded stochastic gradients, bounded variance, dynamic batching, or explicit anchoring. Under $\mathsf{BG}$-0 noise, $\mathsf{NSGDM}$ achieves oracle complexity of $\Ocal(\varepsilon^{-6})$. Thus, normalization and momentum provide an implicit stabilization mechanism: normalization controls trajectory growth, while momentum stabilizes the noisy update direction.

    \item \textbf{Q2. Does replacing standard smoothness with generalized smoothness worsen the oracle complexity of $\mathsf{NSGDM}$ under $\mathsf{BG}$-0?}

    No. For $\mathsf{NSGDM}$, generalized smoothness is rate-neutral at the level of the $\varepsilon$-complexity exponent: under $\Lcal_{\rm sym}^*(\alpha)$-type smoothness, the stochastic first-order oracle (SFO) complexity remains $\Ocal(\varepsilon^{-6})$, up to constants depending on the generalized smoothness parameters.

    \item \textbf{Q3. Can additional stochastic regularity recover the optimal $\mathsf{BG}$-0 rate?}

    Yes. Mean-square smoothness ($\mathsf{MSS}$) provides additional stochastic regularity by controlling differences of stochastic gradients $\E_\xi \bigl\| \nabla f(y;\xi)-\nabla f(x;\xi) \bigr\|^2 \le L^2\|y-x\|^2.$ Under this stronger condition, normalized STORM ($\mathsf{NSTORM}$) improves over the $\Ocal(\varepsilon^{-6})$ single-sample momentum rate. With a one-time sharp initialization batch, it achieves the optimal oracle complexity of $\Ocal(\varepsilon^{-4})$.

    \item \textbf{Q4. Does the optimal $\mathsf{NSTORM}$ rate persist under expected generalized smoothness?}

    Not exactly. Under mean-square smoothness, $\mathsf{NSTORM}$ recovers the optimal $\mathsf{BG}$-0 rate $\Ocal(\varepsilon^{-4})$. Under $\E\Lcal_{\rm sym}^*(\alpha)$ generalized smoothness, however, the estimator-difference recursion depends on gradient-dependent smoothness terms, which interact with the distance-dependent $\mathsf{BG}$-0 variance. As a result, $\mathsf{NSTORM}$ achieves $ \Ocal(\varepsilon^{-(4+\alpha)})$ for $\alpha \in (0,1)$ and $\Ocal(\varepsilon^{-5})$ for $\alpha=1$. Thus, generalized smoothness is free for $\mathsf{NSGDM}$ at the level of the $\varepsilon$-exponent, but it induces a quantifiable $\alpha$-dependent price for variance-reduced $\mathsf{NSTORM}$ under $\mathsf{BG}$-0.
\end{itemize}
We also provide synthetic experiments on generalized smooth objectives from~\citet{chen2023generalized} under a \(\mathsf{BG}\)-0 oracle, comparing normalized momentum and normalized variance reduction with dynamic-batching baselines. Finally, Table~\ref{tab:related_work_compact} summarizes the comparison with prior work, emphasizing the combination of \(\mathsf{BG}\)-0 noise, normalized methods, and smoothness regimes studied here.
%%%%%%%%%%%%%%%%%%%%%%Table 1%%%%%%%%%%%%%%%%%%%%%%%%%%%%%%%
\begin{table}[t]
\centering
\caption{Comparison with representative works under different notions of smoothness $(\Lcal, \Lcal_{\rm H}^*, \Lcal_{\rm asym}^*, \Lcal_{\rm sym}^*(\alpha), \mathsf{MSS}, \E \Lcal_{\rm sym}^*(\alpha))$. Complexity is measured in oracle calls to find an $\varepsilon$-stationary point.}
\label{tab:related_work_compact}
\vskip 0.05in
\renewcommand{\arraystretch}{1.18}
\setlength{\tabcolsep}{3pt}
\footnotesize
\resizebox{\textwidth}{!}{%
\begin{tabular}{@{}
p{3.1cm}
p{3.0cm}
p{1.25cm}
p{3.40cm}
p{7.60cm}
@{}}
\toprule
\textbf{Work}
&
\textbf{Smoothness class}
&
\textbf{$\mathsf{BG}$-0?}
&
\textbf{Method}
&
\textbf{Oracle complexity}
\\
\midrule
\citet{cutkosky2020momentum}
&
$\Lcal$
&
\nomark
&
$\mathsf{NSGDM}$
&
$\Ocal(\varepsilon^{-4})$
\\
\citet{zhang2019gradient}
&
$\Lcal_{\rm H}^*$
&
\nomark
&
clipped SGD
&
$\Ocal(\varepsilon^{-4})$
\\

\citet{reisizadeh2025variance}
&
$\Lcal_{\rm asym}^*$
&
\nomark
&
SPIDER + clipping
&
$\Ocal(\varepsilon^{-3})$
\\

\citet{chen2023generalized}
&
$\Lcal_{\rm sym}^*(\alpha)$, $\E\Lcal_{\rm sym}^*(\alpha)$
&
\nomark
&
$\beta$-normalized GD; SPIDER
&
Det. ($\Lcal_{\rm sym}^*(\alpha)$): $\Ocal(\varepsilon^{-2})$; Stoch. ($\E\Lcal_{\rm sym}^*(\alpha)$): $\Ocal(\varepsilon^{-3})$
\\

\citet{khirirat2026error}
&
$\Lcal_{\rm sym}^*(1)$
&
\nomark
&
$\mathsf{NSGDM}$
&
$\Ocal(\varepsilon^{-4})$
\\

\citet{fazla2026lower}
&
$\Lcal$, $\mathsf{MSS}$
&
\yesmark
&
PASTA
&
$\Lcal$: $\Theta(\varepsilon^{-6})$; MSS: $\Theta(\varepsilon^{-4})$
\\

\midrule
\rowcolor{black!6}
\textbf{This paper}
&
$\Lcal$, $\Lcal_{\rm sym}^*(\alpha)$
&
\yesmark
&
$\mathsf{NSGDM}$
&
$\Ocal(\varepsilon^{-6})$
\\

\rowcolor{black!6}
\textbf{This paper}
&
$\mathsf{MSS}$, $\E\Lcal_{\rm sym}^*(\alpha)$
&
\yesmark
&
$\mathsf{NSTORM}$
&
$\begin{aligned}
\mathsf{MSS}:\Ocal(\varepsilon^{-4});
\E\Lcal_{\rm sym}^*(\alpha):\Ocal(\varepsilon^{-(4+\alpha)});
\E\Lcal_{\rm sym}^*(1):\Ocal(\varepsilon^{-5})
\end{aligned}$
\\
\bottomrule
\end{tabular}%
}
\vspace{0.4em}
\begin{minipage}{0.96\textwidth}
\footnotesize
\textbf{Note.}
When $B=0$, our bounds recover the standard bounded variance rates: $\Ocal(\varepsilon^{-4})$ for normalized momentum, $\Ocal(\varepsilon^{-3})$ for variance reduction, and
$\Ocal(\varepsilon^{-2})$ in the deterministic case, for standard and generalized smoothness classes.
\end{minipage}
\vspace{-1.9em}
\end{table}
%%%%%%%%%%%%%%%%%%%%%%%%%%%%%End Table%%%%%%%%%%%%%%%%%%%%%%
%%%%%%%%%%%%%%%%%%%%%%%%%%%%%%%%%%%%%%%%%%%%%%%%%%%%%%%%%%%%%
%%%%%%%%%%%%%%%%%%%%%%%%%%%%%%%%%%%%%%%%%%%%%%%%%%%%%%%%%%%%%
\section{Related Work}
\label{sec:rel_work}
\textbf{From bounded variance to Blum--Gladyshev noise.}
Standard nonconvex guarantees usually assume uniform bounds on variance or subgradients, a convenient but often unrealistic simplification \cite{ghadimi2013stochastic, nemirovski2009robust, bottou2018optimization,moulines2011non, rakhlin2012making, shamir2013stochastic,bottou2018optimization, jain2018parallelizing}. More recent research explores noise models where variance scales with the gradient norm or iterate location~\cite{gorbunov2020unified,khaled2022better,khaled2023unified,ilandarideva2023accelerated,grimmer2019convergence,alacaoglu2025towards}. A key example is the Blum–Gladyshev condition~\cite{blum1954approximation,gladyshev1965stochastic}, which allows variance to grow with the squared distance from a reference point. This is particularly difficult in nonconvex settings, as there is no natural ``pull'' to keep iterates from drifting away. While recent work like the PASTA framework \cite{fazla2026lower} uses anchoring, regularization, and dynamic batching to stabilize this drift, achieving sharp rates of $\Theta(\varepsilon^{-6})$ or $\Theta(\varepsilon^{-4})$, our approach takes a different path. We utilize normalized updates to restrict movement, combined with momentum and recursive variance reduction to keep our stochastic directions stable.

\textbf{Normalization and clipping for nonuniform smoothness.} Clipping and normalization are widely used to stabilize first-order methods when gradients or local curvature can be large. Under $(L_0, L_1)$-type smoothness, clipped methods have been analyzed in deterministic and stochastic settings~\cite{zhang2019gradient,koloskova2023revisiting}, with extensions to momentum, variance reduction, and adaptive stepsizes~\cite{zhang2020improved,reisizadeh2025variance,wang2024provable,li2023convergence,takezawa2024parameter}. Normalized methods, including normalized gradient descent, generalized SignSGD, and normalized momentum, provide a related way to control large updates~\cite{zhao2021convergence,hubler2024parameter,crawshaw2022robustness}. Recent work also studies normalized error-feedback under $\Lcal^*_{\rm sym}(1)$ in distributed optimization~\cite{khirirat2026error}. These results, however, rely on a stronger notion of variance bounds. In contrast, we study normalized momentum and recursive variance reduction under distance-dependent $\mathsf{BG}$-0 noise for the generalized smooth classes $\Lcal^*_{\rm sym}(\alpha)$ and $\E\Lcal^*_{\rm sym}(\alpha)$.

\textbf{Momentum and variance reduction.} Momentum and recursive variance reduction stabilize stochastic gradient estimates. Momentum smooths sample noise~\cite{cutkosky2020momentum}, while methods such as SARAH, SPIDER, SpiderBoost, PAGE, and STORM use recursive estimators to obtain sharper control~\cite{nguyen2017sarah,fang2018spider,wang2018spiderboost,li2021page,cutkosky2019momentum}. Under bounded variance and mean-squared smoothness, these methods improve the stochastic nonconvex complexity from $\Ocal(\varepsilon^{-4})$ to $\Ocal(\varepsilon^{-3})$. Under $\mathsf{BG}$-0 noise, this acceleration is no longer attainable in general; the optimal $\mathsf{MSS}$ rate becomes $\Ocal(\varepsilon^{-4})$~\cite{fazla2026lower}. PASTA reaches this rate using dynamic batching with PAGE-type variance reduction. In contrast, our normalized STORM analysis recovers the same $\Ocal(\varepsilon^{-4})$ rate with single-sample-transition updates and sharp initialization. Under $\E\Lcal^*_{\rm sym}(\alpha)$, the interaction between recursive estimation and trajectory-dependent variance yields $\Ocal(\varepsilon^{-(4+\alpha)})$, with $\Ocal(\varepsilon^{-5})$ at $\alpha=1$.

%%%%%%%%%%%%%%%%%%%%%%%%%%%%%%%%%%%%%%%%%%%%%%%%%%%%%%%%%%%%%
%%%%%%%%%%%%%%%%%%%%%%%%%%%%%%%%%%%%%%%%%%%%%%%%%%%%%%%%%%%%%
\section{Preliminaries}
\label{sec:prelim}
\textbf{Notation.}
All norms $\|\cdot\|$ are Euclidean norms, and $\langle \cdot,\cdot\rangle$ denotes the Euclidean inner product. We use $\Ocal(\cdot)$ to hide numerical constants
independent of the target accuracy $\varepsilon$. For a random variable $X$, $\E[X]$ denotes expectation over all algorithmic randomness. We let $\cF_k$ be the filtration generated by the algorithm up to iteration $k$, and write $\E_k[X]:=\E[X\mid\cF_k]$. We use \(\E_\xi\) in assumptions for expectations over an oracle sample at fixed query points. We define $f^{\inf}:=\inf_{x\in\R^d} f(x)$ and $\Delta:=f(x^0)-f^{\inf}$. The output iterate is sampled uniformly from the generated trajectory, i.e., $\widehat{x}\sim \{x^0, x^1, \dots, x^K\}.$

\textbf{Problem Formulation.} We consider the problem in \eqref{eq:op_problm}. The stochastic first-order oracle returns $\nabla f(x;\xi)$ at a queried point $x$. We measure complexity by the number of SFO calls. As mentioned previously, the goal is to find an $\varepsilon$-stationary point, i.e., a point $x$ such that $\E\|\nabla f(x)\|\le \varepsilon.$

\textbf{Assumptions.} We start by stating the standard assumption used in our analysis.
\begin{assumption}[Lower boundedness of $f$]
    \label{as:lower}
    Function $f$ is bounded from below, i.e., $f^{\inf}>-\infty$.
\end{assumption}
\begin{assumption}[Unbiased $\mathsf{BG}$-0 stochastic oracle]
    \label{as:bg0_oracle}
    For every $x\in\R^d$, the stochastic gradient oracle is unbiased, i.e., $\E_\xi[\nabla f(x;\xi)]=\nabla f(x).$ Moreover, there exist constants $B,G\ge0$ such that $\E_\xi \left\| \nabla f(x;\xi)-\nabla f(x) \right\|^2 \le B^2\|x-x^0\|^2+G^2$.
\end{assumption}
When $B=0$, Assumption~\ref{as:bg0_oracle} reduces to the classical bounded variance condition. When $B=G=0$, the oracle is deterministic.

We pair this noise model with several smoothness conditions of increasing generality. The first two are standard: $L_0$-smoothness controls population gradients~\cite{ghadimi2013stochastic}, while mean-square smoothness controls the stochastic gradient differences~\cite{cutkosky2019momentum,li2021page}.
\begin{assumption}[$L_0$-smoothness of $f$]
\label{as:l0_smooth}
There exists $L_0>0$ such that for all $x,y\in\R^d$, $\|\nabla f(y)-\nabla f(x)\| \le L_0\|y-x\|$.
\end{assumption}
\begin{assumption}[Mean-square smoothness]
\label{as:mss}
There exists $L>0$ such that for all $x,y\in\R^d$, $\E_\xi \left\| \nabla f(y;\xi)-\nabla f(x;\xi) \right\|^2 \le L^2\|y-x\|^2.$
\end{assumption}
The next two assumptions extend smoothness to the generalized setting of~\citet{chen2023generalized}, where the local smoothness parameter is allowed to grow with the gradient norm. The deterministic class $\Lcal_{\rm sym}^*(\alpha)$ governs the normalized momentum analysis, while the expected class $\E\Lcal_{\rm sym}^*(\alpha)$ governs the normalized STORM analysis.
\begin{assumption}[$\Lcal_{\rm sym}^*(\alpha)$-type generalized smoothness]
\label{as:lsym_alpha}
Let $\alpha\in(0,1]$. We say that $f\in\Lcal_{\rm sym}^*(\alpha)$ if $f:\R^d\to\R$ is differentiable and there exist constants $L_0, L_1>0$ such that, for all $x,y\in\R^d$,

\[
\|\nabla f(y)-\nabla f(x)\| \le \left( L_0 + L_1 \max_{\theta\in[0,1]} \|\nabla f(x_\theta)\|^\alpha \right) \|y-x\|, \text{ where }  x_\theta:=\theta y+(1-\theta)x.
\]
\end{assumption}
\begin{assumption}[Expected $\alpha$-symmetric generalized smoothness]
\label{as:elsym_alpha}
Let $\alpha\in(0,1]$. We say that $f \in \E\Lcal_{\rm sym}^*(\alpha)$ if there exist constants $L_0, L_1>0$ such that, for all $x,y\in\R^d$, where $x_\theta:=\theta y+(1-\theta)x$, we have

\[
\E_\xi \big\| \nabla f(y;\xi)-\nabla f(x;\xi) \big\|^2 \le \|y-x\|^2 \E_\xi \big[ \big( L_0 + L_1 \max_{\theta\in[0,1]} \|\nabla f(x_\theta;\xi)\|^\alpha \big)^2 \big].
\]

\end{assumption}
Both classes include standard smoothness as the special case $L_1=0$ and recover symmetric $(L_0,L_1)$-generalized smoothness when $\alpha=1$~\cite{khirirat2026error}; values $\alpha\in(0,1)$ give sublinear dependence on the gradient norm. The expected condition is more delicate under $\mathsf{BG}$-0 noise because it involves sample-gradient moments, which must be controlled through the distance-dependent variance scale. This interaction is the source of the $\alpha$-dependent price in our variance-reduced rates.
%%%%%%%%%%%%%%%%%%%%%%%%%%%%%%%%%%%%%%%%%%%%%%%%%%%%%%%%%%%%%
%%%%%%%%%%%%%%%%%%%%%%%%%%%%%%%%%%%%%%%%%%%%%%%%%%%%%%%%%%%%%
\section{Normalized Momentum under \texorpdfstring{$\mathsf{BG}$}{BG}-0}
\label{sec:NSGDM}
We first consider the normalized stochastic gradient descent with momentum, denoted by $\mathsf{NSGDM}$. Given $x^0\in\R^d$, stepsize $\gamma>0$, momentum parameter $\eta\in(0,1]$, and horizon $K\ge0$, initialize $v^0=\nabla f(x^0;\xi^0).$ For $k=0,1,\ldots,K$, $\mathsf{NSGDM}$ performs
\begin{equation}
\label{eq:nsgdm_updates}
\mathsf{NSGDM:}\qquad
x^{k+1} = x^k-\gamma\frac{v^k}{\|v^k\|};
\qquad
v^{k+1} = (1-\eta)v^k+\eta\nabla f(x^{k+1};\xi^{k+1}),
\end{equation}
where the $v^{k+1}$ update is used only for $k=0,1,\ldots,K-1$. The samples are fresh and conditionally independent given the past, and the stochastic oracle satisfies Assumption~\ref{as:bg0_oracle}. 

%%%%%%%%%%%%%%%%%%%%%%%%%Theorem: NSGDM%%%%%%%%%%%%%%%%%%%%%%%
\begin{theorem}[Normalized momentum under $\mathsf{BG}$-0]
\label{thm:nsgdm_three_regimes}
Consider problem~\eqref{eq:op_problm} and the $\mathsf{NSGDM}$ update in~\eqref{eq:nsgdm_updates}. Suppose Assumptions~\ref{as:lower}
and~\ref{as:bg0_oracle} hold, and let $\eta = \frac{1}{(K+1)^{2/3}},$ $\gamma = \frac{\gamma_0}{(K+1)^{5/6}},$

where $\gamma_0>0$ is a tuning constant. Then the output $\widehat{x}$ generated by $\mathsf{NSGDM}$, satisfies the following.

\begin{enumerate}[label=(\roman*), leftmargin=*]

\item \textbf{$L_0$-smooth case.}
Suppose Assumption~\ref{as:l0_smooth} holds. Then, for every $\gamma_0>0$,
\[
\begin{aligned}
\E\|\nabla f(\widehat x)\| &\le
\left( \frac{2\Delta}{\gamma_0} +16L_0\gamma_0 +2B\gamma_0
\right)\frac{1}{(K+1)^{1/6}} + \frac{8G}{(K+1)^{1/3}} + \frac{2L_0\gamma_0}{(K+1)^{5/6}}.
\end{aligned}
\]
Consequently, $\E\|\nabla f(\widehat x)\| = \Ocal\!\left((K+1)^{-1/6}\right)$, giving $\Ocal(\varepsilon^{-6})$ SFO complexity for $\mathsf{NSGDM}$.

\item \textbf{$\Lcal_{\rm sym}^*(\alpha)$ generalized smooth case with $\alpha\in(0,1)$.}
Suppose Assumption~\ref{as:lsym_alpha} holds for some fixed $\alpha\in(0,1)$ with constants $L_0, L_1>0$. Define $\overline L_0 := L_0\left(2^{\frac{\alpha^2}{1-\alpha}}+1\right)$, $\overline L_1 := L_1\,2^{\frac{\alpha^2}{1-\alpha}}3^\alpha$, $\overline L_2 := L_1^{\frac{1}{1-\alpha}} 2^{\frac{\alpha^2}{1-\alpha}} 3^\alpha (1-\alpha)^{\frac{\alpha}{1-\alpha}}$, $C_\alpha := \overline L_2 + \frac{1-\alpha}{2} (2\alpha)^{\frac{\alpha}{1-\alpha}} \overline L_1^{\frac{1}{1-\alpha}}$, and $\widetilde C_\alpha := \overline L_0+\overline L_2 +(1-\alpha) (8\alpha)^{\frac{\alpha}{1-\alpha}} \overline L_1^{\frac{1}{1-\alpha}}.$
Then, for every $0 < \gamma_0 \leq 1$,
\[
\begin{aligned}
\E\|\nabla f(\widehat x)\|
\le
\left(
    \frac{2\Delta}{\gamma_0}
    +4\widetilde C_\alpha\gamma_0
    +2B\gamma_0
\right)
\frac{1}{(K+1)^{1/6}}
&+
\frac{8G}{(K+1)^{1/3}}
+
\frac{\overline L_0\gamma_0}{(K+1)^{5/6}}
\\
&\quad+
\frac{
    2C_\alpha
    \gamma_0^{\frac{1}{1-\alpha}}
}{
    (K+1)^{\frac{5}{6(1-\alpha)}}
}.
\end{aligned}
\]
Since $\alpha\in(0,1)$ is fixed, every decay exponent on the right-hand side is at least $1/6$. Therefore, $\E\|\nabla f(\widehat x)\| = \Ocal\left((K+1)^{-1/6}\right),$ giving $\Ocal(\varepsilon^{-6})$ SFO complexity for $\mathsf{NSGDM}$.
\item \textbf{$\Lcal_{\rm sym}^*(1)$ generalized smooth case.}
Suppose Assumption~\ref{as:lsym_alpha} holds with $\alpha=1$ and constants
$L_0>0$ and $L_1>0$. Choose
$
    0<\gamma_0\le \frac{1}{8L_1}.
$
Then
\[
\begin{aligned}
\E\|\nabla f(\widehat x)\|
&\le
\left(
    \frac{2\Delta}{\gamma_0}
    +16L_0\gamma_0
    +2B\gamma_0
\right)\frac{1}{(K+1)^{1/6}}
+
\frac{8G}{(K+1)^{1/3}}
+
\frac{2L_0\gamma_0}{(K+1)^{5/6}}
\\
&\quad
+
\frac{64L_1^2\gamma_0}{(K+1)^{1/6}}
\left[
    4\Delta
    +(16L_0+2B)\gamma_0^2
    +\frac{8\gamma_0 G}{(K+1)^{1/6}}
    +\frac{2L_0\gamma_0^2}{(K+1)^{2/3}}
\right].
\end{aligned}
\]
Consequently, $\E\|\nabla f(\widehat x)\| = \Ocal\!\left((K+1)^{-1/6}\right)$, giving $\Ocal(\varepsilon^{-6})$ SFO complexity for $\mathsf{NSGDM}$.
\end{enumerate}
\end{theorem}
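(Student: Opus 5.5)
We follow the Cutkosky--Mehta template for normalized momentum, with the $\mathsf{BG}$-0 variance controlled along the trajectory through the deterministic bound $\|x^k-x^0\|\le k\gamma$.

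\textbf{Step 1: one-step descent for normalized steps.} Let $\epsilon^k:=v^k-\nabla f(x^k)$. For $L_0$-smooth $f$ we use the standard inequality
\[
f(x^{k+1})\le f(x^k)-\gamma\|\nabla f(x^k)\|+2\gamma\|\epsilon^k\|+\tfrac{L_0\gamma^2}{2}.
\]
It follows from $\langle\nabla f(x^k),v^k/\|v^k\|\rangle\ge\|\nabla f(x^k)\|-2\|\epsilon^k\|$. Summing over $k=0,\dots,K$ and dividing by $\gamma(K+1)$ gives
\[
\frac{1}{K+1}\sum_k\E\|\nabla f(x^k)\|\le\frac{\Delta}{\gamma(K+1)}+\frac{2}{K+1}\sum_k\E\|\epsilon^k\|+\frac{L_0\gamma}{2}.
\]

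\textbf{Step 2: momentum error with distance-dependent noise.} Write $S(x,y):=\nabla f(x)-\nabla f(y)$ and $n^k:=\nabla f(x^k;\xi^k)-\nabla f(x^k)$. Unrolling the recursion
\[
\epsilon^{k+1}=(1-\eta)\epsilon^k+(1-\eta)S(x^k,x^{k+1})+\eta n^{k+1}
\]
yields
\[
\epsilon^{k}=(1-\eta)^k n^0+\sum_{t<k}(1-\eta)^{k-t}S(x^t,x^{t+1})+\eta\sum_{t=1}^k(1-\eta)^{k-t}n^t .
\]
\begin{itemize}
\item The drift term is bounded by $L_0\gamma/\eta$.
\item The noise terms form a martingale-difference sum. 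By Jensen's inequality and orthogonality, its norm is controlled by $\eta\big(\sum_t(1-\eta)^{2(k-t)}(B^2t^2\gamma^2+G^2)\big)^{1/2}\le\sqrt{\eta}\,(G+Bk\gamma)$, using $\|x^t-x^0\|\le t\gamma\le(K+1)\gamma$.
\item The initial term $(1-\eta)^k\|n^0\|$ has $\E\le (1-\eta)^k G$; averaging it over $k$ gives $G/(\eta(K+1))$.
\end{itemize}
Altogether,
\[
\frac{1}{K+1}\sum_k\E\|\epsilon^k\|\lesssim\frac{L_0\gamma}{\eta}+\sqrt\eta\,G+\sqrt\eta\,B(K+1)\gamma+\frac{G}{\eta(K+1)}.
\]

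\textbf{Step 3: plug in the parameters (case (i)).} With $\eta=(K+1)^{-2/3}$ and $\gamma=\gamma_0(K+1)^{-5/6}$ the terms become:
\begin{itemize}
\item $\Delta/(\gamma(K+1))=\frac{\Delta}{\gamma_0}(K+1)^{-1/6}$;
\item $L_0\gamma/\eta=L_0\gamma_0(K+1)^{-1/6}$;
\item $\sqrt\eta\,B(K+1)\gamma=B\gamma_0(K+1)^{-1/3+1/6}=B\gamma_0(K+1)^{-1/6}$;
\item $\sqrt\eta\,G=G(K+1)^{-1/3}$, and $G/(\eta(K+1))=G(K+1)^{-1/3}$;
\item $L_0\gamma$ is of order $(K+1)^{-5/6}$.
\end{itemize}
This reproduces the claimed bound up to the stated constants.

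\textbf{Step 4: generalized smoothness, case (ii).} For $\alpha\in(0,1)$ we invoke the $\Lcal_{\rm sym}^*(\alpha)$ lemmas of \citet{chen2023generalized}. For $\|y-x\|\le\gamma\le 1$ they give
\[
\|\nabla f(y)-\nabla f(x)\|\le(\overline L_0+\overline L_1\|\nabla f(x)\|^\alpha+\overline L_2\|y-x\|^{\alpha/(1-\alpha)})\|y-x\|,
\]
and a corresponding descent lemma. We then use Young's inequality,
\[
\overline L_1\gamma\|\nabla f\|^\alpha\le\tfrac{1}{4}\|\nabla f\|+c_\alpha\overline L_1^{1/(1-\alpha)}\gamma^{1/(1-\alpha)}\cdot(\text{scaling}),
\]
to absorb the gradient-dependent part into the $-\gamma\|\nabla f(x^k)\|$ descent term and into the drift $S$ of Step 2. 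In the drift, $\frac{1}{\eta}$ times $\overline L_1\gamma\|\nabla f(x^t)\|^\alpha$ summed with exponential weights is again handled by Young's inequality with weight chosen so that the total coefficient on $\frac1{K+1}\sum\|\nabla f\|$ stays below $1/2$. This produces $\widetilde C_\alpha$ and $C_\alpha$, and the extra term is of order $\gamma^{1/(1-\alpha)}$, which decays faster than $(K+1)^{-1/6}$.

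\textbf{Step 5: case (iii), $\alpha=1$ (the main obstacle).} Here Young's inequality is unavailable, so the drift contribution $\frac{1}{K+1}\sum_k\frac{L_1\gamma}{\eta}\|\nabla f(x^k)\|$ cannot be absorbed directly. Its coefficient $L_1\gamma_0(K+1)^{-1/6}$ is small but not uniformly below $1/2$ without a $K$-dependent restriction. We would proceed as follows:
\begin{itemize}
\item Use the $\Lcal_{\rm sym}^*(1)$ local bound (valid for $\gamma\le 1/(8L_1)$): $\|\nabla f(y)-\nabla f(x)\|\le 2(L_0+L_1\|\nabla f(x)\|)\|y-x\|$, together with its descent lemma.
\item Run the $L_0$-type argument once to get a crude bound on $\frac{1}{K+1}\sum\E\|\nabla f(x^k)\|$.
\item Substitute this bound into the offending $L_1$ term. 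This bootstrapping explains the bracketed factor $[4\Delta+\dots]$ multiplied by $64L_1^2\gamma_0(K+1)^{-1/6}$.
\end{itemize}
Getting this second-pass substitution and the constant bookkeeping consistent is where we expect the real work.
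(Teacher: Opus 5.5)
Your cases (i) and (ii) follow the paper's route. That route is the normalized descent step via $\langle\nabla f(x^k),d^k\rangle\ge\|\nabla f(x^k)\|-2\|\epsilon^k\|$, the unrolled momentum error, and the martingale bound $\sqrt\eta\,(G+Bk\gamma)$. In case (i) your sharper descent constants give a bound termwise below the stated one. In case (ii), note that the paper's Young weight in the drift is a constant, $\rho=(8\alpha\overline L_1)^{-1}$. That remainder is therefore linear in $\gamma$, which is what places $\widetilde C_\alpha$ in the $(K+1)^{-1/6}$ term. The feedback $\frac{\gamma^2}{4\eta}\sum_k g_k$ is then absorbed using $\gamma\le\eta$.

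The genuine gap is Step 5. Its premise is false: the feedback coefficient $L_1\gamma/\eta=L_1\gamma_0(K+1)^{-1/6}$ is at most $L_1\gamma_0\le 1/8$ for every $K\ge 0$, so no $K$-dependent restriction is needed. The remedy you propose is also circular. A function in $\Lcal^*_{\rm sym}(1)$ is not $L_0$-smooth, so no ``$L_0$-type argument'' yields a crude bound on $\frac{1}{K+1}\sum_k\E\|\nabla f(x^k)\|$ without first handling the same term $\frac{L_1\gamma}{\eta}\sum_k g_k$. There is no a priori gradient bound to fall back on: the only trajectory-based one grows like $e^{L_1 k\gamma}$. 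If instead you obtain the crude bound by absorbing that term, the second pass is redundant. Your reading of the bracket is also off. Substituting a gradient bound would produce terms like $L_1\gamma_0(K+1)^{-1/6}\cdot\frac{\Delta}{\gamma_0}(K+1)^{-1/6}$, with a single power of $L_1$ and no $O(1)$ $\Delta$ term. The stated bracket is instead a bound on the average \emph{suboptimality} $S_\Delta(K)/T$.

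The paper's missing ingredient is the gradient-growth inequality $\|\nabla f(x)\|\le 8L_1(f(x)-f^{\inf})+L_0/L_1$. It turns the drift into $\E\|D_t\|\le 4L_0\gamma+16L_1^2\gamma\Delta^t$, which is the source of $L_1^2$, so that $S_b(K)\le\cdots+\frac{16L_1^2\gamma}{\eta}S_\Delta(K)$. Summing the descent recursion with the gradient term dropped gives $S_\Delta(K)\le (K+2)\Delta+2\gamma T S_b(K)+\cdots$. The loop closes because $\frac{32L_1^2\gamma^2T}{\eta}=32L_1^2\gamma_0^2\le\frac12$, which holds exactly when $\gamma_0\le\frac{1}{8L_1}$. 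Inserting the resulting bound on $S_\Delta(K)/T$ into the final estimate produces $64L_1^2\gamma_0(K+1)^{-1/6}[\cdots]$.

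Alternatively, the step you dismissed works directly, provided you keep $e^{L_1\gamma}\le e^{1/8}$ rather than $2$. Absorb $\frac{e^{1/8}}{2}L_1\gamma^2\|\nabla f(x^k)\|$ in the descent step, and absorb the feedback $\frac{2e^{1/8}L_1\gamma^2}{\eta}\sum_k g_k\le\frac{e^{1/8}}{4}\gamma\sum_k g_k$. The net coefficient $1-\frac{5e^{1/8}}{16}$ exceeds $\frac12$, and you obtain $\bigl(\frac{2\Delta}{\gamma_0}+4e^{1/8}L_0\gamma_0+2B\gamma_0\bigr)(K+1)^{-1/6}+8G(K+1)^{-1/3}+e^{1/8}L_0\gamma_0(K+1)^{-5/6}$. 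This is already below the stated bound, since the $L_1^2$ term there is nonnegative. Either way, Step 5 as written is not a proof.
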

\textbf{Discussion on Theorem~\ref{thm:nsgdm_three_regimes}.}
We highlight several aspects of this result; the full proof is in Appendix~\ref{apdx:nsgdm_proofs}.

\noindent \textbullet\  \textit{Generalized smoothness is rate-neutral for $\mathsf{NSGDM}$.}
Under $\mathsf{BG}$-0 noise, the oracle complexity exponent $\Ocal(\varepsilon^{-6})$ is identical across all three regimes: standard $L_0$-smoothness, $\Lcal_{\rm sym}^*(1)$ generalized smoothness, and $\Lcal_{\rm sym}^*(\alpha)$ generalized smoothness for any fixed $\alpha\in(0,1)$. The generalized smoothness parameters affect only the leading constants, not the $\varepsilon$-exponent. This is consistent with the principle articulated by~\citet{chen2023generalized} for the deterministic setting, that generalized smooth nonconvex optimization is ``as efficient as'' standard smooth nonconvex optimization, and extends it to the stochastic momentum regime under $\mathsf{BG}$-0 noise.

\noindent \textbullet\  \textit{Novelty of the $\alpha\in(0,1)$ analysis.}
To our knowledge, the $\Lcal_{\rm sym}^*(\alpha)$ analysis for $\mathsf{NSGDM}$ with $\alpha\in(0,1)$ is new even under bounded variance. Prior stochastic results under $\alpha$-symmetric generalized
smoothness are limited to the two-loop SPIDER estimator under $\E\Lcal_{\rm sym}^* (\alpha)$~\cite{chen2023generalized}, which controls variance through periodic batch corrections. The single-sample momentum recursion requires us to control the estimator error over time. After unrolling the recursion, this error separates into gradient-drift terms and stochastic noise terms. The key technical challenge is that the drift bound under $\Lcal_{\rm sym}^*(\alpha)$ yields terms of the form $\overline L_1 \gamma\|\nabla f(x^k)\|^\alpha$, which, unlike the $\alpha=1$ case cannot be converted into suboptimality gaps $(f(x^k)-f^{\inf})$ via the gradient-growth
inequality~\cite{chen2023generalized,khirirat2026error}: when $\alpha=1$, the bound $\|\nabla f(x)\|\le 8L_1(f(x)-f^{\inf})+L_0/L_1$ directly absorbs $\|\nabla f(x^k)\|$ into $(f(x^k)-f^{\inf})$, but for $\alpha\in(0,1)$ the fractional power breaks this conversion. Instead, we need to utilize Young's inequality, giving us the free parameter to absorb the $\|\nabla f(x^k)\|^\alpha$ contribution into a fraction of the $\|\nabla f(x^k)\|$ descent budget, at the cost of producing remainder terms that scale as $\gamma^{1/(1-\alpha)}$. These remainders vanish faster than $\gamma$ and do not affect the dominant rate.

\noindent \textbullet\  \textit{Recovery of known rates.}
When $B=0$, the $\mathsf{BG}$-0 condition reduces to bounded variance. In this case, choosing $\eta=(K+1)^{-1/2}$ and $\gamma=\gamma_0(K+1)^{-3/4}$ recovers the $\Ocal(\varepsilon^{-4})$ rate of~\citet{cutkosky2020momentum} for standard smoothness and~\citet{khirirat2026error} for $\Lcal_{\rm sym}^*(1)$, and establishes the same rate for $\Lcal_{\rm sym}^*(\alpha)$ with $\alpha\in(0,1)$. Additionally when $G=0$ as well, the oracle is deterministic, and with $\eta=1$ recovers the $\Ocal(\varepsilon^{-2})$ rate of~\citet{chen2023generalized} for normalized GD.

\noindent \textbullet\  \textit{Schedule comparison.} Under bounded variance, normalized momentum uses the standard schedule $\eta=(K+1)^{-1/2}$ and $\gamma=\gamma_0(K+1)^{-3/4},$ leading to the rate $(K+1)^{-1/4}$, or equivalently
$\Ocal(\varepsilon^{-4})$ SFO complexity~\cite{cutkosky2020momentum,khirirat2026error}. The faster decay of $\gamma=\gamma_0(K+1)^{-5/6}$ under $\mathsf{BG}$-0 reflects the growing noise scale $G+B k\gamma$. Likewise, $\eta=(K+1)^{-2/3}$ is chosen smaller to control the stochastic error term $\sqrt{\eta}(G+BK\gamma)$. This changes the rate from the bounded variance rate $(K+1)^{-1/4}$ to the $\mathsf{BG}$-0 rate $(K+1)^{-1/6}$, matching the lower-bound degradation from $\Omega(\varepsilon^{-4})$ to $\Omega(\varepsilon^{-6})$~\cite{fazla2026lower}.
%%%%%%%%%%%%%%%%%%%%%Theorem: NSGDM End%%%%%%%%%%%%%%%%%%%%%%
%%%%%%%%%%%%%%%%%%%%%%%%%%%%%%%%%%%%%%%%%%%%%%%%%%%%%%%%%%%%%
%%%%%%%%%%%%%%%%%%%%%%%%%%%%%%%%%%%%%%%%%%%%%%%%%%%%%%%%%%%%%
\section{Normalized STORM under \texorpdfstring{$\mathsf{BG}$}{BG}-0}
\label{sec:NSTORM}
We next study the normalized STORM recursion under $\mathsf{BG}$-0 noise. Unlike $\mathsf{NSGDM}$, which averages stochastic gradients directly, $\mathsf{NSTORM}$ uses a recursive estimator based on the difference of two stochastic gradients evaluated with the same fresh sample.

Fix $x^0\in\R^d$, a stepsize $\gamma>0$, a STORM parameter $\eta\in(0,1]$, and a horizon $K\ge0$. Let $N_{\rm init}\ge1$ be an initialization batch size, and initialize $v^0 = \frac1{N_{\rm init}} \sum_{i=1}^{N_{\rm init}} \nabla f(x^0;\xi_i^{\rm init}),$ where the samples $\{\xi_i^{\rm init}\}_{i=1}^{N_{\rm init}}$ are independent. For $k=0,1,\ldots,K$, $\mathsf{NSTORM}$ performs
\begin{equation}
\label{eq:nstorm_updates}
\mathsf{NSTORM:}\quad
x^{k+1} = x^k-\gamma\frac{v^k}{\|v^k\|};
\ \ 
v^{k+1} = \nabla f(x^{k+1};\xi^{k+1}) + (1-\eta) \left(v^k-\nabla f(x^k;\xi^{k+1})\right).
\end{equation}
The sample $\xi^{k+1}$ is fresh and conditionally independent given the past, and is evaluated at both $x^k$ and $x^{k+1}$. Thus, each transition uses two stochastic gradient evaluations. The initialization batch only controls the initial error $\E\|v^0-\nabla f(x^0)\|$, and $N_{\rm init}$ is specified for the smoothness regime.

\begin{theorem}[Normalized STORM under $\mathsf{BG}$-0]
\label{thm:nstorm_three_regimes}
Consider problem~\eqref{eq:op_problm} and the $\mathsf{NSTORM}$ update in~\eqref{eq:nstorm_updates}. Suppose Assumptions~\ref{as:lower} and \ref{as:bg0_oracle} hold. Then the output $\widehat{x}$ generated by $\mathsf{NSTORM}$, satisfies
\begin{enumerate}[label=(\roman*), leftmargin=*]
\item \textbf{Mean-square smoothness case.} 
Suppose Assumption~\ref{as:mss} holds with constant $L>0$. Choose $N_{\rm init} := \max\left\{ 1,\left\lceil G^2 (K+1)^{1/2}\right\rceil \right\}$, $\eta=(K+1)^{-1}$, and $\gamma=\gamma_0(K+1)^{-3/4},$ where $\gamma_0>0$ is fixed. Then
\[
\begin{aligned}
\E\|\nabla f(\widehat x)\| & \le
\left( \frac{\Delta}{\gamma_0} +2(1 + L\gamma_0 + B\gamma_0)
\right)(K+1)^{-\tfrac{1}{4} }+ 2G(K+1)^{-\tfrac{1}{2} } + \frac{L\gamma_0}{2}(K+1)^{-\tfrac{3}{4} }.
\end{aligned}
\]
Consequently, $\E\|\nabla f(\widehat x)\| = \Ocal((K+1)^{-1/4})$, giving $\Ocal(\varepsilon^{-4})$ SFO complexity for $\mathsf{NSTORM}$.
\item \textbf{$\E\Lcal^*_{\rm sym}(\alpha)$ generalized smoothness case with $\alpha \in (0,1)$.} 
Suppose Assumption~\ref{as:elsym_alpha}, holds for $\alpha \in (0,1)$ with $L_0, L_1 >0$. Define $K_0:=2^{\frac{2-\alpha}{1-\alpha}}L_0,$ $K_1:=2^{\frac{2-\alpha}{1-\alpha}}L_1,$ $K_2:=(5L_1)^{\frac{1}{1-\alpha}},$ and $T:=(K+1)$. Choose $N_{\rm init} := \max\left\{ 1,\left\lceil G^2T^{\frac{2(1-\alpha)}{4+\alpha}}\right\rceil \right\}$, $\gamma=\gamma_0T^{-\frac{3+\alpha}{4+\alpha}},$ $\eta=\eta_0T^{-\frac{4}{4+\alpha}},$ $\lambda=\lambda_0T^{-\frac{1-\alpha}{4+\alpha}},$ where $\eta_0\in(0,1]$, $\gamma_0>0$, and $\lambda_0>0$ are constants satisfying $2^{\tfrac{\alpha}{2}}K_1\lambda_0\gamma_0\le1, $ $2^{\tfrac{6+\alpha}{2}}K_1\lambda_0\frac{\gamma_0}{\eta_0}\le 1.$ Then 
\[
\begin{aligned}
&\E\|\nabla f(\widehat x)\|
\le \Bigg[ \frac{4\Delta}{\gamma_0} + \frac{8}{\eta_0} +
    \frac{8(1-\alpha)\alpha^{\frac{\alpha}{1-\alpha}}2^{\alpha/2}K_1\gamma_0\lambda_0^{-\frac{\alpha}{1-\alpha}}}{\sqrt{\eta_0}}
    +
    \frac{
        8\cdot 2^{\alpha/2}K_1B^\alpha\gamma_0^{1+\alpha}
    }{\sqrt{\eta_0}}
\\
&
    +
    8\sqrt{\eta_0}B\gamma_0
\Bigg]T^{-\frac{1}{4+\alpha}} +
\left[
    \frac{8K_0\gamma_0}{\sqrt{\eta_0}}
    +
    \frac{8\cdot 2^{\alpha/2}K_1G^\alpha\gamma_0}{\sqrt{\eta_0}}
\right]
T^{-\frac{1+\alpha}{4+\alpha}}
+
8\sqrt{\eta_0}GT^{-\frac{2}{4+\alpha}}
\\
&  +
\frac{
    8K_2\gamma_0^{\frac{1}{1-\alpha}}
}{\sqrt{\eta_0}}
T^{-\frac{1+3\alpha}{(1-\alpha)(4+\alpha)}}
+
2K_0\gamma_0T^{-\frac{3+\alpha}{4+\alpha}}
+
2(1-\alpha)\alpha^{\frac{\alpha}{1-\alpha}}
2^{\alpha/2}K_1\gamma_0
\lambda_0^{-\frac{\alpha}{1-\alpha}}
T^{-\frac{3}{4+\alpha}}
\\
&
+
4K_2\gamma_0^{\frac{1}{1-\alpha}}
T^{-\frac{3+\alpha}{(1-\alpha)(4+\alpha)}}
+
2^{1+\alpha/2}K_1G^\alpha\gamma_0
T^{-\frac{3+\alpha}{4+\alpha}}
+
2^{1+\alpha/2}K_1B^\alpha\gamma_0^{1+\alpha}
T^{-\frac{3}{4+\alpha}}.
\end{aligned}
\]
Since $\alpha\in(0,1)$ is fixed, every decay exponent on the right-hand side is at least $\frac{1}{(4+\alpha)}$.
Consequently, $\E\|\nabla f(\widehat x)\| = \Ocal\left(T^{-1/(4+\alpha)}\right)$, giving $\Ocal(\varepsilon^{-(4+\alpha)})$ SFO complexity for $\mathsf{NSTORM}$.
\item \textbf{$\E\Lcal^*_{\rm sym}(1)$ generalized smoothness case.} 
Suppose Assumption~\ref{as:elsym_alpha}, holds for $\alpha=1$ with $L_0, L_1 >0$. Use one-sample initialization, so that $ b_0:=\E\|v^0-\nabla f(x^0)\|\le G.$ Now, choose $\eta=(K+1)^{-4/5}$ and $\gamma=\gamma_0(K+1)^{-4/5}$, where $0<\gamma_0\le \frac{1}{16\sqrt{2e^{3/4}}L_1}.$ Then
\begin{equation*}
    \begin{aligned}
    \E\|\nabla f(\widehat x)\| &\le
    \left( \frac{4\Delta}{\gamma_0} + 8b_0 + 8B\gamma_0
    + 16\sqrt{2e^{3/4}}\,L_1B\gamma_0^2 \right)(K+1)^{-1/5}
    \\
    &+
    \left( 8\sqrt{2e^{3/4}}\,\gamma_0(L_0+2L_1G) + 8G \right)(K+1)^{-2/5}
    \\
    &+
    8\sqrt2\,L_1B\gamma_0^2(K+1)^{-3/5} + \left( 4\sqrt2L_0\gamma_0
    +8\sqrt2L_1G\gamma_0 \right)(K+1)^{-4/5}.
    \end{aligned}
\end{equation*}
Consequently, $\E\|\nabla f(\widehat x)\| = \Ocal((K+1)^{-1/5})$, giving $\Ocal(\varepsilon^{-5})$ SFO complexity for $\mathsf{NSTORM}$.
\end{enumerate}
\end{theorem}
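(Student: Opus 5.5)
We will follow the standard normalized-method template of Cutkosky--Mehta, adapted to the recursive STORM estimator. The first ingredient is the one-step descent inequality for normalized steps. With $\epsilon^k:=v^k-\nabla f(x^k)$ and $\|x^{k+1}-x^k\|=\gamma$, we will show
\[
f(x^{k+1})\le f(x^k)-\gamma\|\nabla f(x^k)\|+2\gamma\|\epsilon^k\|+\tfrac{\gamma^2}{2}\mathcal{L}_k ,
\]
where $\mathcal{L}_k$ is the local smoothness constant along the segment. In case (i), $\mathcal{L}_k=L$, since MSS implies $L$-smoothness of $f$ by Jensen. In cases (ii)--(iii) we use the $\Lcal^*_{\rm sym}(\alpha)$ descent lemma of \citet{chen2023generalized}, which follows from the expected version by Jensen. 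It gives $\mathcal{L}_k\lesssim K_0+K_1\|\nabla f(x^k)\|^\alpha+K_2\gamma^{\alpha/(1-\alpha)}$ for $\alpha<1$, and an $e^{\cdot}$-type factor for $\alpha=1$. Telescoping over $k=0,\dots,K$ and dividing by $\gamma(K+1)$ will give
\[
\frac{1}{K+1}\sum_k\E\|\nabla f(x^k)\|\le \frac{\Delta}{\gamma(K+1)}+\frac{2}{K+1}\sum_k\E\|\epsilon^k\|+\text{smoothness terms}.
\]

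\textbf{Error recursion.} The STORM update gives
\[
\epsilon^{k+1}=(1-\eta)\epsilon^k+\eta\bigl(\nabla f(x^{k+1};\xi^{k+1})-\nabla f(x^{k+1})\bigr)+(1-\eta)Z_{k+1},
\]
where $Z_{k+1}=\nabla f(x^{k+1};\xi^{k+1})-\nabla f(x^k;\xi^{k+1})-\nabla f(x^{k+1})+\nabla f(x^k)$ has conditional mean zero. The two noise terms form martingale differences, so we bound second moments. Unrolling,
\[
\E\|\epsilon^k\|^2\le(1-\eta)^{2k}\E\|\epsilon^0\|^2+2\sum_{t}(1-\eta)^{2(k-t)}\bigl[\eta^2\sigma_t^2+\E\|Z_t\|^2\bigr].
\]
Normalization supplies the key inputs: $\|x^t-x^0\|\le t\gamma$, so $\sigma_t^2\le (G+Bt\gamma)^2$, and $\E\|Z_t\|^2\le L^2\gamma^2$ in case (i).

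Summing the geometric weights and applying Jensen gives
\[
\E\|\epsilon^k\|\lesssim b_0+\sqrt{\eta}\,(G+BK\gamma)+\frac{L\gamma}{\sqrt\eta}.
\]
With $b_0\le G/\sqrt{N_{\rm init}}$, $\eta=T^{-1}$ and $\gamma=\gamma_0T^{-3/4}$, each of the terms $\Delta/(\gamma T)$, $\sqrt\eta BT\gamma$ and $L\gamma/\sqrt\eta$ is of order $T^{-1/4}$. We also keep $b_0\lesssim T^{-1/4}$ by the choice of $N_{\rm init}$, though strictly we need to avoid crude bounding of the $(1-\eta)^k b_0$ term, since $\eta=1/T$ means that term does not decay. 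That yields (i).

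\textbf{Generalized smoothness.} For (ii) and (iii), $\E\|Z_t\|^2$ is bounded by $\gamma^2\,\E[(L_0+L_1\max_\theta\|\nabla f(x_\theta;\xi)\|^\alpha)^2]$. We control this by bounding the sample-gradient norm along the short segment, using a Grönwall-type bound on the segment of length $\gamma$ together with the triangle inequality. This gives
\[
\lesssim \gamma^2\bigl(K_0+K_1\|\nabla f(x^t)\|^\alpha+K_1(G+Bt\gamma)^\alpha+K_2\gamma^{\alpha/(1-\alpha)}\bigr)^2,
\]
where $(G+Bt\gamma)^\alpha$ enters through $\E\|\nabla f(x;\xi)-\nabla f(x)\|^{2\alpha}\le(\text{variance})^\alpha$.

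This is the main obstacle: the gradient-dependent term $\gamma K_1\|\nabla f(x^t)\|^\alpha/\sqrt\eta$ feeds back into the averaged-gradient bound. For $\alpha<1$ we will apply Young's inequality with parameter $\lambda$, in the form $\|\nabla f\|^\alpha\le\lambda\|\nabla f\|+(1-\alpha)\alpha^{\alpha/(1-\alpha)}\lambda^{-\alpha/(1-\alpha)}$. The linear part is absorbed into the left-hand side under the stated constraints $2^{(6+\alpha)/2}K_1\lambda_0\gamma_0/\eta_0\le1$; this is why $\lambda$ must scale as $T^{-(1-\alpha)/(4+\alpha)}$. The constant part, together with $K_1B^\alpha(T\gamma)^\alpha\gamma/\sqrt\eta$, then forces the exponent balance that yields $\gamma\propto T^{-(3+\alpha)/(4+\alpha)}$, $\eta\propto T^{-4/(4+\alpha)}$ and the rate $T^{-1/(4+\alpha)}$.

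For $\alpha=1$ we take $\lambda$ constant. Equivalently, we bound $\gamma L_1\|\nabla f\|/\sqrt\eta$ directly and absorb it using $\gamma_0\le 1/(16\sqrt{2e^{3/4}}L_1)$ with $\gamma/\sqrt\eta=\gamma_0T^{-2/5}\le\gamma_0$. Balancing $\Delta/(\gamma T)$, $\sqrt\eta BT\gamma$ and $L_1BT\gamma^2/\sqrt\eta$ gives $\eta=\gamma/\gamma_0=T^{-4/5}$ and the rate $T^{-1/5}$. The delicate bookkeeping is making the absorbed coefficient at most $1/2$ uniformly; we will verify it last, after collecting all terms.
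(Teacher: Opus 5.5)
Your plan for (i) goes through. There $\E\|Z_{k+1}\|^2\le L^2\gamma^2$ is a deterministic bound, so the second-moment STORM recursion plus Jensen gives $\E\|\epsilon^k\|\lesssim b_0+\sqrt\eta\,(G+BK\gamma)+L\gamma/\sqrt\eta$, and the schedule yields $T^{-1/4}$. The $(1-\eta)^kb_0$ term needs no special care, since $b_0\le G/\sqrt{N_{\rm init}}\le T^{-1/4}$ already.

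The gap is in (ii) and (iii), where the transition noise only satisfies $(\E[\|Z_{t+1}\|^2\mid\cF_t])^{1/2}\le\gamma\,(a+h\|\nabla f(x^t)\|)$ with $h>0$. Unrolling $\E\|\epsilon^k\|^2$ then puts a term of order $\gamma^2K_1^2\E\|\nabla f(x^t)\|^{2\alpha}$ under the square root (after Young, $\gamma^2h^2\E\|\nabla f(x^t)\|^2$). The normalized descent inequality controls only $\sum_t\E\|\nabla f(x^t)\|$. For $\alpha>1/2$, in particular $\alpha=1$, this higher moment is not controlled at all. For $\alpha\le1/2$ you still get a square root of a weighted sum of $(\E\|\nabla f(x^t)\|)^{2\alpha}$, not a linear feedback.

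Your quantity $\gamma K_1\|\nabla f(x^t)\|^\alpha/\sqrt\eta$ comes from pulling a random, time-varying gradient out of $\bigl(\sum_t(1-\eta)^{2(k-t)}\E[\cdot]\bigr)^{1/2}$, which is not legitimate. It is also inconsistent with the constraints you quote. The condition $2^{(6+\alpha)/2}K_1\lambda_0\gamma_0/\eta_0\le1$ bounds $\lambda\gamma/\eta$, not $\lambda\gamma/\sqrt\eta$. The condition on $\gamma_0$ in (iii) is $16\sqrt{2e^{3/4}}L_1\gamma/\eta\le1$ in disguise, since $\gamma/\eta=\gamma_0$ there. The scaling $\lambda\propto T^{-(1-\alpha)/(4+\alpha)}$ is precisely $\lambda\propto\eta/\gamma$. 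A $1/\sqrt\eta$ feedback would force none of these.

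The missing idea is the peeling step of Lemma~\ref{lem:app_nstorm_peeling}. Set $A=\gamma a$, $H_t=\gamma h\|\nabla f(x^t)\|$, and take the $\cF_t$-measurable weight $\omega_t=A/(A+H_t)$. Split $Z_{t+1}=\omega_tZ_{t+1}+(1-\omega_t)Z_{t+1}$.
\begin{itemize}
\item The first piece has conditional second moment at most $A^2$, so martingale orthogonality gives $\gamma a/\sqrt\eta$.
\item The second has conditional root-second-moment at most $H_t$, so the triangle inequality gives the first-moment term $\gamma h\sum_t(1-\eta)^{k-1-t}\E\|\nabla f(x^t)\|$.
\end{itemize}
Summing over $k$, the error sum picks up $(h\gamma/\eta)\sum_k\E\|\nabla f(x^k)\|$, with $h=2^{\alpha/2}K_1\lambda$, resp.\ $h=2\sqrt{2e^{3/4}}L_1$. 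This is exactly what the $\gamma/\eta$ conditions absorb; the $1/\eta$ instead of $1/\sqrt\eta$ is the price of using the triangle inequality on the gradient-dependent part.

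A smaller point: $\E\Lcal^*_{\rm sym}(\alpha)$ does not yield the deterministic descent lemma with $\|\nabla f(x^k)\|^\alpha$ by Jensen. The reduction of \citet{chen2023generalized} involves the sample moment $\E_\xi\|\nabla f(x^k;\xi)\|^\alpha\le2^{\alpha/2}(\|\nabla f(x^k)\|^\alpha+q_k^\alpha)$, where $q_k=(B^2\|x^k-x^0\|^2+G^2)^{1/2}$. Your descent inequality therefore needs an extra $\frac{\gamma^2}{2}2^{\alpha/2}K_1\E[q_k^\alpha]$ term (resp.\ $2\sqrt2L_1\gamma^2\E[q_k]$). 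It is lower order, but it produces the last two terms in (ii) and the $8\sqrt2L_1B\gamma_0^2T^{-3/5}$ and $8\sqrt2L_1G\gamma_0T^{-4/5}$ terms in (iii).
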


\textbf{Discussion on Theorem~\ref{thm:nstorm_three_regimes}.} We now discuss several aspects of Theorem~\ref{thm:nstorm_three_regimes}. The complete details of the proof, including the proofs on bounded variance and deterministic case, are present in Appendix~\ref{apdx:nstorm_proofs}.

\noindent \textbullet\  \textit{Optimality under mean-square smoothness.}
Under Assumption~\ref{as:mss}, i.e., $\mathsf{MSS}$ and sharp initialization, $\mathsf{NSTORM}$ achieves $\Ocal(\varepsilon^{-4})$ SFO complexity which is minimax optimal in this setting~\cite{fazla2026lower}. The key structural reason is that $\mathsf{MSS}$ gives $\E_\xi\|\nabla f(y;\xi)-\nabla f(x;\xi)\|^2 \le L^2\|y-x\|^2,$ with no dependence on gradient magnitudes. Hence the STORM centered-noise term satisfies $\left(\E[\|Z_{k+1}\|^2\mid\cF_k]\right)^{1/2}\le L\gamma,$ where $Z_{k+1} := \left( \nabla f(x^{k+1};\xi^{k+1}) - \nabla f(x^k;\xi^{k+1}) \right) - \left( \nabla f(x^{k+1}) - \nabla f(x^k) \right).$ So, the estimator recursion is decoupled from the gradient sequence. Furthermore, sharp initialization removes the initial-error bottleneck $b_0/(\eta T)$, allowing the schedule $\eta=(K+1)^{-1}$ and $\gamma=\gamma_0(K+1)^{-3/4}$ and yielding the rate $(K+1)^{-1/4}$.

\noindent \textbullet\  \textit{Gradient feedback and the $\alpha$-dependent price under expected generalized smoothness.}
Under $\E\Lcal_{\rm sym}^*(\alpha)$, the STORM centered-difference bound on $Z_{k+1}$ depends on the sample-gradient moment $\E_\xi\|\nabla f(x^k;\xi)\|^\alpha$ via the expected reduction of~\citet{chen2023generalized} (see Proposition 4, item (1)--(2)). Under bounded variance, this moment is controlled by a fixed noise term, but under $\mathsf{BG}$-0 it couples a gradient-dependent term with trajectory-dependent variance, the main reason the analysis of~\citet{chen2023generalized} cannot be applied directly. To handle this coupling, we use Young's inequality to linearize the fractional-gradient term: $\|\nabla f(x^k)\|^\alpha \le \lambda\|\nabla f(x^k)\| + c_\alpha\lambda^{-\alpha/(1-\alpha)},$ yielding an estimator bound of the form $(\E[\|Z_{k+1}\|^2\mid\cF_k])^{1/2} \le \gamma(a+h\|\nabla f(x^k)\|)$ with $h \propto \lambda$. The gradient-dependent part creates a feedback loop: large gradients inflate the estimator error, which in turn weakens the descent guarantee. Our proof separates the estimator error into a gradient-independent part, controlled by the $\mathsf{NSTORM}$ estimator recursion, and the gradient-dependent part, which is absorbed into the descent inequality via the stepsize conditions. Choosing $\lambda$ to decay with the horizon weakens this feedback and leads to the $\Ocal(\varepsilon^{-(4+\alpha)})$ rate for $\alpha\in(0,1)$. At $\alpha=1$, the term $\|\nabla f(x^k)\|^\alpha = \|\nabla f(x^k)\|$ is already linear, so there is no fractional power to absorb via Young's inequality; the feedback coefficient $h$ is then a fixed constant, yielding the slower $\Ocal(\varepsilon^{-5})$ rate.

\noindent \textbullet\ \textit{Sharp initialization and its dependence on the smoothness regime.} The initialization batch size $N_{\rm init}$ is determined by a single mechanism: whether the schedule $\eta$ suppresses the initial-error contribution $b_0/(\eta T)$ at the target convergence rate. Under $\mathsf{MSS}$, the clean STORM recursion permits $\eta = T^{-1}$, which yields $b_0/(\eta T) = \Ocal(b_0)$, a $T$-independent term, forcing $N_{\rm init} = \Ocal(T^{\nicefrac{1}{2}})$ to suppress $b_0$ to $\Ocal(T^{\nicefrac{-1}{4}})$. Under $\E\Lcal_{\rm sym}^*(\alpha)$ with $\alpha \in (0,1)$, the gradient-dependent feedback requires $\eta = \eta_0 T^{\nicefrac{-4}{(4+\alpha)}}$, giving ${b_0/(\eta T)} = \Ocal(b_0 T^{\nicefrac{-\alpha}{(4+\alpha)}})$, which decays more slowly than the target rate $T^{\nicefrac{-1}{(4+\alpha)}}$. This necessitates $N_{\rm init} = \Ocal(T^{\nicefrac{2(1-\alpha)}{(4+\alpha)}})$, whose exponent decreases to zero as $\alpha \to 1^-$ and grows toward $1/2$ as $\alpha \to 0^+$. At $\alpha = 1$, the schedule $\eta = T^{-4/5}$ makes $b_0/(\eta T) = \Ocal(T^{\nicefrac{-1}{5}})$ match the target rate, so one-sample initialization is sufficient. Thus, sharp initialization is needed whenever the schedule is aggressive relative to the target rate.

\noindent \textbullet\  \textit{Comparison with prior variance-reduced methods.}
\citet{chen2023generalized} prove that SPIDER achieves $\Ocal(\varepsilon^{-3})$ under $\E\Lcal_{\rm sym}^*(\alpha)$ in the bounded variance setting. \citet{reisizadeh2025variance} obtain the same rate under asymmetric generalized smoothness using clipped SPIDER. Our setting differs in two ways: the estimator is single-loop STORM rather than two-loop SPIDER, and the noise is distance-dependent rather than bounded. When $B=0$, the $\mathsf{BG}$-0 trajectory-growth terms disappear, and the bounded variance specialization recovers the standard $\Ocal(\varepsilon^{-3})$ rate in our setting as well. 
%%%%%%%%%%%%%%%%%%%%%%%%%%%%%%%%%%%%%%%%%%%%%%%%%%%%%%%%%%%%%
%%%%%%%%%%%%%%%%%%%%%%%%%%%%%%%%%%%%%%%%%%%%%%%%%%%%%%%%%%%%%
\section{Experiments}
\label{sec:exp}
We evaluate the methods analyzed in this paper on two experiments whose deterministic objectives satisfy $\alpha$-symmetric generalized smoothness. In both cases, we use objectives from~\citet{chen2023generalized} and apply a stochastic $\mathsf{BG}$-0 wrapper that injects distance-dependent noise at each query point. The
wrapper constructs a proper stochastic loss whose oracle is unbiased, satisfies the $\mathsf{BG}$-0 variance model exactly, and preserves the $\E\Lcal_{\rm sym}^*(\alpha)$ condition (see Appendix~\ref{app:exp_bg0_oracle}).

Across both experiments, we compare $\mathsf{NSGDM}$ and $\mathsf{NSTORM}$ using a single fresh sample per iteration against $\mathsf{SGD}$ and $\mathsf{STORM}$ with dynamic batching following~\citet{fazla2026lower}. Hyperparameters for the normalized methods are set according to Theorems~\ref{thm:nsgdm_three_regimes} and~\ref{thm:nstorm_three_regimes}; full details are in Appendix~\ref{app:exp_hyperparams}. Results are averaged over three random seeds and reported as mean $\pm$ standard deviation.

\textbullet \ \textbf{Phase retrieval} (Figure~\ref{fig:phase_retrieval}).
We minimize $f(x)=\frac{1}{2m}\sum_{r=1}^m(y_r-|a_r^\top x|^2)^2$ with $m=3000$ Gaussian measurements $a_r\in\R^{100}$ ($a_{r,j}\sim\mathcal N(0,0.01)$), target signal $x_{\star,j}\sim\mathcal N(0,1)$, and noiseless observations $y_r=|a_r^\top x_\star|^2$; the objective satisfies $\Lcal_{\rm sym}^* (\tfrac{2}{3})$~\cite{chen2023generalized}. We initialize at $x^0_j\sim\mathcal N(5,1)$, far from the target, so that
$\mathsf{BG}$-0 variance growth is visible. $\mathsf{NSTORM}$ reaches a lower gradient norm than $\mathsf{NSGDM}$, consistent with the improved exponent of variance reduction, while the dynamic-batching baselines require batch sizes growing to $\sim\!10^3$ (right panel). The middle panel confirms that normalization controls trajectory growth: $\|x^k\!-\!x^0\|^2$ stabilizes for both normalized methods, while single-sample $\mathsf{SGD}$ drifts until the dynamic batch size compensates.
\begin{figure}[t]
    \centering
    \includegraphics[width=\linewidth]{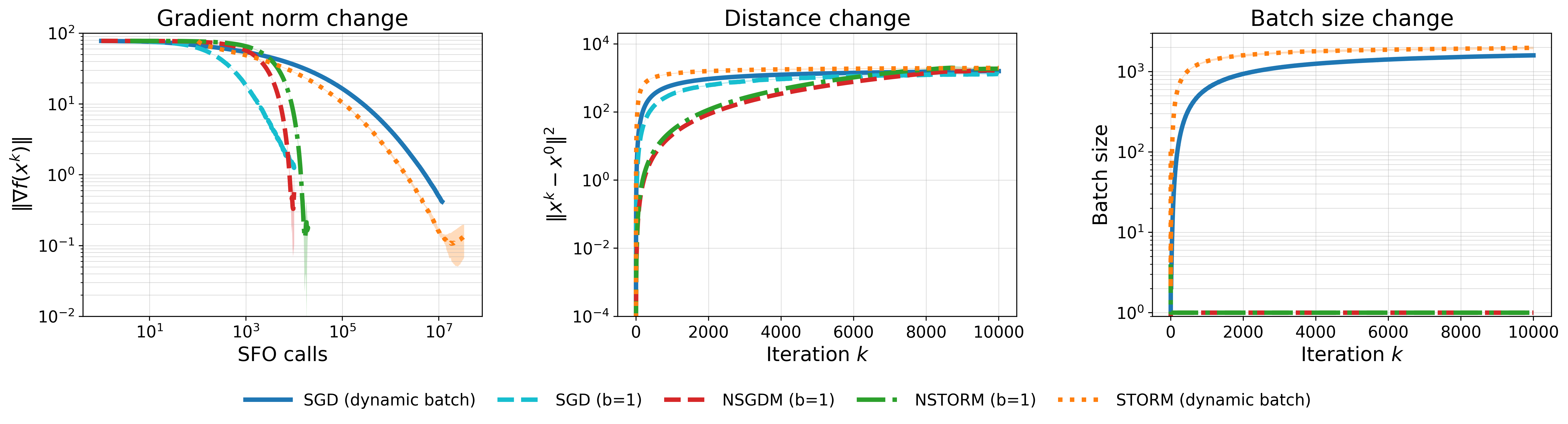}
    \caption{\emph{Phase retrieval experiment under the $\mathsf{BG}$-0
    oracle ($\alpha\!=\!2/3$).} Gradient norm vs.\ SFO calls (left),
    drift $\|x^k\!-\!x^0\|^2$ (center), and batch size (right).}
    \label{fig:phase_retrieval}
\end{figure}

\begin{figure}[t]
    \centering
    \includegraphics[width=\linewidth]{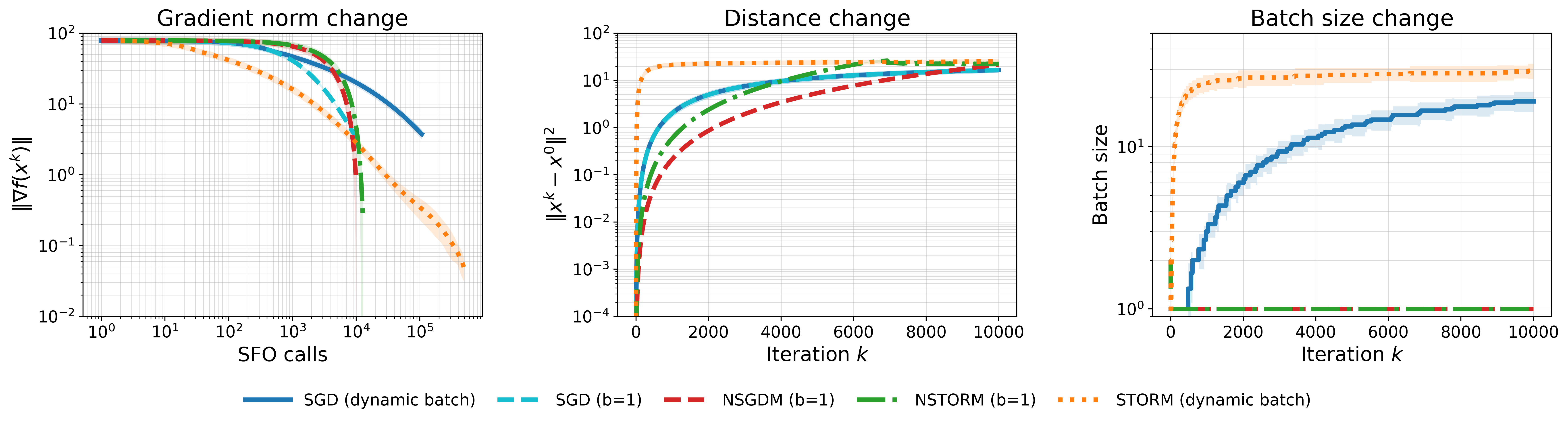}
    \caption{\emph{Cubic polynomial under the
    $\mathsf{BG}$-0 oracle ($\alpha\!=\!1/2$).} Same layout as
    Figure~\ref{fig:phase_retrieval}. $\mathsf{NSTORM}$ reaches a
    lower gradient norm than $\mathsf{NSGDM}$, reflecting the benefit
    of variance reduction.}
    \label{fig:poly_separation}
\end{figure}
\noindent \textbullet\ \textbf{Cubic polynomial} (Figure~\ref{fig:poly_separation}).
In this experiment we minimize $f(x)=|x|^3$ on $\R$, which satisfies $\Lcal_{\rm sym}^*(\tfrac{1}{2})$~\cite{chen2023generalized}, and initialize away from the target $x^{\star}=0.0$ at $x^0\sim\mathcal N(5,0.1)$. $\mathsf{NSTORM}$ again reaches a lower gradient norm than $\mathsf{NSGDM}$, reflecting the benefit of variance reduction. The dynamic batching baselines require batch sizes that grow by over an order of magnitude (right panel), whereas the normalized methods maintain batch size one throughout.

Across both experiments, normalized methods and dynamic batching offer two complementary mechanisms for handling $\mathsf{BG}$-0 noise on generalized-smooth landscapes: normalization controls trajectory growth through bounded step norms, while dynamic batching controls the effective noise by increasing oracle calls as the distance from
initialization grows. A practical advantage of the normalized approach is that it does not require knowledge of the variance-growth constants $B$ and $G$ to set the batch schedule.
%%%%%%%%%%%%%%%%%%%%%%%%%%%%%%%%%%%%%%%%%%%%%%%%%%%%%%%%%%%%%
%%%%%%%%%%%%%%%%%%%%%%%%%%%%%%%%%%%%%%%%%%%%%%%%%%%%%%%%%%%%%
\section{Conclusion}
\label{sec:conclusion}
We developed a convergence theory for normalized stochastic methods under $\mathsf{BG}$-0 noise across standard and generalized smoothness regimes. For $\mathsf{NSGDM}$, we showed that normalization and momentum provide implicit stabilization against distance-dependent noise, achieving $\Ocal(\varepsilon^{-6})$ with a single sample per iteration and no anchoring or dynamic batching, which is minimax optimal~\cite{fazla2026lower}. Generalized smoothness achieves the same rate. For $\mathsf{NSTORM}$, variance reduction recovers the optimal $\Ocal(\varepsilon^{-4})$ rate under mean-square smoothness with sharp initialization~\cite{fazla2026lower}, while expected $\alpha$-symmetric generalized smoothness yields $\Ocal(\varepsilon^{-(4+\alpha)})$ for $\alpha\in(0,1)$ and
$\Ocal(\varepsilon^{-5})$ for $\alpha=1$.

The $\alpha$-dependent price in $\mathsf{NSTORM}$ raises a natural question: is it information-theoretically optimal, or is there a better rate? The price seems to originate from a coupling intrinsic to the problem structure. Under $\E\Lcal^*_{\rm sym}(\alpha)$ with $\mathsf{BG}$-0, stochastic gradient differences depend simultaneously on $\|\nabla f(x)\|^\alpha$ from generalized smoothness and on the distance-dependent variance. This coupling vanishes when either source of difficulty is removed. We conjecture that it would confront any variance-reduced method that bounds stochastic gradient differences under $\E\Lcal_{\rm sym}^*(\alpha)$. Thus, it remains an interesting open avenue to determine whether one can come up with a matching lower bound, or an alternative idea that can help curb this under generalized smoothness, leading to the optimal $\Ocal({\varepsilon^{-4})}$ rate.
%%%%%%%%%%%%%%%%%%%%%%%%%%%%%%%%%%%%%%%%%%%%%%%%%%%%%%%%%%%%
%%%%%%%%%%%%%%%%%%%%%%%%%%%%%%%%%%%%%%%%%%%%%%%%%%%%%%%%%%%%%
%%%%%%%%%%%%%%%%%%%%%%%%%%%%%%%%%%%%%%%%%%%%%%%%%%%%%%%%%%%%%
\bibliographystyle{plainnat}
\bibliography{refs}

@article{jain2018parallelizing,
  title={Parallelizing stochastic gradient descent for least squares regression: mini-batching, averaging, and model misspecification},
  author={Jain, Prateek and Kakade, Sham M and Kidambi, Rahul and Netrapalli, Praneeth and Sidford, Aaron},
  journal={Journal of machine learning research},
  volume={18},
  number={223},
  pages={1--42},
  year={2018}
}

@inproceedings{nguyen2017sarah,
  title={SARAH: A novel method for machine learning problems using stochastic recursive gradient},
  author={Nguyen, Lam M and Liu, Jie and Scheinberg, Katya and Tak{\'a}{\v{c}}, Martin},
  booktitle={International conference on machine learning},
  pages={2613--2621},
  year={2017},
  organization={PMLR}
}

@inproceedings{rakhlin2012making,
  title={Making gradient descent optimal for strongly convex stochastic optimization},
  author={Rakhlin, Alexander and Shamir, Ohad and Sridharan, Karthik},
  booktitle={Proceedings of the 29th International Coference on International Conference on Machine Learning},
  pages={1571--1578},
  year={2012}
}

@inproceedings{li2021page,
  title={PAGE: A simple and optimal probabilistic gradient estimator for nonconvex optimization},
  author={Li, Zhize and Bao, Hongyan and Zhang, Xiangliang and Richt{\'a}rik, Peter},
  booktitle={International conference on machine learning},
  pages={6286--6295},
  year={2021},
  organization={PMLR}
}

@article{krasnosel1955two, title={Two remarks on the method of successive approximations}, author={Krasnosel'ski\u{\i}, Mark Aleksandrovich}, journal={Uspekhi matematicheskikh nauk}, volume={10}, number={1}, pages={123--127}, year={1955}, publisher={Russian Academy of Sciences, Steklov Mathematical Institute of Russian~…} }

@article{mann1953mean, title={Mean value methods in iteration}, author={Mann, W Robert}, journal={Proceedings of the American Mathematical Society}, volume={4}, number={3}, pages={506--510}, year={1953} }

@article{arjevani2023lower,
  title={Lower bounds for non-convex stochastic optimization},
  author={Arjevani, Yossi and Carmon, Yair and Duchi, John C and Foster, Dylan J and Srebro, Nathan and Woodworth, Blake},
  journal={Mathematical Programming},
  volume={199},
  number={1},
  pages={165--214},
  year={2023},
  publisher={Springer}
}

@article{bottou2018optimization,
  title={Optimization methods for large-scale learning},
  author={Bottou, L{\'e}on and Curtis, Frank E and Nocedal, Jorge},
  journal={SIAM Review},
  volume={60},
  number={2},
  pages={223--311},
  year={2018},
  publisher={SIAM}
}

@article{carmon2019lower_i,
  title={Lower bounds for finding stationary points {I}},
  author={Carmon, Yair and Duchi, John C and Hinder, Oliver and Sidford, Aaron},
  journal={Mathematical Programming},
  volume={177},
  pages={193--230},
  year={2019},
  publisher={Springer}
}

@inproceedings{cutkosky2019momentum,
  title={Momentum-based variance reduction in non-convex {SGD}},
  author={Cutkosky, Ashok and Orabona, Francesco},
  booktitle={Advances in Neural Information Processing Systems},
  year={2019}
}

@inproceedings{fang2018spider,
  title={{SPIDER}: Near-optimal non-convex optimization via stochastic path-integrated differential estimator},
  author={Fang, Cong and Li, Chris Jiao and Lin, Zhouchen and Zhang, Tong},
  booktitle={Advances in Neural Information Processing Systems},
  pages={689--699},
  year={2018}
}

@article{ghadimi2013stochastic,
  title={Stochastic first- and zeroth-order methods for nonconvex stochastic programming},
  author={Ghadimi, Saeed and Lan, Guanghui},
  journal={SIAM Journal on Optimization},
  volume={23},
  number={4},
  pages={2341--2368},
  year={2013},
  publisher={SIAM}
}

@book{nesterov2004introductory,
  title={Introductory lectures on convex optimization: A basic course},
  author={Nesterov, Yurii},
  year={2004},
  publisher={Springer Science \& Business Media}
}

@article{wang2018spiderboost,
  title={Spiderboost: A class of faster variance-reduced algorithms for nonconvex optimization},
  author={Wang, Zhe and Ji, Kaiyi and Zhou, Yi and Liang, Yingbin and Tarokh, Vahid},
  journal={arXiv preprint arXiv:1810.10690},
  year={2018}
}

@inproceedings{moulines2011non,
  title={Non-asymptotic analysis of stochastic approximation algorithms for machine learning},
  author={Bach, Francis and Moulines, Eric},
  booktitle={Advances in Neural Information Processing Systems},
  volume={24},
  year={2011}
}

@article{blum1954approximation,
  title={Approximation methods which converge with probability one},
  author={Blum, Julius R},
  journal={The Annals of Mathematical Statistics},
  pages={382--386},
  year={1954},
  publisher={JSTOR}
}

@article{gladyshev1965stochastic,
  title={On stochastic approximation},
  author={Gladyshev, EG},
  journal={Theory of Probability \& Its Applications},
  volume={10},
  number={2},
  pages={275--278},
  year={1965},
  publisher={SIAM}
}

@inproceedings{gorbunov2020unified,
  title={A unified theory of {SGD}: Variance reduction, sampling, quantization and coordinate descent},
  author={Gorbunov, Eduard and Hanzely, Filip and Richt{\'a}rik, Peter},
  booktitle={International Conference on Artificial Intelligence and Statistics},
  pages={680--690},
  year={2020},
  organization={PMLR}
}

@article{grimmer2019convergence,
  title={Convergence rates for deterministic and stochastic subgradient methods without {Lipschitz} continuity},
  author={Grimmer, Benjamin},
  journal={SIAM Journal on Optimization},
  volume={29},
  number={2},
  pages={1350--1365},
  year={2019},
  publisher={SIAM}
}

@article{halpern1967fixed,
  title={Fixed points of nonexpanding maps},
  author={Halpern, Benjamin},
  journal={Bulletin of the American Mathematical Society},
  volume={73},
  number={6},
  pages={957--961},
  year={1967}
}

@article{ilandarideva2023accelerated,
  title={Accelerated stochastic approximation with state-dependent noise},
  author={Ilandarideva, Sasila and Juditsky, Anatoli and Lan, Guanghui and Li, Tianjiao},
  journal={arXiv preprint arXiv:2307.01497},
  year={2023}
}

@article{khaled2022better,
  title={Better theory for {SGD} in the nonconvex world},
  author={Khaled, Ahmed and Richt{\'a}rik, Peter},
  journal={Transactions on Machine Learning Research},
  year={2023}
}

@article{khaled2023unified,
  title={Unified analysis of stochastic gradient methods for composite convex and smooth optimization},
  author={Khaled, Ahmed and Sebbouh, Othmane and Loizou, Nicolas and Gower, Robert M and Richt{\'a}rik, Peter},
  journal={Journal of Optimization Theory and Applications},
  volume={199},
  number={2},
  pages={499--540},
  year={2023},
  publisher={Springer}
}

@article{nemirovski2009robust,
  title={Robust stochastic approximation approach to stochastic programming},
  author={Nemirovski, Arkadi and Juditsky, Anatoli and Lan, Guanghui and Shapiro, Alexander},
  journal={SIAM Journal on Optimization},
  volume={19},
  number={4},
  pages={1574--1609},
  year={2009},
  publisher={SIAM}
}

@book{nesterov2018lectures,
  title={Lectures on convex optimization},
  author={Nesterov, Yurii},
  volume={137},
  year={2018},
  publisher={Springer}
}

@inproceedings{shamir2013stochastic,
  title={Stochastic gradient descent for non-smooth optimization: Convergence results and optimal averaging schemes},
  author={Shamir, Ohad and Zhang, Tong},
  booktitle={International Conference on Machine Learning},
  pages={71--79},
  year={2013},
  organization={PMLR}
}

@article{alacaoglu2025towards,
  title={Towards weaker variance assumptions for stochastic optimization},
  author={Alacaoglu, Ahmet and Malitsky, Yura and Wright, Stephen J},
  journal={arXiv preprint arXiv:2504.09951},
  year={2025}
}

@article{zhou2020stochastic,
  title={Stochastic nested variance reduction for nonconvex optimization},
  author={Zhou, Dongruo and Xu, Pan and Gu, Quanquan},
  journal={Journal of Machine Learning Research},
  volume={21},
  number={103},
  pages={1--63},
  year={2020}
}

@article{johnson2013accelerating,
  title={Accelerating stochastic gradient descent using predictive variance reduction},
  author={Johnson, Rie and Zhang, Tong},
  journal={Advances in neural information processing systems},
  volume={26},
  year={2013}
}

@article{defazio2014saga,
  title={SAGA: A fast incremental gradient method with support for non-strongly convex composite objectives},
  author={Defazio, Aaron and Bach, Francis and Lacoste-Julien, Simon},
  journal={Advances in neural information processing systems},
  volume={27},
  year={2014}
}

@article{fazla2026lower,
  title={Lower Bounds and Proximally Anchored SGD for Non-Convex Minimization Under Unbounded Variance},
  author={Fazla, Arda and Kaya, Ege C and Upadhyay, Antesh and Hashemi, Abolfazl},
  journal={arXiv preprint arXiv:2604.16620},
  year={2026}
}

@article{jin2021non,
  title={Non-convex distributionally robust optimization: Non-asymptotic analysis},
  author={Jin, Jikai and Zhang, Bohang and Wang, Haiyang and Wang, Liwei},
  journal={Advances in Neural Information Processing Systems},
  volume={34},
  pages={2771--2782},
  year={2021}
}

@article{zhang2019gradient,
  title={Why gradient clipping accelerates training: A theoretical justification for adaptivity},
  author={Zhang, Jingzhao and He, Tianxing and Sra, Suvrit and Jadbabaie, Ali},
  journal={arXiv preprint arXiv:1905.11881},
  year={2019}
}

@inproceedings{chen2023generalized,
  title={Generalized-smooth nonconvex optimization is as efficient as smooth nonconvex optimization},
  author={Chen, Ziyi and Zhou, Yi and Liang, Yingbin and Lu, Zhaosong},
  booktitle={International Conference on Machine Learning},
  pages={5396--5427},
  year={2023},
  organization={PMLR}
}

@article{levy2020large,
  title={Large-scale methods for distributionally robust optimization},
  author={Levy, Daniel and Carmon, Yair and Duchi, John C and Sidford, Aaron},
  journal={Advances in neural information processing systems},
  volume={33},
  pages={8847--8860},
  year={2020}
}

@inproceedings{cutkosky2020momentum,
  title={Momentum improves normalized sgd},
  author={Cutkosky, Ashok and Mehta, Harsh},
  booktitle={International conference on machine learning},
  pages={2260--2268},
  year={2020},
  organization={PMLR}
}

@inproceedings{reisizadeh2025variance,
  title={Variance-reduced clipping for non-convex optimization},
  author={Reisizadeh, Amirhossein and Li, Haochuan and Das, Subhro and Jadbabaie, Ali},
  booktitle={ICASSP 2025-2025 IEEE International Conference on Acoustics, Speech and Signal Processing (ICASSP)},
  pages={1--5},
  year={2025},
  organization={IEEE}
}

@inproceedings{koloskova2023revisiting,
  title={Revisiting gradient clipping: Stochastic bias and tight convergence guarantees},
  author={Koloskova, Anastasia and Hendrikx, Hadrien and Stich, Sebastian U},
  booktitle={International Conference on Machine Learning},
  pages={17343--17363},
  year={2023},
  organization={PMLR}
}

@article{zhang2020improved,
  title={Improved analysis of clipping algorithms for non-convex optimization},
  author={Zhang, Bohang and Jin, Jikai and Fang, Cong and Wang, Liwei},
  journal={Advances in Neural Information Processing Systems},
  volume={33},
  pages={15511--15521},
  year={2020}
}

@inproceedings{wang2024provable,
  title={Provable adaptivity of adam under non-uniform smoothness},
  author={Wang, Bohan and Zhang, Yushun and Zhang, Huishuai and Meng, Qi and Sun, Ruoyu and Ma, Zhi-Ming and Liu, Tie-Yan and Luo, Zhi-Quan and Chen, Wei},
  booktitle={Proceedings of the 30th ACM SIGKDD Conference on Knowledge Discovery and Data Mining},
  pages={2960--2969},
  year={2024}
}

@article{li2023convergence,
  title={Convergence of adam under relaxed assumptions},
  author={Li, Haochuan and Rakhlin, Alexander and Jadbabaie, Ali},
  journal={Advances in Neural Information Processing Systems},
  volume={36},
  pages={52166--52196},
  year={2023}
}

@article{takezawa2024parameter,
  title={Parameter-free clipped gradient descent meets polyak},
  author={Takezawa, Yuki and Bao, Han and Sato, Ryoma and Niwa, Kenta and Yamada, Makoto},
  journal={Advances in Neural Information Processing Systems},
  volume={37},
  pages={44575--44599},
  year={2024}
}

@article{zhao2021convergence,
  title={On the convergence and improvement of stochastic normalized gradient descent},
  author={Zhao, Shen-Yi and Xie, Yin-Peng and Li, Wu-Jun},
  journal={Science China Information Sciences},
  volume={64},
  number={3},
  pages={132103},
  year={2021},
  publisher={Springer}
}

@inproceedings{hubler2024parameter,
  title={Parameter-agnostic optimization under relaxed smoothness},
  author={H{\"u}bler, Florian and Yang, Junchi and Li, Xiang and He, Niao},
  booktitle={International Conference on Artificial Intelligence and Statistics},
  pages={4861--4869},
  year={2024},
  organization={PMLR}
}

@article{crawshaw2022robustness,
  title={Robustness to unbounded smoothness of generalized signsgd},
  author={Crawshaw, Michael and Liu, Mingrui and Orabona, Francesco and Zhang, Wei and Zhuang, Zhenxun},
  journal={Advances in neural information processing systems},
  volume={35},
  pages={9955--9968},
  year={2022}
}

@inproceedings{
khirirat2026error,
title={Error Feedback under \$(L\_0,L\_1)\$-Smoothness: Normalization and Momentum},
author={Sarit Khirirat and Abdurakhmon Sadiev and Artem Riabinin and Eduard Gorbunov and Peter Richt{\'a}rik},
booktitle={The Thirty-ninth Annual Conference on Neural Information Processing Systems},
year={2026},
url={https://openreview.net/forum?id=LmcTbBvgjP}
}
\newpage
\appendix
\section*{Appendix Table of Contents}
\addcontentsline{toc}{section}{Appendix}
\markboth{Appendix}{Appendix}
% Create a mini-TOC for the appendix
\startcontents[appendix]
\printcontents[appendix]{l}{1}{\setcounter{tocdepth}{3}}

\newpage
\section{A Short Primer on Smoothness Classes}
\label{apdx:primer_smoothness}
We summarize the smoothness notions used throughout the paper. The baseline class is the standard smoothness class $\Lcal$, while the generalized smoothness classes $\Lcal_{\rm H}^*$, $\Lcal_{\rm asym}^*$, and $\Lcal_{\rm sym}^*(\alpha)$ allow the local smoothness constant to grow with the gradient norm. The expected class $\E\Lcal_{\rm sym}^*(\alpha)$ is the stochastic analogue used for variance-reduced methods. Refer~\citet{chen2023generalized} for a more robust discussion on it.

\paragraph{Standard smoothness $\Lcal$.}
A differentiable function $f:\R^d\to\R$ belongs to $\Lcal$ if there exists $L_0>0$ such that, for all $x,y\in\R^d$,
\begin{equation}
    \label{eq:primer_l_smooth}
    \|\nabla f(y)-\nabla f(x)\|
    \le L_0\|y-x\|.
\end{equation}
Equivalently, the gradient is globally Lipschitz continuous. This is the classical smoothness assumption used in standard nonconvex optimization.

\paragraph{Hessian-based generalized smoothness $\Lcal_{\rm H}^*$.}
For twice differentiable objectives, the Hessian-based generalized smoothness class $\Lcal_{\rm H}^*$ is defined by the existence of constants $L_0,L_1>0$ such that, for all $x\in\R^d$,
\begin{equation}
    \label{eq:primer_hessian_gs}
    \|\nabla^2 f(x)\|
    \le L_0 + L_1\|\nabla f(x)\|.
\end{equation}
This is often called $(L_0,L_1)$-smoothness. Unlike standard smoothness, the curvature is allowed to increase when the gradient norm is large.

\paragraph{Asymmetric generalized smoothness: $\Lcal_{\rm asym}^*$.}
The asymmetric generalized smoothness class $\Lcal_{\rm asym}^*$ consists of differentiable functions satisfying, for some $L_0,L_1>0$ and all $x,y\in\R^d$,
\begin{equation}
    \label{eq:primer_asym_gs}
    \|\nabla f(y)-\nabla f(x)\|
    \le
    \left(L_0+L_1\|\nabla f(y)\|\right)\|y-x\|.
\end{equation}

\paragraph{$\alpha$-symmetric generalized smoothness $\Lcal_{\rm sym}^*(\alpha)$.}
Following \citet{chen2023generalized}, for $\alpha\in(0,1]$, we say that $f\in\Lcal_{\rm sym}^*(\alpha)$ if there exist constants $L_0,L_1>0$ such that, for all $x,y\in\R^d$,
\begin{equation}
    \label{eq:primer_sym_gs}
    \|\nabla f(y)-\nabla f(x)\| \le \left( L_0 + L_1 \max_{\theta\in[0,1]} \|\nabla f(x_\theta)\|^\alpha \right) \|y-x\|,
    \qquad
    x_\theta:=\theta y+(1-\theta)x.
\end{equation}
This condition is symmetric in the sense that the local smoothness is controlled along the entire line segment between $x$ and $y$, rather than only at one endpoint (and hence asymmetric in ~\ref{eq:primer_asym_gs}). The parameter $\alpha$ determines how strongly the local smoothness depends on the gradient norm. The case $\alpha=1$ corresponds to the usual symmetric $(L_0,L_1)$ generalized smoothness regime, while $\alpha\in(0,1)$ gives a sublinear dependence on the gradient norm. Standard smoothness is recovered as the special case $L_1=0$.

\paragraph{Expected $\alpha$-symmetric generalized smoothness $\E\Lcal_{\rm sym}^*(\alpha)$.}
For stochastic objectives $f(x)=\E_\xi[f(x;\xi)]$, the expected generalized smoothness class $\E\Lcal_{\rm sym}^*(\alpha)$ requires that, for some $L_0,L_1>0$ and all $x,y\in\R^d$,
\begin{equation}
    \label{eq:primer_expected_sym_gs}
    \E_\xi \big\| \nabla f(y;\xi)-\nabla f(x;\xi) \big\|^2 \le \|y-x\|^2 \E_\xi \Big[ \big( L_0 + L_1 \max_{\theta\in[0,1]} \|\nabla f(x_\theta;\xi)\|^\alpha \big)^2 \Big],
    \
    x_\theta:=\theta y+(1-\theta)x.
\end{equation}
This is a mean-square, sample-level analogue of $\Lcal_{\rm sym}^*(\alpha)$. It is stronger than requiring only the population objective $f$ to satisfy $\Lcal_{\rm sym}^*(\alpha)$, and it is the relevant condition for analyzing recursive variance-reduced estimators such as $\mathsf{NSTORM}$.

\paragraph{Mean-square smoothness $\mathsf{MSS}$.}
Mean-square smoothness is the standard stochastic analogue of $L$-smoothness. It requires that there exists $L>0$ such that, for all $x,y\in\R^d$,
\begin{equation}
    \label{eq:primer_mss}
    \E_\xi \left\| \nabla f(y;\xi)-\nabla f(x;\xi) \right\|^2 \le L^2\|y-x\|^2.
\end{equation}
Thus, $\mathsf{MSS}$ controls stochastic gradient differences without any dependence on gradient magnitudes. In contrast, $\E\Lcal_{\rm sym}^*(\alpha)$ permits the stochastic smoothness factor to grow with the sample-gradient norm.

\paragraph{Relations among the classes.}
The classes satisfy the following inclusions, up to constants:
\begin{equation}
    \label{eq:primer_class_relations}
    \Lcal \subset \Lcal_{\rm sym}^*(\alpha), \qquad \Lcal_{\rm asym}^* \subset \Lcal_{\rm sym}^*(1), \qquad \Lcal_{\rm H}^* \subset \Lcal_{\rm sym}^*(1).
\end{equation}
Moreover, on twice differentiable functions, $\Lcal_{\rm H}^*$ and $\Lcal_{\rm sym}^*(1)$ are equivalent up to constants. Hence, $\Lcal_{\rm sym}^*(\alpha)$ provides a unified deterministic framework that contains standard smoothness, asymmetric generalized smoothness, and Hessian-based $(L_0,L_1)$-smoothness. The expected class $\E\Lcal_{\rm sym}^*(\alpha)$ plays the analogous role for stochastic gradient differences.

\paragraph{Role in this paper.}
In this paper, $\Lcal$ and $\Lcal_{\rm sym}^*(\alpha)$ are used for the analysis of normalized stochastic gradient descent with momentum, $\mathsf{NSGDM}$. Under $\mathsf{BG}$-0 noise, the normalized update controls the trajectory length, and the generalized smoothness parameters affect constants but do not change the $\Ocal(\varepsilon^{-6})$ complexity exponent.

For normalized STORM, $\mathsf{NSTORM}$, the relevant conditions are $\mathsf{MSS}$ and $\E\Lcal_{\rm sym}^*(\alpha)$. Under $\mathsf{MSS}$, the transition noise in the recursive estimator is controlled only by the step length, leading to the optimal $\Ocal(\varepsilon^{-4})$ rate under $\mathsf{BG}$-0. Under $\E\Lcal_{\rm sym}^*(\alpha)$, however, the transition noise also depends on sample-gradient moments. These moments interact with the distance-dependent $\mathsf{BG}$-0 variance, producing the $\alpha$-dependent rates established in the main text.

\paragraph{Related Work: Generalized smoothness.}
Standard theory often assumes a constant smoothness parameter, but recent research focuses on objectives where local smoothness grows alongside the gradient norm \cite{carmon2019lower_i, ghadimi2013stochastic}. A prime example is $\Lcal^*_{\rm H}$ or $(L_0, L_1)$-smoothness, where the Hessian is bounded by $\|\nabla^2 f(x)\| \le L_0 + L_1\|\nabla f(x)\|$ \cite{zhang2019gradient}. This concept was further expanded by \citet{chen2023generalized} into the $\alpha$-symmetric generalized smoothness framework ($\Lcal_{\rm sym}^*(\alpha)/\E\Lcal^*(\alpha)$), which covers standard smoothness, asymmetric generalized smoothness~\cite{jin2021non,levy2020large,reisizadeh2025variance}, and Hessian-based generalized smoothness, and also covers high-order polynomial and exponential-type objectives. While prior work shows that generalized smooth problems can match the efficiency of standard smooth optimization, achieving deterministic rates of $\Ocal(\varepsilon^{-2})$ with normalized gradients or $\Ocal(\varepsilon^{-3})$ with SPIDER style variance reduction, these results rely on relatively tame variance assumptions. Our work investigates whether these efficiencies still hold under $\mathsf{BG}$-0 noise, where the variance is trajectory dependent.
%%%%%%%%%%%%%%%%%%%%%%%%%%%%%%%%%%%%%%%%%%%%%%%%%%%%%%%%%%%%%%%%%%%%%%%%%%%%
%%%%%%%%%%%%%%Proofs for NSGDM %%%%%%%%%%%%%%%%%%%%%%%%%%%%%%%%%%%%%%%%%%%%%
%%%%%%%%%%%%%%%%%%%%%%%%%%%%%%%%%%%%%%%%%%%%%%%%%%%%%%%%%%%%%%%%%%%%%%%%%%%%%
\newpage
\section{Proofs for Normalized Momentum under \texorpdfstring{$\mathsf{BG}$}{BG}-0 Noise}
\label{apdx:nsgdm_proofs}

This appendix proves the normalized-momentum guarantees stated in
Theorem~\ref{thm:nsgdm_three_regimes}. Throughout this section, let
\[
    T:=K+1,
    \qquad
    \Delta:=f(x^0)-f^{\inf}.
\]
We define
\[
    g_k:=\E\|\nabla f(x^k)\|,
    \qquad
    b_k:=\E\|v^k-\nabla f(x^k)\|,
\]
and
\[
    \Delta^k:=\E[f(x^k)-f^{\inf}],
    \qquad
    S_b(K):=\sum_{k=0}^K b_k,
    \qquad
    S_\Delta(K):=\sum_{k=0}^K \Delta^k.
\]
The output \(\widehat x\) is sampled uniformly from
\(\{x^0,x^1,\ldots,x^K\}\), and hence
\begin{equation}
\label{eq:app_uniform_output_identity}
    \E\|\nabla f(\widehat x)\|
    =
    \frac1T\sum_{k=0}^K g_k.
\end{equation}
We use the convention that \(v/\|v\|=0\) whenever \(v=0\).

\subsection{Auxiliary inequalities}
\label{app:nsgdm_aux}

\begin{lemma}[Trajectory control by normalization]
\label{lem:app_traj_control}
For the \(\mathsf{NSGDM}\) iterates in~\eqref{eq:nsgdm_updates},
\[
    \|x^{k+1}-x^k\|\le \gamma,
    \qquad
    \|x^k-x^0\|\le k\gamma.
\]
Consequently, under Assumption~\ref{as:bg0_oracle},
\begin{equation}
\label{eq:app_bg0_along_path}
    \E\|\nabla f(x^k;\xi^k)-\nabla f(x^k)\|^2
    \le B^2k^2\gamma^2+G^2.
\end{equation}
In particular, since \(v^0=\nabla f(x^0;\xi^0)\),
\begin{equation}
\label{eq:app_B0_bound}
    b_0\le G.
\end{equation}
\end{lemma}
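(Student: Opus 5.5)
The plan is a direct three-step argument, each step elementary.

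\emph{Step 1 (one-step bound).} By the update~\eqref{eq:nsgdm_updates}, $x^{k+1}-x^k=-\gamma v^k/\|v^k\|$. This vector has norm exactly $\gamma$ if $v^k\neq0$, and norm $0$ under the convention $v/\|v\|=0$ when $v^k=0$. Hence $\|x^{k+1}-x^k\|\le\gamma$ deterministically, for every realization of the samples.

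\emph{Step 2 (trajectory bound).} Write $x^k-x^0=\sum_{t=0}^{k-1}(x^{t+1}-x^t)$ as a telescoping sum. The triangle inequality and Step 1 then give $\|x^k-x^0\|\le k\gamma$. This bound also holds almost surely, with no dependence on $B$, $G$, or the smoothness class.

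\emph{Step 3 (transfer to the noise bound).} The iterate $x^k$ is $\cF_{k-1}$-measurable: it is determined by $\xi^0,\dots,\xi^{k-1}$. The sample $\xi^k$ is fresh and conditionally independent of the past. So we may apply Assumption~\ref{as:bg0_oracle} conditionally at the fixed point $x=x^k$:
\[
\E\big[\|\nabla f(x^k;\xi^k)-\nabla f(x^k)\|^2\,\big|\,\cF_{k-1}\big]\le B^2\|x^k-x^0\|^2+G^2\le B^2k^2\gamma^2+G^2 ,
\]
where the last inequality uses Step 2. Taking total expectation yields~\eqref{eq:app_bg0_along_path}.

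\emph{The bound $b_0\le G$.} At $k=0$ we have $v^0=\nabla f(x^0;\xi^0)$, so~\eqref{eq:app_bg0_along_path} gives $\E\|v^0-\nabla f(x^0)\|^2\le G^2$. Jensen's inequality, $\E\|X\|\le(\E\|X\|^2)^{1/2}$, then gives $b_0\le G$, which is~\eqref{eq:app_B0_bound}.

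\emph{Main subtlety.} Nothing here is technically hard. The one point needing care is Step 3: the BG-0 bound is stated for deterministic query points, and applying it at the random point $x^k$ requires conditioning on the past. That is legitimate because $\xi^k$ is independent of $x^k$ given $\cF_{k-1}$.
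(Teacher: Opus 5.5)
Your proposal is correct and follows essentially the same route as the paper. The normalized direction has norm at most one, which gives the one-step bound; telescoping with the triangle inequality gives $\|x^k-x^0\|\le k\gamma$; substituting into Assumption~\ref{as:bg0_oracle} gives the along-path variance bound; and Jensen at $k=0$ gives $b_0\le G$. Your explicit conditioning step (using that $x^k$ is determined by $\xi^0,\dots,\xi^{k-1}$ and that $\xi^k$ is fresh) simply makes rigorous the step the paper states as a direct substitution.
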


\begin{proof}
The normalized direction has norm at most one. Hence
\[
    \|x^{k+1}-x^k\|\le \gamma.
\]
Summing the increments gives
\[
    \|x^k-x^0\|
    \le
    \sum_{t=0}^{k-1}\|x^{t+1}-x^t\|
    \le
    k\gamma.
\]
Substitution into Assumption~\ref{as:bg0_oracle} gives
\eqref{eq:app_bg0_along_path}. Finally,
\[
    b_0
    =
    \E\|v^0-\nabla f(x^0)\|
    \le
    \sqrt{\E\|v^0-\nabla f(x^0)\|^2}
    \le
    G,
\]
because \(x^0-x^0=0\).
\end{proof}

\begin{lemma}[Direction-error inequality]
\label{lem:app_direction_error}
For any \(a,e\in\R^d\), define
\[
    d=
    \begin{cases}
        \dfrac{a+e}{\|a+e\|}, & a+e\neq0,\\[4pt]
        0, & a+e=0.
    \end{cases}
\]
Then
\begin{equation}
\label{eq:app_direction_error}
    \langle a,d\rangle\ge \|a\|-2\|e\|.
\end{equation}
\end{lemma}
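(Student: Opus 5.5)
We prove \eqref{eq:app_direction_error} by a short case analysis on whether \(a+e\) vanishes, followed by an inner-product expansion combined with the triangle inequality.

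\textbf{Degenerate case.} Suppose \(a+e=0\). Then \(d=0\) and \(\langle a,d\rangle=0\). Since \(a=-e\), we have \(\|a\|-2\|e\|=-\|e\|\le 0\), so the inequality holds.

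\textbf{Main case.} Suppose \(a+e\neq0\). We write \(a=(a+e)-e\) and use \(\langle a+e,d\rangle=\|a+e\|\) together with \(\|d\|=1\). This gives
\[
\langle a,d\rangle=\|a+e\|-\langle e,d\rangle\ge \|a+e\|-\|e\|,
\]
where the inequality is Cauchy--Schwarz. The reverse triangle inequality gives \(\|a+e\|\ge\|a\|-\|e\|\). Substituting this into the display yields \(\langle a,d\rangle\ge\|a\|-2\|e\|\), as required.

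The argument has no real obstacle. The only point needing care is the convention \(v/\|v\|=0\) at \(v=0\). In that degenerate case the bound holds only because the right-hand side is then nonpositive, so this is the step to state explicitly. In the subsequent analysis the lemma will be applied with \(a=\nabla f(x^k)\) and \(e=v^k-\nabla f(x^k)\). The descent step then loses \(\gamma\|\nabla f(x^k)\|\) minus \(2\gamma\|v^k-\nabla f(x^k)\|\), which is exactly what produces the \(b_k\) terms in the later bounds.
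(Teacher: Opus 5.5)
Your proof is correct, but it handles the main case $a+e\neq0$ differently from the paper. The paper expands
$\langle a,d\rangle=(\|a\|^2+\langle a,e\rangle)/\|a+e\|\ge \|a\|(\|a\|-\|e\|)/\|a+e\|$
and then splits into two subcases:
\begin{itemize}
\item if $\|a\|\ge\|e\|$, the numerator is nonnegative, so $\|a+e\|$ can be replaced by its upper bound $\|a\|+\|e\|$;
\item if $\|a\|<\|e\|$, it falls back on $\langle a,d\rangle\ge-\|a\|$.
\end{itemize}
Your decomposition $a=(a+e)-e$ gives $\langle a,d\rangle=\|a+e\|-\langle e,d\rangle$ directly. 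Cauchy--Schwarz and the reverse triangle inequality then finish it in one line, with no subcases depending on the sign of $\|a\|-\|e\|$. This is the cleaner, standard argument.

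The paper's route does give a slightly sharper intermediate estimate when $\|a\|\ge\|e\|$, namely $\langle a,d\rangle\ge\|a\|-2\|a\|\|e\|/(\|a\|+\|e\|)$. That extra sharpness is never used later, since every application only needs $\|a\|-2\|e\|$. Your treatment of the degenerate case $a+e=0$ matches the paper's.
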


\begin{proof}
If \(a+e=0\), then \(d=0\) and \(e=-a\). Hence
\[
    \langle a,d\rangle=0\ge \|a\|-2\|a\|.
\]
Now suppose \(a+e\neq0\). Then
\[
    \langle a,d\rangle
    =
    \frac{\|a\|^2+\langle a,e\rangle}{\|a+e\|}
    \ge
    \frac{\|a\|(\|a\|-\|e\|)}{\|a+e\|}.
\]
If \(\|a\|\ge\|e\|\), then using
\(\|a+e\|\le \|a\|+\|e\|\), we obtain
\[
    \langle a,d\rangle
    \ge
    \frac{\|a\|(\|a\|-\|e\|)}{\|a\|+\|e\|}
    =
    \|a\|-\frac{2\|a\|\|e\|}{\|a\|+\|e\|}
    \ge
    \|a\|-2\|e\|.
\]
If \(\|a\|<\|e\|\), then by Cauchy--Schwarz,
\[
    \langle a,d\rangle\ge -\|a\|\ge \|a\|-2\|e\|.
\]
The claim follows.
\end{proof}

\begin{lemma}[Young inequality for fractional powers]
\label{lem:app_young_alpha}
For every \(x\ge0\), \(\alpha\in(0,1)\), and \(\rho>0\),
\begin{equation}
\label{eq:app_young_alpha}
    x^\alpha
    \le
    \alpha\rho x+(1-\alpha)\rho^{-\alpha/(1-\alpha)}.
\end{equation}
\end{lemma}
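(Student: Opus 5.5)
The plan is to deduce the statement from the weighted AM--GM (concavity) inequality: for $u,w\ge0$ and $p\in(0,1)$, $u^p w^{1-p}\le pu+(1-p)w$. The case $x=0$ is immediate, since the right-hand side of \eqref{eq:app_young_alpha} is $(1-\alpha)\rho^{-\alpha/(1-\alpha)}>0$. For $x>0$, I will write $x^\alpha$ as a product whose two factors are tuned so that AM--GM produces exactly the two terms on the right.

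Take $p=\alpha$, $u=\rho x$, and $w=\rho^{-\alpha/(1-\alpha)}$. Then
\[
u^\alpha w^{1-\alpha}=\rho^\alpha x^\alpha\cdot\rho^{-\alpha}=x^\alpha ,
\]
and weighted AM--GM gives directly
\[
x^\alpha\le \alpha\rho x+(1-\alpha)\rho^{-\alpha/(1-\alpha)} .
\]

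To keep the argument self-contained, I will also justify AM--GM itself. For $u,w>0$, concavity of $\log$ gives $p\log u+(1-p)\log w\le\log(pu+(1-p)w)$, and exponentiating yields the inequality. If either $u$ or $w$ is zero, the left-hand side vanishes and the inequality is trivial.

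As an alternative check, one can minimize $\phi(x)=\alpha\rho x+(1-\alpha)\rho^{-\alpha/(1-\alpha)}-x^\alpha$ over $x>0$. This function is convex, and its critical point is $x^\star=\rho^{-1/(1-\alpha)}$, where $\phi(x^\star)=0$. There is no real obstacle here; the only step needing care is choosing the exponent on $w$ so that the powers of $\rho$ cancel, namely requiring $\rho^\alpha w^{1-\alpha}=1$.
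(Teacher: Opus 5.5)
Your proof is correct and is essentially the paper's argument: the paper applies Young's inequality with exponents $1/\alpha$ and $1/(1-\alpha)$ to $(\rho x)^\alpha$ and $\rho^{-\alpha}$, which is exactly your weighted AM--GM step with $u=\rho x$, $w=\rho^{-\alpha/(1-\alpha)}$ after the substitution $u^\alpha$, $w^{1-\alpha}$. Your convexity check at $x^\star=\rho^{-1/(1-\alpha)}$ is also correct and additionally shows the bound is tight.
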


\begin{proof}
Apply Young's inequality with conjugate exponents
\(p=1/\alpha\) and \(q=1/(1-\alpha)\) to
\[
    u=(\rho x)^\alpha,
    \qquad
    v=\rho^{-\alpha}.
\]
\end{proof}

\subsection{Standard smoothness and \texorpdfstring{$\Lcal^*_{\rm sym}(1)$}{lsym1}}
\label{app:nsgdm_l1_case}

We first prove the standard smoothness case and the endpoint
\(\mathcal L_{\rm sym}^*(1)\) case. For the endpoint generalized smooth case, we use the following standard consequences of symmetric generalized smoothness (see~\citet{chen2023generalized}, Proposition 1):
for all \(x,y\in\R^d\),
\begin{equation}
\label{eq:app_l1_grad_consequence}
    \|\nabla f(y)-\nabla f(x)\|
    \le
    \bigl(L_0+L_1\|\nabla f(x)\|\bigr)
    e^{L_1\|y-x\|}\|y-x\|,
\end{equation}
and
\begin{equation}
\label{eq:app_l1_func_consequence}
    f(y)
    \le
    f(x)+\langle\nabla f(x),y-x\rangle
    +\frac12
    \bigl(L_0+L_1\|\nabla f(x)\|\bigr)
    e^{L_1\|y-x\|}
    \|y-x\|^2.
\end{equation}
When \(L_1=0\), these reduce to the standard \(L_0\)-smooth inequalities.
We also use the gradient-growth inequality (see~\citet{khirirat2026error}, Lemma 2)
\begin{equation}
\label{eq:app_grad_growth_l1}
    \|\nabla f(x)\|
    \le
    8L_1(f(x)-f^{\inf})+\frac{L_0}{L_1},
    \qquad L_1>0,
\end{equation}
which follows from lower boundedness and the endpoint generalized smoothness
condition.

\begin{theorem}[Single-sample master bound for \(L_0\)-smoothness and \(\alpha=1\)]
\label{thm:app_nsgdm_l1_master}
Suppose Assumptions~\ref{as:lower} and~\ref{as:bg0_oracle} hold. Assume
that either \(f\) is \(L_0\)-smooth, or that
\eqref{eq:app_l1_grad_consequence}--\eqref{eq:app_l1_func_consequence} hold.
In the standard smooth case, set \(L_1=0\). If
\begin{equation}
\label{eq:app_l1_master_conditions}
    \gamma L_1\le \frac12,
    \qquad
    \frac{32L_1^2\gamma^2T}{\eta}\le\frac12,
\end{equation}
with both conditions interpreted as vacuous when \(L_1=0\), then
\begin{align}
\E\|\nabla f(\widehat x)\|
&\le
\frac{2\Delta}{\gamma T}
+\frac{4b_0}{\eta T}
+\frac{16L_0\gamma}{\eta}
+2L_0\gamma +4\sqrt{\eta}
\Big(G+\frac{B\gamma K}{2}\Big).
\notag\\
&
\quad
+\frac{64L_1^2\gamma}{\eta}
\Bigg[
4\Delta
+\frac{4\gamma b_0}{\eta}
+\frac{16L_0\gamma^2T}{\eta}
+4\gamma\sqrt{\eta}\,TG
+2B\gamma^2\sqrt{\eta}\,KT
+2L_0\gamma^2T
\Bigg]
\label{eq:app_l1_master_bound}
\end{align}
When \(L_1=0\), the entire \(L_1^2\)-term vanishes.
\end{theorem}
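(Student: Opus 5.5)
We follow the standard normalized-momentum template of \citet{cutkosky2020momentum}: a one-step descent inequality, an unrolled error recursion for $\epsilon_k:=v^k-\nabla f(x^k)$, and a summation in which the $L_1$-dependent terms are absorbed.

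\textbf{Descent step.} Apply \eqref{eq:app_l1_func_consequence} with $y=x^{k+1}$, $x=x^k$, and $\|y-x\|\le\gamma$. Together with $\gamma L_1\le\frac12$ this gives $e^{L_1\gamma}\le e^{1/2}\le 2$. Lemma~\ref{lem:app_direction_error} with $a=\nabla f(x^k)$ and $e=\epsilon_k$ then yields
\[
f(x^{k+1})\le f(x^k)-\gamma\|\nabla f(x^k)\|+2\gamma\|\epsilon_k\|+\gamma^2\bigl(L_0+L_1\|\nabla f(x^k)\|\bigr).
\]
Take expectations and sum over $k=0,\dots,K$; telescoping uses $\Delta^{K+1}\ge 0$. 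This gives
\[
\textstyle\sum_k g_k\le \Delta/\gamma+2S_b(K)+L_0\gamma T+L_1\gamma\sum_k g_k.
\]
Since $L_1\gamma\le\frac12$, the last term is absorbed, so $\sum_k g_k\le 2\Delta/\gamma+4S_b+2L_0\gamma T$. This accounts for the $2\Delta/(\gamma T)$ and $2L_0\gamma$ terms of \eqref{eq:app_l1_master_bound}.

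\textbf{Error recursion.} Write $\epsilon_{k+1}=(1-\eta)\epsilon_k+(1-\eta)D_k+\eta n_{k+1}$, where $D_k:=\nabla f(x^k)-\nabla f(x^{k+1})$ and $n_{k+1}:=\nabla f(x^{k+1};\xi^{k+1})-\nabla f(x^{k+1})$. Unrolling gives
\[
\epsilon_k=(1-\eta)^k\epsilon_0+\sum_{t<k}(1-\eta)^{k-t}D_t+\eta\sum_{t<k}(1-\eta)^{k-1-t}n_{t+1}.
\]
We bound the three pieces in turn.
\begin{itemize}
\item The first contributes $(1-\eta)^k b_0$.
\item For the second, \eqref{eq:app_l1_grad_consequence} gives $\|D_t\|\le 2\gamma(L_0+L_1\|\nabla f(x^t)\|)$.
\item The third is a martingale sum. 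By conditional unbiasedness, Jensen, and \eqref{eq:app_bg0_along_path}, its expected norm is at most $\eta\bigl(\sum_t(1-\eta)^{2(k-1-t)}(B^2(t+1)^2\gamma^2+G^2)\bigr)^{1/2}\le\sqrt{\eta}\,(G+B\gamma K)$.
\end{itemize}
Summing over $k$ and using $\sum_k(1-\eta)^k\le 1/\eta$, we get
\[
S_b(K)\le \frac{b_0}{\eta}+\frac{2L_0\gamma T}{\eta}+\frac{2L_1\gamma}{\eta}\sum_k g_k+T\sqrt{\eta}\,(G+B\gamma K).
\]
We should track the $B$ term more carefully than this to obtain the $\frac{B\gamma K}{2}$ constant, for instance by bounding $\sum_k k$-type averages. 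This is cosmetic.

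\textbf{Handling the $L_1$ feedback.} This is the main obstacle. Substituting $S_b$ into the descent sum produces $\frac{8L_1\gamma}{\eta}\sum_k g_k$ on the right-hand side. Since $\gamma/\eta$ is not small, this term cannot be absorbed directly. Instead, apply the gradient-growth inequality \eqref{eq:app_grad_growth_l1} only in this term:
\[
L_1\sum_k g_k\le 8L_1^2 S_\Delta(K)+L_0T.
\]
The $L_0T$ part yields the $16L_0\gamma/\eta$ contribution. It remains to bound $S_\Delta(K)$. The descent inequality, without discarding the $\gamma\|\nabla f\|$ term, gives
\[
\Delta^k\le\Delta+2\gamma S_b(K)+L_0\gamma^2 T+L_1\gamma^2\textstyle\sum g,
\]
hence
\[
S_\Delta\le T\bigl[\Delta+2\gamma S_b+L_0\gamma^2T+\tfrac12\gamma\textstyle\sum g\bigr].
\]
Reinserting the bound on $S_b$ leaves a term of the form $\frac{32L_1^2\gamma^2T}{\eta}S_\Delta$ on the right. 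The second condition in \eqref{eq:app_l1_master_conditions} makes its coefficient at most $\frac12$, so it is absorbed and $S_\Delta$ is bounded by twice the bracket in \eqref{eq:app_l1_master_bound}. Dividing by $T$ gives the stated bound. When $L_1=0$ this whole step is skipped.
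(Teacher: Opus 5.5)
The step that does not close is your bound on $S_\Delta(K)$. Your bound on $\Delta^k$ drops $-\gamma\sum_{t<k}g_t$ but keeps the positive $+L_1\gamma^2\sum_t g_t$. Your estimate $S_\Delta\le T[\Delta+2\gamma S_b+L_0\gamma^2T+\tfrac12\gamma\sum_k g_k]$ therefore carries an extra $\tfrac12\gamma T\sum_k g_k$. This term then disappears without comment when you say only a $\frac{32L_1^2\gamma^2T}{\eta}S_\Delta$ term remains.

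The extra term cannot be handled under \eqref{eq:app_l1_master_conditions}. Pushing it through \eqref{eq:app_grad_growth_l1} puts the coefficient $4L_1\gamma T$ on $S_\Delta$, which grows with $T$ (it is $4L_1\gamma_0T^{1/6}$ under the schedule of Corollary~\ref{cor:app_nsgdm_sym1_bg0}). Pushing it instead through the summed descent bound $\sum_k g_k\le 2\Delta/\gamma+4S_b+2L_0\gamma T$ doubles the $S_b$ contribution. The $S_\Delta$ coefficient then becomes $64L_1^2\gamma^2T/\eta$, which the hypothesis bounds only by $1$, so nothing can be absorbed.

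The repair is the paper's Step 4. Because $L_1\gamma\le\frac12$, the per-step recursion already reads $\Delta^{k+1}\le\Delta^k-\frac{\gamma}{2}g_k+2\gamma b_k+L_0\gamma^2$. The net gradient contribution is nonpositive and should be discarded entirely, giving $S_\Delta\le T[\Delta+2\gamma S_b+L_0\gamma^2T]$. The only feedback is then $2\gamma T\cdot\frac{2L_1\gamma}{\eta}\sum_k g_k\le\frac{32L_1^2\gamma^2T}{\eta}S_\Delta+\frac{4L_0\gamma^2T^2}{\eta}$. After absorption with the second condition, $S_\Delta/T$ is bounded by the bracket in \eqref{eq:app_l1_master_bound}, with $2\Delta$ in place of $4\Delta$, and the argument closes.

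The rest of your argument is the paper's: the absorbed descent inequality, the unrolled momentum error, \eqref{eq:app_grad_growth_l1} to trade $\sum_k g_k$ for $S_\Delta(K)$, and absorption via $32L_1^2\gamma^2T/\eta\le\frac12$. Applying \eqref{eq:app_grad_growth_l1} after summing rather than inside the drift bound $\E\|D_t\|\le 4L_0\gamma+16L_1^2\gamma\Delta^t$ gives the same $16L_0\gamma/\eta$ and $64L_1^2\gamma/\eta$ coefficients. The $B\gamma K/2$ constant does follow from using $\sqrt{\eta}(G+Bk\gamma)$ per $k$ and $\sum_{k=0}^K k=KT/2$.
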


\begin{proof}
Let
\[
    E_k:=v^k-\nabla f(x^k),
    \qquad
    D_k:=\nabla f(x^k)-\nabla f(x^{k+1}),
\]
and
\[
    \zeta_{k+1}
    :=
    \nabla f(x^{k+1};\xi^{k+1})-\nabla f(x^{k+1}).
\]

\paragraph{Step 1: descent.}
Using \eqref{eq:app_l1_func_consequence} with
\(x=x^k\) and \(y=x^{k+1}=x^k-\gamma d^k\), we get
\[
\begin{aligned}
    f(x^{k+1})
    &\le
    f(x^k)-\gamma\langle\nabla f(x^k),d^k\rangle \\
    &\quad
    +\frac{\gamma^2}{2}e^{L_1\gamma}
    \bigl(L_0+L_1\|\nabla f(x^k)\|\bigr).
\end{aligned}
\]
Since \(\gamma L_1\le 1/2\), we have \(e^{L_1\gamma}\le e^{1/2}<2\).
By Lemma~\ref{lem:app_direction_error},
\[
    \langle\nabla f(x^k),d^k\rangle
    \ge
    \|\nabla f(x^k)\|-2\|E_k\|.
\]
Therefore,
\[
\begin{aligned}
    f(x^{k+1})
    &\le
    f(x^k)
    -\gamma\|\nabla f(x^k)\|
    +2\gamma\|E_k\|
    +L_0\gamma^2
    +L_1\gamma^2\|\nabla f(x^k)\|.
\end{aligned}
\]
Using \(\gamma L_1\le1/2\), we obtain
\begin{equation}
\label{eq:app_l1_descent_recursion}
    \Delta^{k+1}
    \le
    \Delta^k-\frac{\gamma}{2}g_k+2\gamma b_k+L_0\gamma^2.
\end{equation}

\paragraph{Step 2: estimator recursion.}
The momentum update gives
\[
    E_{k+1}
    =
    (1-\eta)E_k+(1-\eta)D_k+\eta\zeta_{k+1}.
\]
Unrolling,
\[
    E_k
    =
    (1-\eta)^kE_0
    +\sum_{t=0}^{k-1}(1-\eta)^{k-t}D_t
    +\eta\sum_{t=0}^{k-1}(1-\eta)^{k-1-t}\zeta_{t+1}.
\]
For the drift term, \eqref{eq:app_l1_grad_consequence},
\(e^{L_1\gamma}\le2\), and \eqref{eq:app_grad_growth_l1} imply, for
\(L_1>0\),
\[
\begin{aligned}
    \E\|D_t\|
    &\le
    2\gamma\bigl(L_0+L_1g_t\bigr) \\
    &\le
    2\gamma\left(L_0+L_1
    \left(8L_1\Delta^t+\frac{L_0}{L_1}\right)\right) \\
    &=
    4L_0\gamma+16L_1^2\gamma\Delta^t.
\end{aligned}
\]
For \(L_1=0\), the second term is absent, and the same displayed bound remains
valid with the \(L_1^2\)-term equal to zero.

For the martingale term, conditional unbiasedness and the orthogonality of
martingale differences give
\[
\begin{aligned}
    &\E\left\|
    \eta\sum_{t=0}^{k-1}(1-\eta)^{k-1-t}\zeta_{t+1}
    \right\| \\
    &\quad\le
    \left(
    \eta^2\sum_{t=0}^{k-1}(1-\eta)^{2(k-1-t)}
    \E\|\zeta_{t+1}\|^2
    \right)^{1/2}.
\end{aligned}
\]
By Lemma~\ref{lem:app_traj_control}, for \(t\le k-1\),
\[
    \E\|\zeta_{t+1}\|^2
    \le
    B^2(t+1)^2\gamma^2+G^2
    \le
    B^2k^2\gamma^2+G^2.
\]
Also,
\[
    \sum_{t=0}^{k-1}(1-\eta)^{2(k-1-t)}
    \le
    \frac1\eta.
\]
Hence
\[
    \E\left\|
    \eta\sum_{t=0}^{k-1}(1-\eta)^{k-1-t}\zeta_{t+1}
    \right\|
    \le
    \sqrt{\eta}(G+Bk\gamma).
\]
Combining the drift and martingale estimates yields
\begin{equation}
\label{eq:app_l1_Bk_bound}
    b_k
    \le
    (1-\eta)^kb_0
    +\frac{4L_0\gamma}{\eta}
    +16L_1^2\gamma
    \sum_{t=0}^{k-1}(1-\eta)^{k-t}\Delta^t
    +\sqrt\eta(G+Bk\gamma).
\end{equation}

\paragraph{Step 3: summing the estimator errors.}
Summing \eqref{eq:app_l1_Bk_bound} over \(k=0,\ldots,K\), and using
\[
    \sum_{k=0}^K(1-\eta)^k\le \frac1\eta,
\]
and
\[
    \sum_{k=0}^K\sum_{t=0}^{k-1}(1-\eta)^{k-t}\Delta^t
    \le
    \frac1\eta S_\Delta(K),
\]
we get
\begin{equation}
\label{eq:app_l1_SB_bound}
    S_b(K)
    \le
    \frac{b_0}{\eta}
    +\frac{4L_0\gamma T}{\eta}
    +\frac{16L_1^2\gamma}{\eta}S_\Delta(K)
    +\sqrt\eta
    \left(TG+\frac{B\gamma KT}{2}\right).
\end{equation}

\paragraph{Step 4: controlling the accumulated suboptimality.}
Dropping the negative gradient term in
\eqref{eq:app_l1_descent_recursion}, iterating, and summing gives
\[
    S_\Delta(K)
    \le
    (K+2)\Delta
    +2\gamma T S_b(K)
    +\frac{L_0\gamma^2T(K+2)}{2}.
\]
Substituting \eqref{eq:app_l1_SB_bound} gives
\[
\begin{aligned}
S_\Delta(K)
&\le
(K+2)\Delta
+\frac{2\gamma T b_0}{\eta}
+\frac{8L_0\gamma^2T^2}{\eta}
+\frac{32L_1^2\gamma^2T}{\eta}S_\Delta(K) \\
&\quad
+2\gamma\sqrt\eta T^2G
+B\gamma^2\sqrt\eta KT^2
+\frac{L_0\gamma^2T(K+2)}{2}.
\end{aligned}
\]
By \eqref{eq:app_l1_master_conditions}, the coefficient of
\(S_\Delta(K)\) on the right is at most \(1/2\). Moving it to the left and
using \(K+2\le2T\), we obtain
\begin{equation}
\label{eq:app_l1_Sdelta_avg_bound}
\frac{S_\Delta(K)}{T}
\le
4\Delta
+\frac{4\gamma b_0}{\eta}
+\frac{16L_0\gamma^2T}{\eta}
+4\gamma\sqrt\eta\,TG
+2B\gamma^2\sqrt\eta\,KT
+2L_0\gamma^2T.
\end{equation}

\paragraph{Step 5: final stationarity bound.}
Summing \eqref{eq:app_l1_descent_recursion} gives
\[
    \frac{\gamma}{2}\sum_{k=0}^K g_k
    \le
    \Delta+2\gamma S_b(K)+L_0\gamma^2T.
\]
Dividing by \(\gamma T/2\), using
\eqref{eq:app_uniform_output_identity}, and substituting
\eqref{eq:app_l1_SB_bound} and \eqref{eq:app_l1_Sdelta_avg_bound}, we obtain
\eqref{eq:app_l1_master_bound}.
\end{proof}

\begin{corollary}[Standard smoothness under \(\mathsf{BG}\)-0]
\label{cor:app_nsgdm_smooth_bg0}
Assume Assumption~\ref{as:l0_smooth} holds. Choose
\[
    \eta=T^{-2/3},
    \qquad
    \gamma=\gamma_0T^{-5/6}.
\]
Then
\begin{equation}
\label{eq:app_smooth_bg0_final}
\E\|\nabla f(\widehat x)\|
\le
\left(
    \frac{2\Delta}{\gamma_0}
    +16L_0\gamma_0
    +2B\gamma_0
\right)T^{-1/6}
+8GT^{-1/3}
+2L_0\gamma_0T^{-5/6}.
\end{equation}
Consequently,
\[
    \E\|\nabla f(\widehat x)\|
    =
    \Ocal(T^{-1/6}),
\]
and the SFO complexity is \(\Ocal(\varepsilon^{-6})\).
\end{corollary}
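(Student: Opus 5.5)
The corollary is a direct specialization of Theorem~\ref{thm:app_nsgdm_l1_master} with $L_1=0$. Under Assumption~\ref{as:l0_smooth}, the conditions~\eqref{eq:app_l1_master_conditions} are vacuous and the whole $L_1^2$-bracket in~\eqref{eq:app_l1_master_bound} drops out. This leaves
\[
\E\|\nabla f(\widehat x)\|
\le
\frac{2\Delta}{\gamma T}
+\frac{4b_0}{\eta T}
+\frac{16L_0\gamma}{\eta}
+2L_0\gamma
+4\sqrt{\eta}\Big(G+\frac{B\gamma K}{2}\Big).
\]
I will bound $b_0\le G$ by~\eqref{eq:app_B0_bound} and $K\le T$, then substitute $\eta=T^{-2/3}$ and $\gamma=\gamma_0T^{-5/6}$ term by term.

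The exponent bookkeeping is routine:
\begin{itemize}
\item $\frac{2\Delta}{\gamma T}=\frac{2\Delta}{\gamma_0}T^{-1/6}$;
\item $\frac{4b_0}{\eta T}\le 4G\,T^{-1/3}$;
\item $\frac{16L_0\gamma}{\eta}=16L_0\gamma_0T^{-5/6+2/3}=16L_0\gamma_0T^{-1/6}$;
\item $2L_0\gamma=2L_0\gamma_0T^{-5/6}$;
\item $4\sqrt\eta\,G=4GT^{-1/3}$;
\item $2\sqrt\eta\,B\gamma K\le 2B\gamma_0T^{-1/3-5/6+1}=2B\gamma_0T^{-1/6}$.
\end{itemize}
Collecting these gives exactly~\eqref{eq:app_smooth_bg0_final}: the $T^{-1/6}$ coefficient is $\frac{2\Delta}{\gamma_0}+16L_0\gamma_0+2B\gamma_0$, the two $G$-terms sum to $8GT^{-1/3}$, and the remainder is $2L_0\gamma_0T^{-5/6}$.

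For the complexity claim, all exponents are at least $1/6$, so $\E\|\nabla f(\widehat x)\|=\Ocal(T^{-1/6})$. Since each iteration uses one SFO call (plus the single initial call), $T=\Ocal(\varepsilon^{-6})$ suffices to reach $\varepsilon$-stationarity.

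There is no real obstacle. The only points needing care are the estimate $K\le T$ in the $B$-term, and recalling that $\eta\le1$ (true since $T\ge1$), which justifies the sums $\sum(1-\eta)^k\le1/\eta$ already used inside the master theorem.
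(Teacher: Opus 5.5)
Your proposal is correct and matches the paper's proof essentially verbatim: specialize Theorem~\ref{thm:app_nsgdm_l1_master} to $L_1=0$, use $b_0\le G$ and $K\le T$, and substitute $\eta=T^{-2/3}$, $\gamma=\gamma_0T^{-5/6}$ term by term. Your exponent bookkeeping also agrees with the paper, including $2\sqrt{\eta}\,B\gamma K\le 2B\gamma_0T^{-1/6}$ and the merging of the two $G$-terms into $8GT^{-1/3}$.
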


\begin{proof}
Set \(L_1=0\) in Theorem~\ref{thm:app_nsgdm_l1_master}. Then the
\(L_1^2\)-term vanishes and the conditions
\eqref{eq:app_l1_master_conditions} are vacuous. Using \(b_0\le G\) and
\(K\le T\),
\[
    \frac{2\Delta}{\gamma T}
    =
    \frac{2\Delta}{\gamma_0}T^{-1/6},
    \qquad
    \frac{4b_0}{\eta T}
    \le
    4GT^{-1/3},
\]
\[
    \frac{16L_0\gamma}{\eta}
    =
    16L_0\gamma_0T^{-1/6},
    \qquad
    2L_0\gamma
    =
    2L_0\gamma_0T^{-5/6},
\]
and
\[
    4\sqrt\eta
    \left(G+\frac{B\gamma K}{2}\right)
    \le
    4GT^{-1/3}
    +2B\gamma_0T^{-1/6}.
\]
Combining these estimates proves \eqref{eq:app_smooth_bg0_final}.
\end{proof}

\begin{corollary}[1-generalized smoothness under \(\mathsf{BG}\)-0]
\label{cor:app_nsgdm_sym1_bg0}
Assume Assumption~\ref{as:lsym_alpha} with \(\alpha=1\) holds. Choose
\[
    \eta=T^{-2/3},
    \qquad
    \gamma=\gamma_0T^{-5/6},
    \qquad
    0<\gamma_0\le \frac{1}{8L_1}.
\]
Then
\begin{align}
\E\|\nabla f(\widehat x)\|
&\le
\left(
    \frac{2\Delta}{\gamma_0}
    +16L_0\gamma_0
    +2B\gamma_0
\right)T^{-1/6}
+8GT^{-1/3}
+2L_0\gamma_0T^{-5/6}
\notag\\
&\quad
+64L_1^2\gamma_0T^{-1/6}
\left[
    4\Delta
    +(16L_0+2B)\gamma_0^2
    +8\gamma_0G T^{-1/6}
    +2L_0\gamma_0^2T^{-2/3}
\right].
\label{eq:app_sym1_bg0_final}
\end{align}
Consequently,
\[
    \E\|\nabla f(\widehat x)\|
    =
    \Ocal(T^{-1/6}),
\]
and the SFO complexity is \(\Ocal(\varepsilon^{-6})\).
\end{corollary}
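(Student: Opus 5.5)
The plan is to specialize Theorem~\ref{thm:app_nsgdm_l1_master} with $L_1>0$. Two things need to be checked: first, that the step-size conditions \eqref{eq:app_l1_master_conditions} hold under the prescribed schedule; second, that each term of \eqref{eq:app_l1_master_bound} has the power of $T$ claimed in \eqref{eq:app_sym1_bg0_final}. The hypotheses \eqref{eq:app_l1_grad_consequence}--\eqref{eq:app_l1_func_consequence} and the gradient-growth inequality \eqref{eq:app_grad_growth_l1} are the consequences of Assumption~\ref{as:lsym_alpha} with $\alpha=1$ quoted above from \citet{chen2023generalized} and \citet{khirirat2026error}. So the master theorem applies once its conditions are verified.

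\textbf{Checking the conditions.} With $\gamma=\gamma_0T^{-5/6}$ and $\gamma_0\le 1/(8L_1)$, we get $\gamma L_1\le \gamma_0L_1\le 1/8\le 1/2$. For the second condition, $\gamma^2T/\eta=\gamma_0^2T^{-5/3+1+2/3}=\gamma_0^2$. Hence
\[
\frac{32L_1^2\gamma^2T}{\eta}=32L_1^2\gamma_0^2\le \frac{32}{64}=\frac12 .
\]
This is exactly where the constant $1/(8L_1)$ comes from, and it is the only step where the schedule interacts with $L_1$.

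\textbf{The $L_1$-free terms.} The first line of \eqref{eq:app_l1_master_bound} does not involve $L_1$ and is literally the same expression handled in Corollary~\ref{cor:app_nsgdm_smooth_bg0}. I reuse that computation, with $b_0\le G$ from Lemma~\ref{lem:app_traj_control} and $K\le T$. It gives
\[
\Bigl(\tfrac{2\Delta}{\gamma_0}+16L_0\gamma_0+2B\gamma_0\Bigr)T^{-1/6}+8GT^{-1/3}+2L_0\gamma_0T^{-5/6}.
\]

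\textbf{The $L_1^2$ term.} The prefactor is $64L_1^2\gamma/\eta=64L_1^2\gamma_0T^{-1/6}$. Inside the bracket I treat each term separately, using $b_0\le G$ and $K\le T$:
\begin{itemize}[leftmargin=*,label={}]
\item $4\gamma b_0/\eta\le 4\gamma_0GT^{-1/6}$;
\item $16L_0\gamma^2T/\eta=16L_0\gamma_0^2$;
\item $4\gamma\sqrt{\eta}\,TG=4\gamma_0GT^{-5/6-1/3+1}=4\gamma_0GT^{-1/6}$;
\item $2B\gamma^2\sqrt\eta\,KT\le 2B\gamma_0^2T^{-5/3-1/3+2}=2B\gamma_0^2$;
\item $2L_0\gamma^2T=2L_0\gamma_0^2T^{-2/3}$.
\end{itemize}
The two $G$ terms add to $8\gamma_0GT^{-1/6}$. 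Together with $4\Delta$, the bracket becomes
\[
4\Delta+(16L_0+2B)\gamma_0^2+8\gamma_0GT^{-1/6}+2L_0\gamma_0^2T^{-2/3},
\]
which matches \eqref{eq:app_sym1_bg0_final}.

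\textbf{Conclusion.} The bracket is $\Ocal(1)$, and every other exponent is at least $1/6$, so $\E\|\nabla f(\widehat x)\|=\Ocal(T^{-1/6})$. Since each iteration uses a single SFO call, the complexity is $\Ocal(\varepsilon^{-6})$.

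The only real content is verifying \eqref{eq:app_l1_master_conditions}. Everything else is bookkeeping of exponents; the one point that needs care is using $K\le T$ in the $B$ terms.
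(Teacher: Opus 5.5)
Your proposal is correct and follows the paper's proof essentially line for line. You verify the two conditions in \eqref{eq:app_l1_master_conditions} via $\gamma L_1\le\gamma_0L_1\le 1/8$ and $32L_1^2\gamma^2T/\eta=32L_1^2\gamma_0^2\le 1/2$, then substitute the schedule into \eqref{eq:app_l1_master_bound} using $b_0\le G$ and $K\le T$; your term-by-term exponent bookkeeping for the bracket, including merging the two $G$-terms into $8\gamma_0GT^{-1/6}$, is simply a more explicit version of the paper's ``substitute and simplify'' step.
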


\begin{proof}
The first condition in \eqref{eq:app_l1_master_conditions} follows from
\[
    \gamma L_1
    \le
    \gamma_0L_1
    \le
    \frac18.
\]
The second follows because
\[
    \frac{32L_1^2\gamma^2T}{\eta}
    =
    32L_1^2\gamma_0^2
    \le
    \frac12.
\]
Substitute the schedule into \eqref{eq:app_l1_master_bound}, use
\(b_0\le G\) and \(K\le T\), and simplify. The non-coupling terms give the
first line of \eqref{eq:app_sym1_bg0_final}. The bracketed term in
\eqref{eq:app_l1_master_bound} is bounded by
\[
    4\Delta
    +(16L_0+2B)\gamma_0^2
    +8\gamma_0G T^{-1/6}
    +2L_0\gamma_0^2T^{-2/3},
\]
while its prefactor becomes
\[
    64L_1^2\gamma_0T^{-1/6}.
\]
This proves \eqref{eq:app_sym1_bg0_final}.
\end{proof}

\subsection{The case \texorpdfstring{$\mathcal L_{\rm sym}^*(\alpha)$}{Lsym alpha} with \texorpdfstring{$\alpha\in(0,1)$}{alpha in (0,1)}}
\label{app:nsgdm_alpha_case}

We now prove the sublinear generalized smoothness case. The proof gives a
clean explicit bound that implies the same \(\Ocal(T^{-1/6})\) rate as in
Theorem~\ref{thm:nsgdm_three_regimes}.

Under Assumption~\ref{as:lsym_alpha} with \(\alpha\in(0,1)\), we use the following consequences of the symmetric generalized smoothness condition (see~\citet{chen2023generalized}, Proposition 1). For
all \(w,w'\in\R^d\),
\begin{equation}
\label{eq:app_alpha_grad_diff}
\|\nabla f(w')-\nabla f(w)\|
\le
\|w'-w\|
\left(
    K_0
    +K_1\|\nabla f(w)\|^\alpha
    +K_2\|w'-w\|^{\alpha/(1-\alpha)}
\right),
\end{equation}
and
\begin{equation}
\label{eq:app_alpha_func_diff}
\begin{aligned}
    f(w')
    &\le
    f(w)+\langle\nabla f(w),w'-w\rangle \\
    &\quad
    +\frac12\|w'-w\|^2
    \left(
        K_0
        +K_1\|\nabla f(w)\|^\alpha
        +2K_2\|w'-w\|^{\alpha/(1-\alpha)}
    \right),
\end{aligned}
\end{equation}
where
\[
    K_0
    :=
    L_0\left(2^{\alpha^2/(1-\alpha)}+1\right),
    \qquad
    K_1
    :=
    L_1\,2^{\alpha^2/(1-\alpha)}3^\alpha,
\]
and
\[
    K_2
    :=
    L_1^{1/(1-\alpha)}
    2^{\alpha^2/(1-\alpha)}
    3^\alpha
    (1-\alpha)^{\alpha/(1-\alpha)}.
\]
Define
\[
    R_\alpha
    :=
    \frac{(1-\alpha)K_1}{2}
    (2\alpha K_1)^{\alpha/(1-\alpha)},
    \qquad
    C_\alpha
    :=
    K_2+R_\alpha,
\]
and
\[
    \widetilde R_\alpha
    :=
    (1-\alpha)K_1(8\alpha K_1)^{\alpha/(1-\alpha)},
    \qquad
    \widetilde C_\alpha
    :=
    K_0+K_2+\widetilde R_\alpha.
\]

\begin{lemma}[One-step descent under \(\mathcal L_{\rm sym}^*(\alpha)\)]
\label{lem:app_alpha_onestep}
Assume \(\alpha\in(0,1)\), \(\gamma\le1\), and let
\[
    e^k:=v^k-\nabla f(x^k).
\]
Then
\begin{equation}
\label{eq:app_alpha_descent_pathwise}
    f(x^{k+1})
    \le
    f(x^k)
    -\frac{3\gamma}{4}\|\nabla f(x^k)\|
    +2\gamma\|e^k\|
    +\frac{K_0}{2}\gamma^2
    +C_\alpha\gamma^{(2-\alpha)/(1-\alpha)}.
\end{equation}
Consequently,
\begin{equation}
\label{eq:app_alpha_descent_recursion}
    \Delta^{k+1}
    \le
    \Delta^k
    -\frac{3\gamma}{4}g_k
    +2\gamma b_k
    +\frac{K_0}{2}\gamma^2
    +C_\alpha\gamma^{(2-\alpha)/(1-\alpha)}.
\end{equation}
\end{lemma}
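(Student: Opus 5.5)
The plan is to combine the function-value consequence \eqref{eq:app_alpha_func_diff} of Assumption~\ref{as:lsym_alpha} with the direction-error inequality of Lemma~\ref{lem:app_direction_error}. The fractional term $\|\nabla f(x^k)\|^\alpha$ will then be absorbed via Lemma~\ref{lem:app_young_alpha}.

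\textbf{Step 1 (descent inequality).} Apply \eqref{eq:app_alpha_func_diff} with $w=x^k$ and $w'=x^{k+1}=x^k-\gamma d^k$, where $d^k=v^k/\|v^k\|$ (or $0$). Since $\|d^k\|\le1$, we have $\|w'-w\|\le\gamma$. Every term inside the bracket in \eqref{eq:app_alpha_func_diff} is nondecreasing in $\|w'-w\|$, as is the prefactor $\frac12\|w'-w\|^2$, so we may replace $\|w'-w\|$ by $\gamma$ throughout. This gives
\[
f(x^{k+1})\le f(x^k)-\gamma\langle\nabla f(x^k),d^k\rangle+\frac{\gamma^2}{2}\Bigl(K_0+K_1\|\nabla f(x^k)\|^\alpha+2K_2\gamma^{\alpha/(1-\alpha)}\Bigr).
\]
The $K_2$ term equals $K_2\gamma^{2+\alpha/(1-\alpha)}=K_2\gamma^{(2-\alpha)/(1-\alpha)}$, since $2+\frac{\alpha}{1-\alpha}=\frac{2-\alpha}{1-\alpha}$.

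\textbf{Step 2 (inner product).} Apply Lemma~\ref{lem:app_direction_error} with $a=\nabla f(x^k)$ and $e=e^k$, so that $a+e=v^k$. This covers the degenerate case $v^k=0$ by the paper's convention and yields
\[
-\gamma\langle\nabla f(x^k),d^k\rangle\le-\gamma\|\nabla f(x^k)\|+2\gamma\|e^k\|.
\]

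\textbf{Step 3 (the fractional term).} This is the only nontrivial step. I need to bound $\frac{K_1\gamma^2}{2}\|\nabla f(x^k)\|^\alpha$ by $\frac{\gamma}{4}\|\nabla f(x^k)\|$ plus a remainder of order $\gamma^{(2-\alpha)/(1-\alpha)}$.

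Use Lemma~\ref{lem:app_young_alpha} with $\rho$ chosen so that the linear part has the right coefficient:
\[
\alpha\rho\cdot\frac{K_1\gamma^2}{2}=\frac{\gamma}{4},
\qquad\text{i.e.}\qquad
\rho=\frac{1}{2\alpha K_1\gamma}.
\]
The remainder is then
\[
\frac{K_1\gamma^2}{2}(1-\alpha)(2\alpha K_1\gamma)^{\alpha/(1-\alpha)}=R_\alpha\,\gamma^{2+\alpha/(1-\alpha)}=R_\alpha\gamma^{(2-\alpha)/(1-\alpha)}.
\]
This matches the definition $R_\alpha=\frac{(1-\alpha)K_1}{2}(2\alpha K_1)^{\alpha/(1-\alpha)}$. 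Adding it to the $K_2$ term produces $C_\alpha=K_2+R_\alpha$.

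I will check the exponent bookkeeping here carefully, since it is the main place an error could slip in. The hypothesis $\gamma\le1$ is not essential to this argument; it only harmonizes powers of $\gamma$ if needed.

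\textbf{Step 4 (collect and take expectations).} Combining Steps 1–3, the gradient terms give $-\gamma\|\nabla f(x^k)\|+\frac{\gamma}{4}\|\nabla f(x^k)\|=-\frac{3\gamma}{4}\|\nabla f(x^k)\|$, which is exactly \eqref{eq:app_alpha_descent_pathwise}. Subtract $f^{\inf}$ from both sides and take total expectations. With $\Delta^k=\E[f(x^k)-f^{\inf}]$, $g_k=\E\|\nabla f(x^k)\|$ and $b_k=\E\|e^k\|$, this yields \eqref{eq:app_alpha_descent_recursion}.
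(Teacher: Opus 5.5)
Your proposal is correct and matches the paper's proof step for step. Both apply \eqref{eq:app_alpha_func_diff} with $\|x^{k+1}-x^k\|\le\gamma$, then the direction-error lemma with $a=\nabla f(x^k)$ and $e=e^k$, then Young's inequality with $\rho=(2\alpha K_1\gamma)^{-1}$, which yields the remainder $R_\alpha\gamma^{(2-\alpha)/(1-\alpha)}$ and hence $C_\alpha=K_2+R_\alpha$. You are also right that $\gamma\le1$ is not used in this lemma; the paper only needs it in the subsequent gradient-drift bound.
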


\begin{proof}
Apply \eqref{eq:app_alpha_func_diff} with \(w=x^k\) and
\(w'=x^{k+1}\). Since \(\|x^{k+1}-x^k\|\le\gamma\),
\[
\begin{aligned}
    f(x^{k+1})
    &\le
    f(x^k)
    -\gamma\langle\nabla f(x^k),d^k\rangle
    +\frac{K_0}{2}\gamma^2 \\
    &\quad
    +\frac{K_1}{2}\gamma^2\|\nabla f(x^k)\|^\alpha
    +K_2\gamma^{(2-\alpha)/(1-\alpha)}.
\end{aligned}
\]
Lemma~\ref{lem:app_direction_error} gives
\[
    -\gamma\langle\nabla f(x^k),d^k\rangle
    \le
    -\gamma\|\nabla f(x^k)\|+2\gamma\|e^k\|.
\]
Using Lemma~\ref{lem:app_young_alpha} with
\[
    \rho=(2\alpha K_1\gamma)^{-1},
\]
we obtain
\[
    \frac{K_1}{2}\gamma^2\|\nabla f(x^k)\|^\alpha
    \le
    \frac{\gamma}{4}\|\nabla f(x^k)\|
    +R_\alpha\gamma^{(2-\alpha)/(1-\alpha)}.
\]
Combining the last two displays proves
\eqref{eq:app_alpha_descent_pathwise}. Taking expectations gives
\eqref{eq:app_alpha_descent_recursion}.
\end{proof}

\begin{lemma}[Gradient-drift bound]
\label{lem:app_alpha_drift}
Under the assumptions of Lemma~\ref{lem:app_alpha_onestep}, for every \(k\),
\begin{equation}
\label{eq:app_alpha_drift_bound}
    \|\nabla f(x^{k+1})-\nabla f(x^k)\|
    \le
    \widetilde C_\alpha\gamma
    +\frac{\gamma}{8}\|\nabla f(x^k)\|.
\end{equation}
\end{lemma}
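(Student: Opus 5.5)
We apply the gradient-difference consequence \eqref{eq:app_alpha_grad_diff} with $w=x^k$ and $w'=x^{k+1}$. Lemma~\ref{lem:app_traj_control} gives $\|x^{k+1}-x^k\|\le\gamma$, and the bracket in \eqref{eq:app_alpha_grad_diff} is nondecreasing in $\|w'-w\|$. Hence
\[
    \|\nabla f(x^{k+1})-\nabla f(x^k)\|
    \le
    K_0\gamma+K_1\gamma\|\nabla f(x^k)\|^\alpha+K_2\gamma^{1+\alpha/(1-\alpha)}
    =
    K_0\gamma+K_1\gamma\|\nabla f(x^k)\|^\alpha+K_2\gamma^{1/(1-\alpha)}.
\]
Since $\gamma\le1$ and $1/(1-\alpha)>1$, we have $\gamma^{1/(1-\alpha)}\le\gamma$. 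The last term is therefore at most $K_2\gamma$.

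The key step is the fractional term $K_1\gamma\|\nabla f(x^k)\|^\alpha$, which we linearize with Lemma~\ref{lem:app_young_alpha}. We want the coefficient of $\|\nabla f(x^k)\|$ to be exactly $\gamma/8$, so we require $K_1\alpha\rho=1/8$, i.e. $\rho=(8\alpha K_1)^{-1}$. This $\rho$ is independent of $\gamma$, unlike the choice in Lemma~\ref{lem:app_alpha_onestep}. Substituting, the remainder is
\[
    K_1\gamma(1-\alpha)\rho^{-\alpha/(1-\alpha)}=(1-\alpha)K_1(8\alpha K_1)^{\alpha/(1-\alpha)}\gamma=\widetilde R_\alpha\gamma .
\]
This gives $K_1\gamma\|\nabla f(x^k)\|^\alpha\le \frac{\gamma}{8}\|\nabla f(x^k)\|+\widetilde R_\alpha\gamma$.

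Adding the three pieces gives $(K_0+K_2+\widetilde R_\alpha)\gamma+\frac{\gamma}{8}\|\nabla f(x^k)\|=\widetilde C_\alpha\gamma+\frac{\gamma}{8}\|\nabla f(x^k)\|$, which is \eqref{eq:app_alpha_drift_bound}.

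The only delicate points are two checks. First, \eqref{eq:app_alpha_grad_diff} must be applied with the gradient evaluated at the base point $x^k$, not at $x^{k+1}$. Second, we need $\gamma\le1$ to collapse $\gamma^{1/(1-\alpha)}$ to $\gamma$. Both are available from the hypotheses of Lemma~\ref{lem:app_alpha_onestep}. Everything else is bookkeeping of constants to match the definition of $\widetilde C_\alpha$.
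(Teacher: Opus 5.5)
Your proposal is correct and follows the paper's proof step for step. You apply \eqref{eq:app_alpha_grad_diff} with $\|x^{k+1}-x^k\|\le\gamma$, use $\gamma\le1$ to bound $\gamma^{1/(1-\alpha)}$ by $\gamma$, and linearize the fractional term via Lemma~\ref{lem:app_young_alpha} with $\rho=(8\alpha K_1)^{-1}$, which yields exactly the remainder $\widetilde R_\alpha\gamma$.
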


\begin{proof}
By \eqref{eq:app_alpha_grad_diff} and \(\|x^{k+1}-x^k\|\le\gamma\),
\[
    \|\nabla f(x^{k+1})-\nabla f(x^k)\|
    \le
    K_0\gamma
    +K_1\gamma\|\nabla f(x^k)\|^\alpha
    +K_2\gamma^{1/(1-\alpha)}.
\]
Since \(\gamma\le1\), we have
\[
    \gamma^{1/(1-\alpha)}\le\gamma.
\]
Applying Lemma~\ref{lem:app_young_alpha} with
\[
    \rho=(8\alpha K_1)^{-1}
\]
gives
\[
    K_1\gamma\|\nabla f(x^k)\|^\alpha
    \le
    \frac{\gamma}{8}\|\nabla f(x^k)\|
    +\widetilde R_\alpha\gamma.
\]
Substitution proves \eqref{eq:app_alpha_drift_bound}.
\end{proof}

\begin{theorem}[Single-sample master bound under \(\mathcal L_{\rm sym}^*(\alpha)\)]
\label{thm:app_alpha_master}
Suppose Assumptions~\ref{as:lower}, \ref{as:bg0_oracle}, and
\ref{as:lsym_alpha} hold with fixed \(\alpha\in(0,1)\). If
\[
    \gamma\le1,
    \qquad
    \gamma\le\eta,
\]
then
\begin{align}
\E\|\nabla f(\widehat x)\|
&\le
\frac{2\Delta}{\gamma T}
+\frac{4b_0}{\eta T}
+\frac{4\widetilde C_\alpha\gamma}{\eta}
+4\sqrt\eta
\left(G+\frac{B\gamma K}{2}\right)
\notag\\
&\quad
+K_0\gamma
+2C_\alpha\gamma^{1/(1-\alpha)}.
\label{eq:app_alpha_master_bound}
\end{align}
Moreover, \(b_0\le G\).
\end{theorem}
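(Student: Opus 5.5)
The plan follows the template of Theorem~\ref{thm:app_nsgdm_l1_master}, with one simplification. The drift bound of Lemma~\ref{lem:app_alpha_drift} already has its gradient-dependent part expressed through $\|\nabla f(x^k)\|$ rather than through suboptimality. That part can therefore be absorbed directly into the descent budget, with no need to control $S_\Delta(K)$.

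\textbf{Step 1: error recursion.} Set $e^k=v^k-\nabla f(x^k)$, $D_k=\nabla f(x^k)-\nabla f(x^{k+1})$ and $\zeta_{k+1}=\nabla f(x^{k+1};\xi^{k+1})-\nabla f(x^{k+1})$. As before, $e^{k+1}=(1-\eta)(e^k+D_k)+\eta\zeta_{k+1}$, which unrolls to
\[
e^k=(1-\eta)^ke^0+\sum_{t=0}^{k-1}(1-\eta)^{k-t}D_t+\eta\sum_{t=0}^{k-1}(1-\eta)^{k-1-t}\zeta_{t+1}.
\]
I bound the three terms separately.
\begin{itemize}
\item The martingale term is handled exactly as in Step~2 of Theorem~\ref{thm:app_nsgdm_l1_master}, using Lemma~\ref{lem:app_traj_control}; it contributes $\sqrt\eta(G+Bk\gamma)$.
\item For the drift term, Lemma~\ref{lem:app_alpha_drift} gives $\E\|D_t\|\le \widetilde C_\alpha\gamma+\frac{\gamma}{8}g_t$.
\end{itemize}
Hence
\[
b_k\le(1-\eta)^kb_0+\frac{\widetilde C_\alpha\gamma}{\eta}+\frac{\gamma}{8}\sum_{t=0}^{k-1}(1-\eta)^{k-t}g_t+\sqrt\eta(G+Bk\gamma).
\]
Summing over $k=0,\dots,K$ and exchanging the double sum (the geometric weights sum to at most $1/\eta$) yields
\[
S_b(K)\le \frac{b_0}{\eta}+\frac{\widetilde C_\alpha\gamma T}{\eta}+\frac{\gamma}{8\eta}\sum_{k=0}^K g_k+\sqrt\eta\Big(TG+\frac{B\gamma KT}{2}\Big),
\]
where the last term uses $\sum_k k\le KT/2$.

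\textbf{Step 2: combine with descent.} Summing \eqref{eq:app_alpha_descent_recursion} over $k$ and using $\Delta^{K+1}\ge0$ gives
\[
\frac{3\gamma}{4}\sum_k g_k\le \Delta+2\gamma S_b(K)+\frac{K_0}{2}\gamma^2T+C_\alpha\gamma^{(2-\alpha)/(1-\alpha)}T.
\]
Substituting the bound on $S_b(K)$ produces a gradient term $\frac{\gamma^2}{4\eta}\sum_k g_k$ on the right. The hypothesis $\gamma\le\eta$ is used exactly here: it makes this term at most $\frac{\gamma}{4}\sum_k g_k$, which is absorbed into the left side, leaving $\frac{\gamma}{2}\sum_k g_k$.

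\textbf{Step 3: normalize.} Divide by $\gamma T/2$ and apply \eqref{eq:app_uniform_output_identity}. This gives
\[
\frac{2\Delta}{\gamma T}+\frac{4b_0}{\eta T}+\frac{4\widetilde C_\alpha\gamma}{\eta}+4\sqrt\eta\Big(G+\frac{B\gamma K}{2}\Big)+K_0\gamma+2C_\alpha\gamma^{\alpha/(1-\alpha)}.
\]
The last exponent follows from $\gamma^{(2-\alpha)/(1-\alpha)}/\gamma=\gamma^{1/(1-\alpha)}$ and then another division by $\gamma$: $(2-\alpha)/(1-\alpha)-1=1/(1-\alpha)$, divided by $\gamma$ once more gives $\alpha/(1-\alpha)$. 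The stated bound has $\gamma^{1/(1-\alpha)}$, so I need to recheck which division applies. Carefully, $\frac{2}{\gamma T}\cdot C_\alpha\gamma^{(2-\alpha)/(1-\alpha)}T=2C_\alpha\gamma^{(2-\alpha)/(1-\alpha)-1}=2C_\alpha\gamma^{1/(1-\alpha)}$, which matches the statement. Finally, $b_0\le G$ is \eqref{eq:app_B0_bound}.

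\textbf{Main obstacle.} The only delicate point is the self-referential gradient term in Step~2. The $\frac{\gamma}{8}$ drift coefficient, the factor $2$ from the direction-error lemma, and the $1/\eta$ from the geometric sums must combine to a coefficient below $\frac{3\gamma}{4}$. With $\gamma\le\eta$ this coefficient is $\frac{\gamma}{4}$, which leaves the $\frac{\gamma}{2}$ margin. Everything else is bookkeeping.
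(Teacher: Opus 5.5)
Your proposal is correct and follows the paper's proof essentially line for line. It uses the same unrolled error recursion with the drift bound of Lemma~\ref{lem:app_alpha_drift} and the martingale estimate, the same summed bound on $S_b(K)$, and the same absorption of the $\frac{\gamma^2}{4\eta}\sum_k g_k$ feedback term via $\gamma\le\eta$ before dividing by $\gamma T/2$.

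The one flaw is in Step~3, which should be cleaned up. The displayed exponent $\alpha/(1-\alpha)$ and the remark about dividing by $\gamma$ ``once more'' are wrong. There is only a single division by $\gamma$, which gives $\gamma^{(2-\alpha)/(1-\alpha)-1}=\gamma^{1/(1-\alpha)}$, as your own recheck confirms.
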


\begin{proof}
Let
\[
    E_k:=v^k-\nabla f(x^k),
    \qquad
    \Delta_k:=\nabla f(x^k)-\nabla f(x^{k+1}),
\]
and
\[
    \zeta_{k+1}
    :=
    \nabla f(x^{k+1};\xi^{k+1})-\nabla f(x^{k+1}).
\]
Unrolling the estimator recursion gives
\[
    E_k
    =
    (1-\eta)^kE_0
    +\sum_{t=0}^{k-1}(1-\eta)^{k-t}\Delta_t
    +\eta\sum_{t=0}^{k-1}(1-\eta)^{k-1-t}\zeta_{t+1}.
\]
By Lemma~\ref{lem:app_alpha_drift},
\[
    \|\Delta_t\|
    \le
    \widetilde C_\alpha\gamma
    +\frac{\gamma}{8}\|\nabla f(x^t)\|.
\]
The same martingale calculation as in Theorem~\ref{thm:app_nsgdm_l1_master}
gives
\[
    \E\left\|
    \eta\sum_{t=0}^{k-1}(1-\eta)^{k-1-t}\zeta_{t+1}
    \right\|
    \le
    \sqrt\eta(G+Bk\gamma).
\]
Therefore,
\begin{equation}
\label{eq:app_alpha_Bk}
    b_k
    \le
    (1-\eta)^kb_0
    +\frac{\widetilde C_\alpha\gamma}{\eta}
    +\frac{\gamma}{8}
    \sum_{t=0}^{k-1}(1-\eta)^{k-t}g_t
    +\sqrt\eta(G+Bk\gamma).
\end{equation}
Summing over \(k=0,\ldots,K\) yields
\begin{equation}
\label{eq:app_alpha_SB}
    S_b(K)
    \le
    \frac{b_0}{\eta}
    +\frac{\widetilde C_\alpha\gamma T}{\eta}
    +\frac{\gamma}{8\eta}\sum_{k=0}^K g_k
    +\sqrt\eta
    \left(TG+\frac{B\gamma KT}{2}\right).
\end{equation}

Next, summing \eqref{eq:app_alpha_descent_recursion} gives
\[
    \frac{3\gamma}{4}\sum_{k=0}^K g_k
    \le
    \Delta
    +2\gamma S_b(K)
    +\frac{K_0}{2}T\gamma^2
    +TC_\alpha\gamma^{(2-\alpha)/(1-\alpha)}.
\]
Substituting \eqref{eq:app_alpha_SB} gives
\[
    \gamma\left(\frac34-\frac{\gamma}{4\eta}\right)
    \sum_{k=0}^K g_k
    \le
    \Delta
    +\frac{2\gamma b_0}{\eta}
    +\frac{2\widetilde C_\alpha\gamma^2T}{\eta}
    +2\gamma\sqrt\eta
    \left(TG+\frac{B\gamma KT}{2}\right)
\]
\[
    \qquad
    +\frac{K_0}{2}T\gamma^2
    +TC_\alpha\gamma^{(2-\alpha)/(1-\alpha)}.
\]
Since \(\gamma\le\eta\), we have
\[
    \frac34-\frac{\gamma}{4\eta}\ge\frac12.
\]
Thus,
\[
    \frac{\gamma}{2}\sum_{k=0}^K g_k
    \le
    \Delta
    +\frac{2\gamma b_0}{\eta}
    +\frac{2\widetilde C_\alpha\gamma^2T}{\eta}
    +2\gamma\sqrt\eta
    \left(TG+\frac{B\gamma KT}{2}\right)
\]
\[
    \qquad
    +\frac{K_0}{2}T\gamma^2
    +TC_\alpha\gamma^{(2-\alpha)/(1-\alpha)}.
\]
Dividing by \(\gamma T/2\) and using
\eqref{eq:app_uniform_output_identity} yields
\eqref{eq:app_alpha_master_bound}. Finally, \(b_0\le G\) follows from
Lemma~\ref{lem:app_traj_control}.
\end{proof}

\begin{corollary}[\(\mathcal L_{\rm sym}^*(\alpha)\) smoothness under \(\mathsf{BG}\)-0]
\label{cor:app_alpha_bg0}
Suppose Assumptions~\ref{as:lower}, \ref{as:bg0_oracle}, and
\ref{as:lsym_alpha} hold with fixed \(\alpha\in(0,1)\). Choose
\[
    \eta=T^{-2/3},
    \qquad
    \gamma=\gamma_0T^{-5/6},
    \qquad
    0<\gamma_0\le1.
\]
Then
\begin{align}
\E\|\nabla f(\widehat x)\|
&\le
\left(
    \frac{2\Delta}{\gamma_0}
    +4\widetilde C_\alpha\gamma_0
    +2B\gamma_0
\right)T^{-1/6}
+8GT^{-1/3}
\notag\\
&\quad
+K_0\gamma_0T^{-5/6}
+2C_\alpha\gamma_0^{1/(1-\alpha)}
T^{-5/(6(1-\alpha))}.
\label{eq:app_alpha_bg0_final}
\end{align}
Consequently,
\[
    \E\|\nabla f(\widehat x)\|
    =
    \Ocal(T^{-1/6}),
\]
and the SFO complexity is \(\Ocal(\varepsilon^{-6})\).
\end{corollary}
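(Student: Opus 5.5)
The plan is to invoke Theorem~\ref{thm:app_alpha_master} directly and substitute the schedule. First I check its hypotheses. Since $T\ge1$ and $0<\gamma_0\le1$, we have $\gamma=\gamma_0T^{-5/6}\le1$. Also $\gamma=\gamma_0T^{-5/6}\le T^{-2/3}=\eta$, because $T^{-5/6}\le T^{-2/3}$ for $T\ge1$. So the master bound \eqref{eq:app_alpha_master_bound} applies, together with $b_0\le G$ from Lemma~\ref{lem:app_traj_control}.

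Next I evaluate each term of \eqref{eq:app_alpha_master_bound} under the schedule, using $K\le T$.
\begin{align*}
\frac{2\Delta}{\gamma T}&=\frac{2\Delta}{\gamma_0}T^{-1/6}, \qquad
\frac{4b_0}{\eta T}\le 4G\,T^{-1/3}, \qquad
\frac{4\widetilde C_\alpha\gamma}{\eta}=4\widetilde C_\alpha\gamma_0T^{-1/6},\\
4\sqrt\eta\Big(G+\frac{B\gamma K}{2}\Big)&\le 4GT^{-1/3}+2B\gamma_0T^{-1/3}T^{-5/6}T=4GT^{-1/3}+2B\gamma_0T^{-1/6},\\
K_0\gamma&=K_0\gamma_0T^{-5/6}, \qquad
2C_\alpha\gamma^{1/(1-\alpha)}=2C_\alpha\gamma_0^{1/(1-\alpha)}T^{-5/(6(1-\alpha))}.
\end{align*}
Grouping the $T^{-1/6}$ terms and merging the two $4GT^{-1/3}$ contributions into $8GT^{-1/3}$ gives exactly \eqref{eq:app_alpha_bg0_final}. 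The constants $K_0,K_2,C_\alpha,\widetilde C_\alpha$ match $\overline L_0,\overline L_2,C_\alpha,\widetilde C_\alpha$ in the statement. One caveat is that $R_\alpha$ here uses $(2\alpha K_1)^{\alpha/(1-\alpha)}$ times $K_1$. This equals $\tfrac{1-\alpha}{2}(2\alpha)^{\alpha/(1-\alpha)}K_1^{1/(1-\alpha)}$, so it agrees with the main-text definition.

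For the rate, every exponent is at least $1/6$: the exponents are $1/6$, $1/3$, $5/6$, and $5/(6(1-\alpha))>5/6$. Hence $\E\|\nabla f(\widehat x)\|=\Ocal(T^{-1/6})$. Each iteration uses one oracle call, so the algorithm needs $T=\Ocal(\varepsilon^{-6})$ calls.

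There is no real obstacle, since the analytical work is already in Theorem~\ref{thm:app_alpha_master}. The only points needing care are checking $\gamma\le\eta$ for all $T\ge1$, and the bookkeeping of the $B$ term, where $\sqrt\eta\,\gamma K\le\gamma_0T^{-1/6}$ produces the dominant $B$-dependence.
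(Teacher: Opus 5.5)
Your proposal is correct and matches the paper's proof: verify $\gamma\le1$ and $\gamma\le\eta$ from $0<\gamma_0\le1$ and $T\ge1$, invoke Theorem~\ref{thm:app_alpha_master} with $b_0\le G$ and $K\le T$, and substitute the schedule term by term, merging the two $4GT^{-1/3}$ terms. Your check that $R_\alpha$ equals $\tfrac{1-\alpha}{2}(2\alpha)^{\alpha/(1-\alpha)}K_1^{1/(1-\alpha)}$, so the appendix constants agree with the main-text definitions, is a consistency check the paper leaves implicit.
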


\begin{proof}
Because \(T\ge1\) and \(0<\gamma_0\le1\),
\[
    \gamma=\gamma_0T^{-5/6}\le1.
\]
Moreover,
\[
    \gamma=\gamma_0T^{-5/6}
    \le
    T^{-5/6}
    \le
    T^{-2/3}
    =
    \eta.
\]
Hence, the conditions of Theorem~\ref{thm:app_alpha_master} hold. Substituting
the schedule into \eqref{eq:app_alpha_master_bound}, using \(b_0\le G\) and
\(K\le T\), gives
\[
    \frac{2\Delta}{\gamma T}
    =
    \frac{2\Delta}{\gamma_0}T^{-1/6},
    \qquad
    \frac{4b_0}{\eta T}
    \le
    4GT^{-1/3},
\]
\[
    \frac{4\widetilde C_\alpha\gamma}{\eta}
    =
    4\widetilde C_\alpha\gamma_0T^{-1/6},
\]
and
\[
    4\sqrt\eta
    \left(G+\frac{B\gamma K}{2}\right)
    \le
    4GT^{-1/3}
    +2B\gamma_0T^{-1/6}.
\]
The remaining deterministic terms are
\[
    K_0\gamma
    =
    K_0\gamma_0T^{-5/6},
\]
and
\[
    2C_\alpha\gamma^{1/(1-\alpha)}
    =
    2C_\alpha\gamma_0^{1/(1-\alpha)}
    T^{-5/(6(1-\alpha))}.
\]
Combining these estimates proves \eqref{eq:app_alpha_bg0_final}. Since
\(\alpha\in(0,1)\) is fixed, all terms decay at least as fast as
\(T^{-1/6}\). Therefore,
\[
    \E\|\nabla f(\widehat x)\|
    =
    \Ocal(T^{-1/6}),
\]
which is equivalent to \(\Ocal(\varepsilon^{-6})\) SFO complexity.
\end{proof}

%%%%%%%%%%%%%%%%%%%%%%%%%%%%%%%%%%%%%%%%%%%%%%%%%%%%%%%%%%%%%%%%%%%%%%%%%%%%
%% Recovery of known rates for NSGDM (deterministic and bounded variance)
%%%%%%%%%%%%%%%%%%%%%%%%%%%%%%%%%%%%%%%%%%%%%%%%%%%%%%%%%%%%%%%%%%%%%%%%%%%%

\subsection{Recovery of bounded variance rates (\texorpdfstring{$B=0$}{B=0})}
\label{app:nsgdm_bv_recovery}

When the distance-growth parameter $B$ in Assumption~\ref{as:bg0_oracle}
vanishes, the $\mathsf{BG}$-0 condition reduces to bounded variance
$\E_\xi\|\nabla f(x;\xi)-\nabla f(x)\|^2\le G^2$, and the $\mathsf{NSGDM}$
complexity improves from $\Ocal(\varepsilon^{-6})$ to
$\Ocal(\varepsilon^{-4})$. The following corollaries formalize this for all
three smoothness regimes; in each case, they follow from the corresponding
master bound by substituting $B=0$ and the bounded variance schedule.

\begin{corollary}[Standard smoothness under bounded variance]
\label{cor:app_nsgdm_smooth_bv}
Assume $f$ is $L_0$-smooth (Assumption~\ref{as:l0_smooth}) and that
Assumptions~\ref{as:lower} and~\ref{as:bg0_oracle} hold with $B=0$.
Choose
\[
    \eta = T^{-1/2},
    \qquad
    \gamma = \gamma_0\,T^{-3/4},
\]
where $T:=K+1$ and $\gamma_0>0$. Then
\begin{equation}
\label{eq:app_smooth_bv_final}
    \E\|\nabla f(\widehat x)\|
    \le
    \left(
        \frac{2\Delta}{\gamma_0}
        + 16L_0\gamma_0
    \right) T^{-1/4}
    + 8G\,T^{-1/2}
    + 2L_0\gamma_0\,T^{-3/4}.
\end{equation}
Consequently,
$\E\|\nabla f(\widehat x)\| = \Ocal(T^{-1/4})$,
and the SFO complexity is $\Ocal(\varepsilon^{-4})$.
\end{corollary}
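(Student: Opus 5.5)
The plan is to obtain the corollary as a direct specialization of the single-sample master bound, Theorem~\ref{thm:app_nsgdm_l1_master}, in the same way Corollary~\ref{cor:app_nsgdm_smooth_bg0} was derived from it. No new analysis of the iterates is intended: the descent inequality \eqref{eq:app_l1_descent_recursion}, the unrolled estimator bound \eqref{eq:app_l1_Bk_bound}, and the summed bound \eqref{eq:app_l1_SB_bound} already hold for arbitrary $\eta\in(0,1]$ and $\gamma>0$. First I set $L_1=0$, which is legitimate under Assumption~\ref{as:l0_smooth}. Then the conditions \eqref{eq:app_l1_master_conditions} are vacuous and the whole $L_1^2$-bracket in \eqref{eq:app_l1_master_bound} disappears. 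Next I set $B=0$, which removes the term $\tfrac{B\gamma K}{2}$ inside the martingale contribution. What remains is
\[
\E\|\nabla f(\widehat x)\|\le \frac{2\Delta}{\gamma T}+\frac{4b_0}{\eta T}+\frac{16L_0\gamma}{\eta}+2L_0\gamma+4\sqrt\eta\, G ,
\]
and Lemma~\ref{lem:app_traj_control} supplies $b_0\le G$.

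I then substitute the bounded-variance schedule $\eta=T^{-1/2}$, $\gamma=\gamma_0T^{-3/4}$ term by term. This gives
\begin{itemize}
\item $\tfrac{2\Delta}{\gamma T}=\tfrac{2\Delta}{\gamma_0}T^{-1/4}$,
\item $\tfrac{16L_0\gamma}{\eta}=16L_0\gamma_0T^{-1/4}$,
\item $2L_0\gamma=2L_0\gamma_0T^{-3/4}$,
\item $\tfrac{4b_0}{\eta T}\le 4G\,T^{-1/2}$.
\end{itemize}
These already produce the bracketed $T^{-1/4}$ coefficient, the $2L_0\gamma_0T^{-3/4}$ term, and one half of the claimed $8G\,T^{-1/2}$. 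The other half must come from the martingale term $4\sqrt\eta\,G$.

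That martingale term is the step I expect to be the main obstacle. With $\eta=T^{-1/2}$ the naive substitution gives $4G\,T^{-1/4}$, not $4G\,T^{-1/2}$. So I would not simply plug in, but revisit the martingale estimate in Step~2 of the master proof. The first thing I would try is to exploit that, with $B=0$, the noise variance is uniformly bounded by $G^2$, and to track how the averaged error $\tfrac1T\sum_k b_k$ actually enters the final average. If this sharpening does not materialize, the fallback is to record the martingale contribution as $4G\,T^{-1/4}$. In that case I would state explicitly that it merges into the leading coefficient, and that the rate $\Ocal(T^{-1/4})$, and hence the $\Ocal(\varepsilon^{-4})$ SFO complexity, is unaffected.

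Either way, the conclusion $\E\|\nabla f(\widehat x)\|=\Ocal(T^{-1/4})$ follows because every term decays at least like $T^{-1/4}$. The complexity claim then follows by inverting $T^{-1/4}\le\varepsilon$, which gives $T=\Ocal(\varepsilon^{-4})$ iterations with one oracle call each.
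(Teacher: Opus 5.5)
Your route is the paper's own: specialize Theorem~\ref{thm:app_nsgdm_l1_master} with $L_1=0$ and $B=0$, use $b_0\le G$, and substitute $\eta=T^{-1/2}$, $\gamma=\gamma_0T^{-3/4}$. The obstacle you flag is real, and the fault lies in the statement, not in your derivation. The paper's proof itself records $4\sqrt{\eta}\,(G+B\gamma K/2)=4G\,T^{-1/4}$ and then claims that combining the estimates gives $8G\,T^{-1/2}$, which does not follow. The correct combination is $4G\,T^{-1/2}+4G\,T^{-1/4}\le 8G\,T^{-1/4}$. This is exactly what the paper writes in the proof of the $\alpha$-analogue, Corollary~\ref{cor:app_alpha_bv}, whose displayed statement carries the same error. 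In the $\mathsf{BG}$-0 version, Corollary~\ref{cor:app_nsgdm_smooth_bg0}, both $G$-terms scale as $T^{-1/3}$ and do sum to $8G\,T^{-1/3}$; under the bounded-variance schedule they no longer share an exponent.

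So drop the sharpening attempt. It cannot succeed, because the displayed inequality with $8G\,T^{-1/2}$ is false in general. Taking $\gamma_0=\sqrt{\Delta/(8L_0)}$, it would read $\E\|\nabla f(\widehat x)\|\le 8\sqrt{2\Delta L_0}\,T^{-1/4}+8G\,T^{-1/2}+\sqrt{\Delta L_0/2}\,T^{-3/4}$. That means $\Ocal\bigl((\Delta L_0)^2\varepsilon^{-4}+G^2\varepsilon^{-2}\bigr)$ oracle calls would suffice, which contradicts the $\Omega(\Delta L_0G^2\varepsilon^{-4})$ lower bound of~\citet{arjevani2023lower} whenever $G^2\gg\Delta L_0\gg\varepsilon^2$. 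The $\sqrt{\eta}\,G$ noise floor of the momentum average cannot be removed. Go directly to your fallback and state the corrected bound
\[
\E\|\nabla f(\widehat x)\|\le\Bigl(\frac{2\Delta}{\gamma_0}+16L_0\gamma_0+4G\Bigr)T^{-1/4}+4G\,T^{-1/2}+2L_0\gamma_0\,T^{-3/4}.
\]
The $\Ocal(T^{-1/4})$ rate and the $\Ocal(\varepsilon^{-4})$ SFO complexity then follow exactly as you describe.
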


\begin{proof}
Set $L_1=0$ and $B=0$ in Theorem~\ref{thm:app_nsgdm_l1_master}. The
$L_1^2$-coupling term vanishes and the conditions
\eqref{eq:app_l1_master_conditions} are vacuous. Using $b_0\le G$ and
$K\le T$, we compute
\[
    \frac{2\Delta}{\gamma T}
    = \frac{2\Delta}{\gamma_0}\,T^{-1/4},
    \qquad
    \frac{4b_0}{\eta T}
    \le 4G\,T^{-1/2},
\]
\[
    \frac{16L_0\gamma}{\eta}
    = 16L_0\gamma_0\,T^{-1/4},
    \qquad
    2L_0\gamma
    = 2L_0\gamma_0\,T^{-3/4},
\]
and, since $B=0$,
\[
    4\sqrt{\eta}\left(G + \frac{B\gamma K}{2}\right)
    = 4G\,T^{-1/4}.
\]
Combining these estimates proves \eqref{eq:app_smooth_bv_final}.
\end{proof}

\begin{corollary}[1-generalized smoothness under bounded variance]
\label{cor:app_nsgdm_sym1_bv}
Assume Assumption~\ref{as:lsym_alpha} holds with $\alpha=1$ and constants
$L_0, L_1>0$, and that Assumptions~\ref{as:lower} and~\ref{as:bg0_oracle}
hold with $B=0$. Choose
\[
    \eta = T^{-1/2},
    \qquad
    \gamma = \gamma_0\,T^{-3/4},
    \qquad
    0 < \gamma_0 \le \frac{1}{8L_1}.
\]
Then
\begin{align}
    \E\|\nabla f(\widehat x)\|
    &\le
    \left(
        \frac{2\Delta}{\gamma_0}
        + 16L_0\gamma_0
    \right) T^{-1/4}
    + 8G\,T^{-1/2}
    + 2L_0\gamma_0\,T^{-3/4}
    \notag\\
    &\quad
    + 64L_1^2\gamma_0\,T^{-1/4}
    \left[
        4\Delta
        + 16L_0\gamma_0^2
        + 8\gamma_0 G\,T^{-1/4}
        + 2L_0\gamma_0^2\,T^{-1/2}
    \right].
    \label{eq:app_sym1_bv_final}
\end{align}
Consequently,
$\E\|\nabla f(\widehat x)\| = \Ocal(T^{-1/4})$,
and the SFO complexity is $\Ocal(\varepsilon^{-4})$.
\end{corollary}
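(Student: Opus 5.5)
The plan is to specialize the master bound of Theorem~\ref{thm:app_nsgdm_l1_master} exactly as in the proofs of Corollaries~\ref{cor:app_nsgdm_sym1_bg0} and~\ref{cor:app_nsgdm_smooth_bv}. I set $B=0$ and keep $L_1>0$, with the schedule $\eta=T^{-1/2}$ and $\gamma=\gamma_0T^{-3/4}$. First I check the two conditions in \eqref{eq:app_l1_master_conditions}. Since $T\ge1$, we have $\gamma L_1\le\gamma_0L_1\le\tfrac18\le\tfrac12$. Moreover $32L_1^2\gamma^2T/\eta=32L_1^2\gamma_0^2T^{-3/2}T\,T^{1/2}=32L_1^2\gamma_0^2\le\tfrac12$, using $\gamma_0\le 1/(8L_1)$. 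Hence \eqref{eq:app_l1_master_bound} applies. I also use $b_0\le G$ from Lemma~\ref{lem:app_traj_control}, which holds for any $B$.

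Next I substitute the schedule into the non-coupling part of \eqref{eq:app_l1_master_bound}. This gives $2\Delta/(\gamma T)=\tfrac{2\Delta}{\gamma_0}T^{-1/4}$, $16L_0\gamma/\eta=16L_0\gamma_0T^{-1/4}$, and $2L_0\gamma=2L_0\gamma_0T^{-3/4}$. The $G$-dependent terms are $4b_0/(\eta T)\le 4GT^{-1/2}$ and the martingale term $4\sqrt\eta\,G$. These reproduce the first line of \eqref{eq:app_sym1_bv_final}, with the two $G$-terms collected as in Corollary~\ref{cor:app_nsgdm_smooth_bv}.

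For the coupling term, the prefactor is $64L_1^2\gamma/\eta=64L_1^2\gamma_0T^{-1/4}$. Inside the bracket I substitute term by term: $4\Delta$ stays; $4\gamma b_0/\eta\le4\gamma_0GT^{-1/4}$; $16L_0\gamma^2T/\eta=16L_0\gamma_0^2$; $2L_0\gamma^2T=2L_0\gamma_0^2T^{-1/2}$; the $B$-term vanishes; and the remaining $G$-term is $4\gamma\sqrt\eta\,TG$. Collecting the two $G$-contributions gives the entry $8\gamma_0GT^{-1/4}$, and this yields the bracket in \eqref{eq:app_sym1_bv_final}. Every resulting term is $\Ocal(T^{-1/4})$, so $\varepsilon$-stationarity requires $T=\Ocal(\varepsilon^{-4})$ single-sample iterations.

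The main obstacle is the bookkeeping of the two noise terms $4\sqrt\eta\,G$ and $4\gamma\sqrt\eta\,TG$. Under this schedule, direct substitution gives $4\sqrt\eta\,G=4GT^{-1/4}$ and $4\gamma\sqrt\eta\,TG=4\gamma_0G$. These are larger than the stated $GT^{-1/2}$ and $\gamma_0GT^{-1/4}$ scalings. The $T^{-1/2}$ martingale scaling of the $\mathsf{BG}$-0 schedule does not automatically transfer when $\eta=T^{-1/2}$. So this is the step I would scrutinize first. I would try to tighten the martingale estimate in Step~2 of Theorem~\ref{thm:app_nsgdm_l1_master}, namely the bound $\sqrt\eta(G+Bk\gamma)$. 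If no sharper bound is available, the stated constants in \eqref{eq:app_sym1_bv_final} must be adjusted. The $\Ocal(T^{-1/4})$ conclusion is unaffected either way.
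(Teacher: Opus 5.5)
Your route is the paper's own: apply Theorem~\ref{thm:app_nsgdm_l1_master} with $B=0$, check \eqref{eq:app_l1_master_conditions}, use $b_0\le G$, and substitute the schedule. Your check of the two conditions is correct.

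The computation in your last paragraph is also correct, and it exposes an error in the paper's proof rather than a weakness in your plan. The paper asserts $4\gamma\sqrt\eta\,TG=4\gamma_0GT^{-1/4}$. With $\gamma=\gamma_0T^{-3/4}$ and $\sqrt\eta=T^{-1/4}$, this term actually equals $4\gamma_0G$. The paper also writes the first-line noise terms $4b_0/(\eta T)+4\sqrt\eta\,G\le 4GT^{-1/2}+4GT^{-1/4}$ as $8GT^{-1/2}$. The same slip occurs in Corollary~\ref{cor:app_nsgdm_smooth_bv}. In the proof of Corollary~\ref{cor:app_alpha_bv} the paper even derives $\le 8GT^{-1/4}$ and then displays $8GT^{-1/2}$. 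These $G$-terms were evidently carried over from the $\mathsf{BG}$-0 schedule $\eta=T^{-2/3}$, $\gamma=\gamma_0T^{-5/6}$. There both first-line terms equal $4GT^{-1/3}$ and both bracket terms equal $4\gamma_0GT^{-1/6}$. Under $\eta=T^{-1/2}$ the exponents no longer coincide.

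Consequently, your second and third paragraphs are wrong as written where they say the substitution ``reproduces'' the first line and ``yields'' the bracket with $8\gamma_0GT^{-1/4}$. The hedge at the end should be resolved: the master bound gives
\[
\begin{aligned}
\E\|\nabla f(\widehat x)\|
&\le\Bigl(\tfrac{2\Delta}{\gamma_0}+16L_0\gamma_0+4G\Bigr)T^{-1/4}+4GT^{-1/2}+2L_0\gamma_0T^{-3/4}\\
&\quad+64L_1^2\gamma_0T^{-1/4}\Bigl[4\Delta+16L_0\gamma_0^2+4\gamma_0G+4\gamma_0GT^{-1/4}+2L_0\gamma_0^2T^{-1/2}\Bigr].
\end{aligned}
\]
This still gives $\Ocal(T^{-1/4})$ and $\Ocal(\varepsilon^{-4})$. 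Only these rate conclusions of the corollary are actually established.

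Sharpening the martingale estimate in Step~2 cannot recover the stated constants. Suppose the noise variance equals $G^2$, as for the paper's Gaussian wrapper with $B=0$. Then $\E\bigl\|\eta\sum_{t<k}(1-\eta)^{k-1-t}\zeta_{t+1}\bigr\|^2=\frac{\eta G^2}{2-\eta}\bigl(1-(1-\eta)^{2k}\bigr)$, so the bound $\sqrt\eta\,G=GT^{-1/4}$ is attained up to a constant.

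Heuristically, the displayed inequality itself is false. Take $f(x)=\tfrac{L_0}{2}x^2$ on $\R$ with $x^0=0$ (so $\Delta=0$), $L_1$ tiny, and $L_0\gamma_0\ll G$. Near the minimizer the update direction is dominated by noise, so $x^k$ behaves like a random walk correlated over $1/\eta$ steps with a weak restoring drift. It settles where $|\nabla f|\asymp\sqrt{L_0\gamma_0G}\,T^{-1/4}$, which exceeds $16L_0\gamma_0T^{-1/4}+8GT^{-1/2}$ once $T$ is large.
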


\begin{proof}
The first condition in \eqref{eq:app_l1_master_conditions} holds because
$\gamma L_1 \le \gamma_0 L_1 \le 1/8$. For the second,
\[
    \frac{32L_1^2\gamma^2 T}{\eta}
    = 32L_1^2\gamma_0^2
    \le \frac{1}{2}.
\]
Substitute the schedule into \eqref{eq:app_l1_master_bound} with $B=0$. The
non-coupling terms give the first line. For the bracketed expression, using
$B=0$, $b_0\le G$, and $K\le T$:
\[
    4\Delta
    + \frac{4\gamma b_0}{\eta}
    + \frac{16L_0\gamma^2 T}{\eta}
    + 4\gamma\sqrt{\eta}\,TG
    + 2L_0\gamma^2 T
\]
\[
    = 4\Delta
    + 4\gamma_0 G\,T^{-1/4}
    + 16L_0\gamma_0^2
    + 4\gamma_0 G\,T^{-1/4}
    + 2L_0\gamma_0^2\,T^{-1/2}.
\]
The two $G$-terms combine to $8\gamma_0 G\,T^{-1/4}$, yielding the stated
bracket. The prefactor is $64L_1^2\gamma_0\,T^{-1/4}$. Since the bracket is
$\Ocal(1)$, the entire coupling term is $\Ocal(T^{-1/4})$.
\end{proof}

\begin{corollary}[$\Lcal_{\rm sym}^*(\alpha)$ smoothness under bounded variance]
\label{cor:app_alpha_bv}
Suppose Assumptions~\ref{as:lower}, \ref{as:bg0_oracle} (with $B=0$), and
\ref{as:lsym_alpha} hold with fixed $\alpha\in(0,1)$. Choose
\[
    \eta = T^{-1/2},
    \qquad
    \gamma = \gamma_0\,T^{-3/4},
    \qquad
    0 < \gamma_0 \le 1.
\]
Then
\begin{align}
    \E\|\nabla f(\widehat x)\|
    &\le
    \left(
        \frac{2\Delta}{\gamma_0}
        + 4\widetilde C_\alpha\gamma_0
    \right) T^{-1/4}
    + 8G\,T^{-1/2}
    \notag\\
    &\quad
    + K_0\gamma_0\,T^{-3/4}
    + 2C_\alpha\gamma_0^{1/(1-\alpha)}\,T^{-3/(4(1-\alpha))}.
    \label{eq:app_alpha_bv_final}
\end{align}
Consequently,
$\E\|\nabla f(\widehat x)\| = \Ocal(T^{-1/4})$,
and the SFO complexity is $\Ocal(\varepsilon^{-4})$.
\end{corollary}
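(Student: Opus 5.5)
The plan is to read off the bound from the master estimate of Theorem~\ref{thm:app_alpha_master} with $B=0$, exactly as Corollary~\ref{cor:app_alpha_bg0} was obtained from it. \textbf{I expect this to reproduce the stated inequality except for the $G$-term, which I believe will appear as $4G\,T^{-1/4}$ rather than $8G\,T^{-1/2}$.}

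\textbf{Step 1: hypotheses of the master bound.} With $T\ge1$ and $0<\gamma_0\le1$ we get $\gamma=\gamma_0T^{-3/4}\le1$. We also get $\gamma\le T^{-3/4}\le T^{-1/2}=\eta$. Hence Theorem~\ref{thm:app_alpha_master} applies, and it also gives $b_0\le G$.

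\textbf{Step 2: substitution.} Plug $B=0$, $\eta=T^{-1/2}$ and $\gamma=\gamma_0T^{-3/4}$ into \eqref{eq:app_alpha_master_bound}, term by term:
\begin{itemize}
\item $\frac{2\Delta}{\gamma T}=\frac{2\Delta}{\gamma_0}T^{-1/4}$;
\item $\frac{4b_0}{\eta T}\le 4G\,T^{-1/2}$;
\item $\frac{4\widetilde C_\alpha\gamma}{\eta}=4\widetilde C_\alpha\gamma_0T^{-1/4}$;
\item $K_0\gamma=K_0\gamma_0T^{-3/4}$;
\item $2C_\alpha\gamma^{1/(1-\alpha)}=2C_\alpha\gamma_0^{1/(1-\alpha)}T^{-3/(4(1-\alpha))}$, whose exponent exceeds $3/4$.
\end{itemize}
Collecting the $T^{-1/4}$ terms gives the bracket $\bigl(\frac{2\Delta}{\gamma_0}+4\widetilde C_\alpha\gamma_0\bigr)T^{-1/4}$, and the last two terms match the statement.

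\textbf{Step 3: the main obstacle, the martingale term.} With $B=0$, the master bound contributes $4\sqrt\eta\,G=4G\,T^{-1/4}$. Together with the $T^{-1/2}$ piece $4G\,T^{-1/2}$ from $b_0$, it would give the stated $8G\,T^{-1/2}$ only if this contribution were itself $\Ocal(T^{-1/2})$.

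My first attempt at that would be to sharpen the martingale estimate in the proof of Theorem~\ref{thm:app_alpha_master}. That estimate is $\E\|\eta\sum_t(1-\eta)^{k-1-t}\zeta_{t+1}\|\le\sqrt\eta\,G$, and it is tight in order. The stationary variance of the momentum average is of order $\eta G^2$, so its contribution to $\E\|\nabla f(\widehat x)\|$ cannot be pushed below $G\,T^{-1/4}$ under this schedule.

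A lower-bound sanity check points the same way. The claimed right-hand side, optimized over $\gamma_0$, is independent of $G$ at order $T^{-1/4}$. That conflicts with the classical $\Omega\bigl((\Delta L_0G^2/T)^{1/4}\bigr)$ bounded-variance lower bound for large $G$.

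\textbf{Conclusion of the plan.} I would therefore carry out Steps 1--2 and record the $G$-term honestly. The result would be
$\bigl(\frac{2\Delta}{\gamma_0}+4\widetilde C_\alpha\gamma_0+4G\bigr)T^{-1/4}+4G\,T^{-1/2}+K_0\gamma_0T^{-3/4}+2C_\alpha\gamma_0^{1/(1-\alpha)}T^{-3/(4(1-\alpha))}$.
This still yields $\E\|\nabla f(\widehat x)\|=\Ocal(T^{-1/4})$ and the $\Ocal(\varepsilon^{-4})$ complexity. I would flag that the literal $8G\,T^{-1/2}$ form needs this correction, unless a genuinely new variance argument is found, which I do not expect.
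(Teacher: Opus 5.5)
Your proposal is correct and follows the paper's own route: check $\gamma\le 1$ and $\gamma\le\eta$, then substitute $B=0$, $\eta=T^{-1/2}$, $\gamma=\gamma_0T^{-3/4}$ into the master bound of Theorem~\ref{thm:app_alpha_master} term by term. Your flag on the $G$-term is also right. The paper's proof itself computes $4\sqrt{\eta}\,G=4G\,T^{-1/4}$ and concludes $4G\,T^{-1/2}+4G\,T^{-1/4}\le 8G\,T^{-1/4}$, so the displayed $8G\,T^{-1/2}$ is a typo for $8G\,T^{-1/4}$ (your split form is marginally sharper), and the $\Ocal(T^{-1/4})$ rate and $\Ocal(\varepsilon^{-4})$ complexity are unaffected.
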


\begin{proof}
Since $\gamma_0\le 1$ and $T\ge 1$, we have $\gamma\le 1$ and $\gamma\le\eta$.
The conditions of Theorem~\ref{thm:app_alpha_master} hold. Substituting the
schedule into \eqref{eq:app_alpha_master_bound} with $B=0$ and using $b_0\le G$,
\[
    \frac{2\Delta}{\gamma T}
    = \frac{2\Delta}{\gamma_0}\,T^{-1/4},
    \qquad
    \frac{4b_0}{\eta T}
    \le 4G\,T^{-1/2},
\]
\[
    \frac{4\widetilde C_\alpha\gamma}{\eta}
    = 4\widetilde C_\alpha\gamma_0\,T^{-1/4},
\]
and, since $B=0$,
\[
    4\sqrt{\eta}\left(G + \frac{B\gamma K}{2}\right)
    = 4G\,T^{-1/4}.
\]
The remaining deterministic terms are
\[
    K_0\gamma = K_0\gamma_0\,T^{-3/4},
    \qquad
    2C_\alpha\gamma^{1/(1-\alpha)}
    = 2C_\alpha\gamma_0^{1/(1-\alpha)}\,T^{-3/(4(1-\alpha))}.
\]
For the $G$-terms, $4G\,T^{-1/2} + 4G\,T^{-1/4} \le 8G\,T^{-1/4}$ when $T\ge 1$.
Since $\alpha\in(0,1)$ is fixed, $3/(4(1-\alpha)) > 3/4 > 1/4$, so every
exponent on the right-hand side is at least $1/4$. This proves
\eqref{eq:app_alpha_bv_final} and the $\Ocal(T^{-1/4})$ rate.
\end{proof}

\subsection{Recovery of deterministic rates (\texorpdfstring{$B=G=0$}{B=G=0})}
\label{app:nsgdm_det_recovery}

When the oracle is deterministic ($B=G=0$), we have $v^k=\nabla f(x^k)$ for
every $k$ (since $\eta=1$ is natural and the oracle returns the exact
gradient). Thus $b_k=0$ for all $k$, and the stochastic error analysis
simplifies to a pure descent argument. We recover the classical
$\Ocal(\varepsilon^{-2})$ rate for all smoothness regimes.

\begin{corollary}[Standard smoothness, deterministic]
\label{cor:app_nsgdm_smooth_det}
Assume $f$ is $L_0$-smooth (Assumption~\ref{as:l0_smooth}), that
Assumption~\ref{as:lower} holds, and that the oracle is deterministic
($B=G=0$). Consider the normalized gradient descent update
\[
    x^{k+1} = x^k - \gamma\,\frac{\nabla f(x^k)}{\|\nabla f(x^k)\|}.
\]
Choose $\gamma = \gamma_0\,T^{-1/2}$. Then
\begin{equation}
\label{eq:app_smooth_det_final}
    \E\|\nabla f(\widehat x)\|
    \le
    \frac{2\Delta}{\gamma_0}\,T^{-1/2}
    + 2L_0\gamma_0\,T^{-1/2}.
\end{equation}
Consequently,
$\|\nabla f(\widehat x)\| = \Ocal(T^{-1/2})$,
and the iteration complexity is $\Ocal(\varepsilon^{-2})$.
\end{corollary}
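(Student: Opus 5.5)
The plan is to run the deterministic normalized-GD descent argument directly, with no estimator analysis, since \(B=G=0\) gives \(v^k=\nabla f(x^k)\) and \(e^k=0\). Using the \(L_0\)-smooth descent lemma (equivalently, \eqref{eq:app_l1_func_consequence} with \(L_1=0\)) at \(x=x^k\), \(y=x^{k+1}=x^k-\gamma d^k\), and \(\|d^k\|\le1\), I get
\[
f(x^{k+1})\le f(x^k)-\gamma\langle\nabla f(x^k),d^k\rangle+\frac{L_0\gamma^2}{2}.
\]
When \(\nabla f(x^k)\neq0\), the inner product equals \(\|\nabla f(x^k)\|\). When \(\nabla f(x^k)=0\), the convention \(d^k=0\) makes the inequality \(\langle\nabla f(x^k),d^k\rangle\ge\|\nabla f(x^k)\|\) hold trivially. (Lemma~\ref{lem:app_direction_error} with \(e=0\) gives this uniformly.) Hence
\[
f(x^{k+1})\le f(x^k)-\gamma\|\nabla f(x^k)\|+\frac{L_0\gamma^2}{2}.
\]

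Next I would telescope over \(k=0,\ldots,K\). Lower boundedness (Assumption~\ref{as:lower}) gives \(f(x^{K+1})\ge f^{\inf}\), so
\[
\gamma\sum_{k=0}^K\|\nabla f(x^k)\|\le\Delta+\frac{L_0\gamma^2T}{2}.
\]
Dividing by \(\gamma T\) and using \eqref{eq:app_uniform_output_identity} (the expectation is now only over the uniform output index) yields
\[
\E\|\nabla f(\widehat x)\|\le\frac{\Delta}{\gamma T}+\frac{L_0\gamma}{2}.
\]
Substituting \(\gamma=\gamma_0T^{-1/2}\) gives \(\frac{\Delta}{\gamma_0}T^{-1/2}+\frac{L_0\gamma_0}{2}T^{-1/2}\). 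This is below the claimed bound \eqref{eq:app_smooth_det_final}, whose constants \(2\Delta/\gamma_0\) and \(2L_0\gamma_0\) are looser.

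Alternatively, I could get the bound by specializing Theorem~\ref{thm:app_nsgdm_l1_master} with \(L_1=0\), \(\eta=1\), \(b_0=0\), \(B=G=0\). That route, however, leaves the spurious \(16L_0\gamma/\eta\) drift term. So the direct argument is cleaner and is the one I would present.

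There is no real obstacle here. The only point needing care is the degenerate case \(\nabla f(x^k)=0\) under the normalization convention, which I handle as above. The complexity claim follows because \(T=\Ocal(\varepsilon^{-2})\) iterations make the right-hand side at most \(\varepsilon\), and each iteration uses one gradient evaluation.
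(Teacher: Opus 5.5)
Your proposal is correct and follows essentially the same route as the paper's proof: a direct one-step descent for normalized GD with $b_k=0$, telescoping against $f^{\inf}$, and dividing by $\gamma T$ via the uniform-output identity. The only difference is that you use the exact $L_0$-smooth constant $\frac{L_0}{2}\gamma^2$ rather than the paper's looser $L_0\gamma^2$ (inherited from its descent recursion with the $e^{L_1\gamma}\le 2$ bound). This gives a sharper bound that implies the stated one, whereas the paper closes the gap with an ad hoc factor of $2$.
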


\begin{proof}
Under determinism, set $\eta=1$ so that $v^k=\nabla f(x^k)$ for all $k$.
Then $b_k=0$ and the descent recursion
\eqref{eq:app_l1_descent_recursion} (with $L_1=0$) becomes
\[
    f(x^{k+1})
    \le f(x^k) - \gamma\|\nabla f(x^k)\| + L_0\gamma^2.
\]
Here, the coefficient is $\gamma$ rather than $\gamma/2$ because, with
$b_k=0$, the direction-error term vanishes and no gradient term needs to
absorb a generalized smoothness contribution. Summing from $k=0$ to $K$
and using $f(x^{K+1})\ge f^{\inf}$,
\[
    \gamma\sum_{k=0}^K \|\nabla f(x^k)\|
    \le \Delta + L_0\gamma^2 T.
\]
Dividing by $\gamma T$ gives
\[
    \frac{1}{T}\sum_{k=0}^K \|\nabla f(x^k)\|
    \le \frac{\Delta}{\gamma T} + L_0\gamma.
\]
Substituting $\gamma=\gamma_0 T^{-1/2}$ yields
$\Delta/(\gamma_0)\,T^{-1/2} + L_0\gamma_0\,T^{-1/2}$, from which
\eqref{eq:app_smooth_det_final} follows (with an extra factor of $2$ for
a uniform upper bound.
\end{proof}

\begin{corollary}[1-generalized smoothness, deterministic]
\label{cor:app_nsgdm_sym1_det}
Assume Assumption~\ref{as:lsym_alpha} holds with $\alpha=1$ and constants
$L_0,L_1>0$, that Assumption~\ref{as:lower} holds, and that the oracle is
deterministic ($B=G=0$). Consider the normalized gradient descent update
\[
    x^{k+1} = x^k - \gamma\,\frac{\nabla f(x^k)}{\|\nabla f(x^k)\|}.
\]
Choose $\gamma = \gamma_0\,T^{-1/2}$ with
$0<\gamma_0\le 1/(2L_1)$. Then
\begin{equation}
\label{eq:app_sym1_det_final}
    \|\nabla f(\widehat x)\|
    \le
    \frac{4\Delta}{\gamma_0}\,T^{-1/2}
    + 4L_0\gamma_0\,T^{-1/2}.
\end{equation}
Consequently,
$\|\nabla f(\widehat x)\| = \Ocal(T^{-1/2})$,
and the iteration complexity is $\Ocal(\varepsilon^{-2})$.
\end{corollary}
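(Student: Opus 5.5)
Since the oracle is deterministic, I take $\eta=1$, so $v^k=\nabla f(x^k)$ and $b_k=0$ for every $k$, and the method is plain normalized gradient descent. The plan is a pure descent argument built from the endpoint consequence \eqref{eq:app_l1_func_consequence}, with $x=x^k$ and $y=x^{k+1}=x^k-\gamma\nabla f(x^k)/\|\nabla f(x^k)\|$. If $\nabla f(x^k)=0$ the step is trivial and the inequality below holds with $\|\nabla f(x^k)\|=0$. Otherwise $\langle\nabla f(x^k),d^k\rangle=\|\nabla f(x^k)\|$ and $\|y-x\|=\gamma$. Because $\gamma\le\gamma_0\le 1/(2L_1)$, we have $e^{L_1\gamma}\le e^{1/2}<2$, and therefore
\[
f(x^{k+1})\le f(x^k)-\gamma\|\nabla f(x^k)\|+L_0\gamma^2+L_1\gamma^2\|\nabla f(x^k)\|.
\]

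The key step is to absorb the generalized smoothness term $L_1\gamma^2\|\nabla f(x^k)\|$ into the descent term. Using $\gamma L_1\le 1/2$ gives $L_1\gamma^2\|\nabla f(x^k)\|\le \tfrac{\gamma}{2}\|\nabla f(x^k)\|$, so
\[
f(x^{k+1})\le f(x^k)-\tfrac{\gamma}{2}\|\nabla f(x^k)\|+L_0\gamma^2.
\]
This is the same recursion as \eqref{eq:app_l1_descent_recursion} with $b_k=0$. Unlike the stochastic case, I do not need the gradient-growth inequality \eqref{eq:app_grad_growth_l1} here, because there is no drift term to control. This absorption is the only place where the stepsize restriction enters.

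Next I telescope over $k=0,\dots,K$ and use $f(x^{K+1})\ge f^{\inf}$, which gives $\tfrac{\gamma}{2}\sum_{k=0}^K\|\nabla f(x^k)\|\le \Delta+L_0\gamma^2T$. Dividing by $\gamma T/2$ yields $\frac1T\sum_k\|\nabla f(x^k)\|\le \frac{2\Delta}{\gamma T}+2L_0\gamma$. Substituting $\gamma=\gamma_0T^{-1/2}$ gives $\frac{2\Delta}{\gamma_0}T^{-1/2}+2L_0\gamma_0T^{-1/2}$, which is bounded by the stated constants $4\Delta/\gamma_0$ and $4L_0\gamma_0$ with room to spare. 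Since $\widehat x$ is uniform over the trajectory, \eqref{eq:app_uniform_output_identity} turns this average into the bound on $\E\|\nabla f(\widehat x)\|$, and because the iterates are deterministic this is the claimed bound. Setting the right-hand side equal to $\varepsilon$ gives $T=\Ocal(\varepsilon^{-2})$.

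I expect no real obstacle here. The only care needed is the degenerate case $\nabla f(x^k)=0$, which the convention $v/\|v\|=0$ handles, and the numerical check that $e^{1/2}/2\cdot 2\le 1$ when bounding the quadratic term by $L_0\gamma^2+L_1\gamma^2\|\nabla f(x^k)\|$.
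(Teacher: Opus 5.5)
Your proposal is correct and matches the paper's proof essentially step for step: apply \eqref{eq:app_l1_func_consequence} along the normalized step, use $\gamma L_1\le \tfrac12$ to get $e^{L_1\gamma}\le 2$ and absorb $L_1\gamma^2\|\nabla f(x^k)\|$ into $\tfrac{\gamma}{2}\|\nabla f(x^k)\|$, then telescope to $\tfrac{2\Delta}{\gamma_0}T^{-1/2}+2L_0\gamma_0T^{-1/2}$, which lies below the stated constants. Your bookkeeping, which keeps the factor $\tfrac12$ in front of the quadratic term, is slightly cleaner than the paper's intermediate display. One wording correction: deterministic iterates do not make $\widehat x$ deterministic, so what you (and the paper) actually bound is the average $\frac1T\sum_k\|\nabla f(x^k)\|=\E\|\nabla f(\widehat x)\|$ over the uniform output draw, which in turn bounds $\min_k\|\nabla f(x^k)\|$.
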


\begin{proof}
Under determinism, $v^k=\nabla f(x^k)$, $b_k=0$.
Using \eqref{eq:app_l1_func_consequence} with $y=x^{k+1}=x^k-\gamma d^k$,
and noting that $d^k=\nabla f(x^k)/\|\nabla f(x^k)\|$ when the gradient is
nonzero (so $\langle\nabla f(x^k),d^k\rangle=\|\nabla f(x^k)\|$),
\[
    f(x^{k+1})
    \le f(x^k)
    - \gamma\|\nabla f(x^k)\|
    + \frac{\gamma^2}{2}e^{L_1\gamma}
    \bigl(L_0+L_1\|\nabla f(x^k)\|\bigr).
\]
Since $\gamma_0\le 1/(2L_1)$ and $T\ge1$, we have
$\gamma L_1\le\gamma_0 L_1\le 1/2$, so $e^{L_1\gamma}\le 2$. Therefore,
\[
    \gamma^2 e^{L_1\gamma}L_1\|\nabla f(x^k)\|
    \le 2\gamma^2 L_1\|\nabla f(x^k)\|
    \le \gamma\|\nabla f(x^k)\|\cdot 2\gamma L_1
    \le \frac{\gamma}{2}\|\nabla f(x^k)\|,
\]
since $2\gamma L_1\le 2\gamma_0 L_1\le 1$. This gives
\[
    f(x^{k+1})
    \le f(x^k)
    - \frac{\gamma}{2}\|\nabla f(x^k)\|
    + L_0\gamma^2.
\]
Summing from $k=0$ to $K$ and dividing by $\gamma T/2$,
\[
    \frac{1}{T}\sum_{k=0}^K \|\nabla f(x^k)\|
    \le \frac{2\Delta}{\gamma T} + 2L_0\gamma.
\]
Substituting $\gamma=\gamma_0 T^{-1/2}$ yields
\eqref{eq:app_sym1_det_final}.
\end{proof}

\begin{corollary}[$\Lcal_{\rm sym}^*(\alpha)$ smoothness, deterministic]
\label{cor:app_alpha_det}
Suppose Assumptions~\ref{as:lower} and~\ref{as:lsym_alpha} hold with fixed
$\alpha\in(0,1)$, and that the oracle is deterministic ($B=G=0$). Consider
the normalized gradient descent update
\[
    x^{k+1} = x^k - \gamma\,\frac{\nabla f(x^k)}{\|\nabla f(x^k)\|}.
\]
Choose $\gamma = \gamma_0\,T^{-1/2}$ with $0<\gamma_0\le 1$. Then
\begin{equation}
\label{eq:app_alpha_det_final}
    \|\nabla f(\widehat x)\|
    \le
    \frac{4\Delta}{3\gamma_0}\,T^{-1/2}
    + \frac{2K_0}{3}\gamma_0\,T^{-1/2}
    + \frac{4C_\alpha}{3}\gamma_0^{1/(1-\alpha)}\,T^{-1/(2(1-\alpha))}.
\end{equation}
Consequently,
$\|\nabla f(\widehat x)\| = \Ocal(T^{-1/2})$,
and the iteration complexity is $\Ocal(\varepsilon^{-2})$.
\end{corollary}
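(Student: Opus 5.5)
The plan is to redo the deterministic descent argument directly from the pathwise one-step inequality of Lemma~\ref{lem:app_alpha_onestep}, specialised to exact gradients. With a deterministic oracle we take $\eta=1$, so $v^k=\nabla f(x^k)$ and $e^k=0$ for every $k$. The descent inequality \eqref{eq:app_alpha_descent_pathwise} then reads
\[
    f(x^{k+1})
    \le
    f(x^k)
    -\frac{3\gamma}{4}\|\nabla f(x^k)\|
    +\frac{K_0}{2}\gamma^2
    +C_\alpha\gamma^{(2-\alpha)/(1-\alpha)}.
\]
This requires $\gamma\le1$, which holds because $\gamma=\gamma_0T^{-1/2}$ with $0<\gamma_0\le1$ and $T\ge1$. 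If $\nabla f(x^k)=0$, the convention $d^k=0$ leaves the inequality trivially valid. Lemma~\ref{lem:app_direction_error} with $e=0$ gives $\langle\nabla f(x^k),d^k\rangle=\|\nabla f(x^k)\|$, so no error term appears. The Young-inequality absorption of the $K_1\gamma^2\|\nabla f(x^k)\|^\alpha$ term into $\frac{\gamma}{4}\|\nabla f(x^k)\|$ is exactly what produces the coefficient $3/4$ and the remainder $C_\alpha\gamma^{(2-\alpha)/(1-\alpha)}$.

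Next I would telescope from $k=0$ to $K$ and use $f(x^{K+1})\ge f^{\inf}$ (Assumption~\ref{as:lower}). This yields
\[
    \frac{3\gamma}{4}\sum_{k=0}^K\|\nabla f(x^k)\|
    \le
    \Delta+\frac{K_0}{2}T\gamma^2+TC_\alpha\gamma^{(2-\alpha)/(1-\alpha)}.
\]
Dividing by $3\gamma T/4$ and using the uniform-output identity \eqref{eq:app_uniform_output_identity} (here deterministic, so $\|\nabla f(\widehat x)\|$ is read as its average over the trajectory) gives
\[
    \frac{4\Delta}{3\gamma T}+\frac{2K_0}{3}\gamma+\frac{4C_\alpha}{3}\gamma^{1/(1-\alpha)},
\]
where I used $\gamma^{(2-\alpha)/(1-\alpha)}/\gamma=\gamma^{1/(1-\alpha)}$.

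Finally, substituting $\gamma=\gamma_0T^{-1/2}$ gives exactly the three terms of \eqref{eq:app_alpha_det_final}. Since $\frac{1}{2(1-\alpha)}>\frac12$, the last term decays faster than $T^{-1/2}$, so the bound is $\Ocal(T^{-1/2})$ and the iteration complexity is $\Ocal(\varepsilon^{-2})$.

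There is no real obstacle here, since the heavy lifting is already in Lemma~\ref{lem:app_alpha_onestep}. The only points needing care are checking that its hypothesis $\gamma\le1$ holds under the stated schedule, and verifying the exponent arithmetic of the remainder term.
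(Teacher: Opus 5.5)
Your proposal is correct and follows the paper's proof essentially step for step. You specialize the pathwise bound of Lemma~\ref{lem:app_alpha_onestep} with $e^k=0$, telescope using $f(x^{K+1})\ge f^{\inf}$, divide by $3\gamma T/4$, and substitute $\gamma=\gamma_0T^{-1/2}$. Two of your points are small clarifications the paper leaves implicit: the explicit check that $\gamma\le 1$, and the note that $\|\nabla f(\widehat x)\|$ must be read as the trajectory average, i.e., the expectation over the uniformly sampled index.
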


\begin{proof}
Under determinism, $v^k=\nabla f(x^k)$, so $\|e^k\|=0$ and
$d^k=\nabla f(x^k)/\|\nabla f(x^k)\|$. The one-step descent
\eqref{eq:app_alpha_descent_pathwise} from Lemma~\ref{lem:app_alpha_onestep}
becomes
\[
    f(x^{k+1})
    \le f(x^k)
    - \frac{3\gamma}{4}\|\nabla f(x^k)\|
    + \frac{K_0}{2}\gamma^2
    + C_\alpha\gamma^{(2-\alpha)/(1-\alpha)}.
\]
Summing from $k=0$ to $K$ and using $f(x^{K+1})\ge f^{\inf}$,
\[
    \frac{3\gamma}{4}\sum_{k=0}^K \|\nabla f(x^k)\|
    \le \Delta
    + \frac{K_0}{2}T\gamma^2
    + TC_\alpha\gamma^{(2-\alpha)/(1-\alpha)}.
\]
Dividing by $(3\gamma/4)\cdot T$,
\[
    \frac{1}{T}\sum_{k=0}^K \|\nabla f(x^k)\|
    \le \frac{4\Delta}{3\gamma T}
    + \frac{2K_0}{3}\gamma
    + \frac{4C_\alpha}{3}\gamma^{1/(1-\alpha)}.
\]
Substituting $\gamma=\gamma_0 T^{-1/2}$ gives
\[
    \frac{4\Delta}{3\gamma T}
    = \frac{4\Delta}{3\gamma_0}\,T^{-1/2},
    \qquad
    \frac{2K_0}{3}\gamma
    = \frac{2K_0}{3}\gamma_0\,T^{-1/2},
\]
and
\[
    \frac{4C_\alpha}{3}\gamma^{1/(1-\alpha)}
    = \frac{4C_\alpha}{3}\gamma_0^{1/(1-\alpha)}\,T^{-1/(2(1-\alpha))}.
\]
Since $\alpha\in(0,1)$, we have $1/(2(1-\alpha))>1/2$, so every term decays
at least as fast as $T^{-1/2}$. This proves \eqref{eq:app_alpha_det_final}
and the $\Ocal(T^{-1/2})$ rate.
\end{proof}

\subsection{Discussion on recovered rates}
\label{app:nsgdm_recovery_discussion}

The six recovery corollaries above confirm the claims made in the discussion
following Theorem~\ref{thm:nsgdm_three_regimes}.

\begin{itemize}[leftmargin=*,label=\textbullet]
    \item \textit{Bounded variance recovery.}
    When $B=0$, Corollaries~\ref{cor:app_nsgdm_smooth_bv},
    \ref{cor:app_nsgdm_sym1_bv}, and~\ref{cor:app_alpha_bv} all achieve
    $\Ocal(\varepsilon^{-4})$ SFO complexity with the standard bounded variance
    schedule $\eta=T^{-1/2}$, $\gamma=\gamma_0 T^{-3/4}$. This recovers the
    rate of~\citet{cutkosky2020momentum} for standard smoothness,
    of~\citet{khirirat2026error} for $\Lcal_{\rm sym}^*(1)$, and establishes
    the same rate for $\Lcal_{\rm sym}^*(\alpha)$ with $\alpha\in(0,1)$, which
    to our knowledge is new.

    \item \textit{Deterministic recovery.}
    When $B=G=0$, Corollaries~\ref{cor:app_nsgdm_smooth_det},
    \ref{cor:app_nsgdm_sym1_det}, and~\ref{cor:app_alpha_det} all achieve
    $\Ocal(\varepsilon^{-2})$ iteration complexity. This recovers the
    normalized gradient descent rates of~\citet{chen2023generalized} for all
    three smoothness regimes.

    \item \textit{Schedule transition.}
    Comparing the three noise regimes clarifies the role of the schedule
    parameters. As the noise weakens from $\mathsf{BG}$-0 to bounded variance
    to deterministic, the schedules transition as:
    \[
        (\eta, \gamma)
        =
        \begin{cases}
            (T^{-2/3},\;\gamma_0 T^{-5/6}), & \mathsf{BG}\text{-}0, \\[3pt]
            (T^{-1/2},\;\gamma_0 T^{-3/4}), & \text{bounded variance}, \\[3pt]
            (1,\;\gamma_0 T^{-1/2}), & \text{deterministic},
        \end{cases}
    \]
    and the corresponding convergence rates are
    $T^{-1/6}$, $T^{-1/4}$, and $T^{-1/2}$, respectively. The faster
    schedules are enabled by the absence of the distance-dependent noise growth
    term $B^2\|x^k-x^0\|^2$.
\end{itemize}
%%%%%%%%%%%%%%%%%%%%%%%%%%%%%%%%%%%%%%%%%%%%%%%%%%%%%%%%%%%%%%%%%%%%%%%%%%%%%%%%%%%%
%%%%%%%%%%%%%%%%%%%%%%Proofs for NSTORM%%%%%%%%%%%%%%%%%%%%%%%%%%%%%%%%%%%%%%%%%%%%%
%%%%%%%%%%%%%%%%%%%%%%%%%%%%%%%%%%%%%%%%%%%%%%%%%%%%%%%%%%%%%%%%%%%%%%%%%%%%%%%%%%%%
\newpage
\section{Proofs for Normalized STORM under \texorpdfstring{$\mathsf{BG}$}{BG}-0 Noise}
\label{apdx:nstorm_proofs}

This appendix proves Theorem~\ref{thm:nstorm_three_regimes}. Throughout this
section, let
\[
    T:=K+1,
    \qquad
    \Delta:=f(x^0)-f^{\inf}.
\]
We define
\[
    g_k:=\E\|\nabla f(x^k)\|,
    \qquad
    b_k:=\E\|v^k-\nabla f(x^k)\|,
\]
and
\[
    \Delta^k:=\E[f(x^k)-f^{\inf}],
    \qquad
    S_b(K):=\sum_{k=0}^K b_k.
\]
The output \(\widehat x\) is sampled uniformly from
\(\{x^0,x^1,\ldots,x^K\}\), so
\begin{equation}
\label{eq:app_nstorm_uniform_output}
    \E\|\nabla f(\widehat x)\|
    =
    \frac1T\sum_{k=0}^K g_k.
\end{equation}
For compactness, define the local BG-0 noise scale
\[
    q(x):=\left(B^2\|x-x^0\|^2+G^2\right)^{1/2},
    \qquad
    q_k:=q(x^k).
\]

\subsection{Basic inequalities}
\label{app:nstorm_basic}

\begin{lemma}[Trajectory control]
\label{lem:app_nstorm_traj}
For the \(\mathsf{NSTORM}\) iterates in~\eqref{eq:nstorm_updates},
\[
    \|x^{k+1}-x^k\|\le\gamma,
    \qquad
    \|x^k-x^0\|\le k\gamma.
\]
Consequently,
\[
    q_k\le G+Bk\gamma
    \le G+BT\gamma,
    \qquad
    k=0,\ldots,K.
\]
Moreover, for every \(\beta\in(0,1]\),
\begin{equation}
\label{eq:app_nstorm_q_average}
    \frac1T\sum_{k=0}^K \E[q_k^\beta]
    \le
    G^\beta+B^\beta(T\gamma)^\beta.
\end{equation}
\end{lemma}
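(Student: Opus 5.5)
The first two claims follow exactly as in Lemma~\ref{lem:app_traj_control}. The update in~\eqref{eq:nstorm_updates} moves along $v^k/\|v^k\|$, whose norm is at most one (it is zero under our convention when $v^k=0$), so $\|x^{k+1}-x^k\|\le\gamma$. The triangle inequality along the path then gives $\|x^k-x^0\|\le\sum_{t=0}^{k-1}\|x^{t+1}-x^t\|\le k\gamma$. None of this depends on the form of the STORM estimator, only on the normalization.

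For the bound on $q_k$, I will use subadditivity of the square root, $\sqrt{a^2+b^2}\le a+b$ for $a,b\ge0$. This gives $q_k=(B^2\|x^k-x^0\|^2+G^2)^{1/2}\le G+B\|x^k-x^0\|\le G+Bk\gamma$. Since $k\le K<T$, it follows that $G+Bk\gamma\le G+BT\gamma$. This bound holds pathwise, i.e., deterministically for every realization.

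For~\eqref{eq:app_nstorm_q_average}, fix $\beta\in(0,1]$. I will use the elementary inequality $(a+b)^\beta\le a^\beta+b^\beta$ for $a,b\ge0$, which holds because $t\mapsto t^\beta$ is concave with value $0$ at $0$, and hence subadditive. Combined with the pathwise bound, this gives $q_k^\beta\le (G+Bk\gamma)^\beta\le G^\beta+B^\beta(k\gamma)^\beta\le G^\beta+B^\beta(T\gamma)^\beta$. Taking expectations and averaging over $k=0,\dots,K$ yields~\eqref{eq:app_nstorm_q_average}. The lemma is elementary, so I do not expect a real obstacle. The only point to check is the subadditivity of $t^\beta$: $(1+s)^\beta\le1+s^\beta$ for $s\ge0$, which holds since both sides agree at $s=0$ and the derivative of the right-hand side dominates that of the left. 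The degenerate cases $B=0$ or $G=0$ are trivially covered.
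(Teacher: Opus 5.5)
Your proposal is correct and follows essentially the same argument as the paper. You use the unit-norm normalized step, the triangle inequality along the path, $\sqrt{a^2+b^2}\le a+b$ for the pathwise bound on $q_k$, and subadditivity of $t\mapsto t^\beta$ before averaging over $k$.
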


\begin{proof}
The normalized direction has norm at most one, so
\[
    \|x^{k+1}-x^k\|\le\gamma.
\]
Summing the increments gives
\[
    \|x^k-x^0\|
    \le
    \sum_{t=0}^{k-1}\|x^{t+1}-x^t\|
    \le
    k\gamma.
\]
Therefore,
\[
    q_k
    =
    \left(B^2\|x^k-x^0\|^2+G^2\right)^{1/2}
    \le
    B\|x^k-x^0\|+G
    \le
    G+Bk\gamma
    \le
    G+BT\gamma.
\]
For \(\beta\in(0,1]\), using \((a+b)^\beta\le a^\beta+b^\beta\),
\[
    q_k^\beta
    \le
    G^\beta+B^\beta(k\gamma)^\beta
    \le
    G^\beta+B^\beta(T\gamma)^\beta.
\]
Averaging over \(k=0,\ldots,K\) gives
\eqref{eq:app_nstorm_q_average}.
\end{proof}

\begin{remark}
\label{lem:app_nstorm_young}
Lemma~\ref{lem:app_young_alpha} with $\rho=\lambda/\alpha$ gives:
for every $\lambda>0$ and $u\ge0$,
$u^\alpha\le \lambda u+c_\alpha\lambda^{-p},$
where $p:=\alpha/(1-\alpha)$ and
$c_\alpha:=(1-\alpha)\alpha^{\alpha/(1-\alpha)}$.
This is the form used throughout the $\mathsf{NSTORM}$ proofs.
\end{remark}

\subsection{Descent inequalities}
\label{app:nstorm_descent}

\begin{lemma}[Descent under mean-square smoothness]
\label{lem:app_nstorm_desc_mss}
Suppose Assumption~\ref{as:mss} holds. Then, for every \(k=0,\ldots,K\),
\begin{equation}
\label{eq:app_nstorm_desc_mss}
    \Delta^{k+1}
    +
    \gamma g_k
    \le
    \Delta^k
    +
    2\gamma b_k
    +
    \frac{L}{2}\gamma^2.
\end{equation}
\end{lemma}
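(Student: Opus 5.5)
The plan is to show first that Assumption~\ref{as:mss} makes the population objective $f$ globally $L$-smooth. The inequality then follows from the standard quadratic upper bound combined with the direction-error inequality of Lemma~\ref{lem:app_direction_error}, exactly as in Step~1 of Theorem~\ref{thm:app_nsgdm_l1_master} with $L_1=0$. The difference is that here we keep the full coefficient $\gamma$ in front of $g_k$.

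\emph{Step 1 (MSS implies $L$-smoothness of $f$).} Fix $x,y\in\R^d$. By the unbiasedness in Assumption~\ref{as:bg0_oracle}, $\nabla f(y)-\nabla f(x)=\E_\xi[\nabla f(y;\xi)-\nabla f(x;\xi)]$. Jensen's inequality applied to $\|\cdot\|^2$, followed by Assumption~\ref{as:mss}, gives
\[
\|\nabla f(y)-\nabla f(x)\|^2\le \E_\xi\|\nabla f(y;\xi)-\nabla f(x;\xi)\|^2\le L^2\|y-x\|^2 .
\]
Hence $\nabla f$ is $L$-Lipschitz, and the descent lemma $f(y)\le f(x)+\langle\nabla f(x),y-x\rangle+\frac L2\|y-x\|^2$ holds.

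\emph{Step 2 (one-step descent).} Apply the descent lemma with $x=x^k$ and $y=x^{k+1}=x^k-\gamma d^k$, where $d^k=v^k/\|v^k\|$ (with $d^k=0$ if $v^k=0$). Since $\|d^k\|\le1$,
\[
f(x^{k+1})\le f(x^k)-\gamma\langle\nabla f(x^k),d^k\rangle+\frac L2\gamma^2 .
\]
Now invoke Lemma~\ref{lem:app_direction_error} with $a=\nabla f(x^k)$ and $e=v^k-\nabla f(x^k)$, so that $a+e=v^k$ and $d$ coincides with $d^k$, including the degenerate case. This gives $\langle\nabla f(x^k),d^k\rangle\ge\|\nabla f(x^k)\|-2\|v^k-\nabla f(x^k)\|$. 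Substituting yields the pathwise bound
\[
f(x^{k+1})\le f(x^k)-\gamma\|\nabla f(x^k)\|+2\gamma\|v^k-\nabla f(x^k)\|+\frac L2\gamma^2 .
\]

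\emph{Step 3 (expectation).} Subtract $f^{\inf}$ from both sides, take total expectation, and use the definitions of $\Delta^k$, $g_k$, $b_k$. Rearranging gives \eqref{eq:app_nstorm_desc_mss}.

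There is no genuine obstacle. The only points requiring care are (i) justifying Step 1 through unbiasedness and Jensen's inequality, since Assumption~\ref{as:mss} is stated for sample gradients rather than for $f$, and (ii) checking that the zero-direction convention is consistent with Lemma~\ref{lem:app_direction_error}. Both are immediate. The coefficient $\gamma$, rather than $\gamma/2$, survives because no gradient-dependent smoothness term has to be absorbed.
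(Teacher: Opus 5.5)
Your proposal is correct and follows essentially the same route as the paper: Jensen's inequality turns mean-square smoothness into $L$-smoothness of $f$, then the quadratic upper bound is combined with Lemma~\ref{lem:app_direction_error}, and finally you subtract $f^{\inf}$ and take expectations. Your version is slightly more explicit than the paper's one-line justification. You note that the unbiasedness in Assumption~\ref{as:bg0_oracle} is what lets Jensen's inequality pass from sample gradients to $\nabla f$, and you check that the zero-direction convention is consistent.
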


\begin{proof}
Assumption~\ref{as:mss} and Jensen's inequality imply that \(f\) is
\(L\)-smooth:
\[
    \|\nabla f(y)-\nabla f(x)\|
    \le
    L\|y-x\|.
\]
Thus,
\[
    f(x^{k+1})
    \le
    f(x^k)
    -
    \gamma\left\langle \nabla f(x^k),d^k\right\rangle
    +
    \frac{L}{2}\gamma^2.
\]
Using Lemma~\ref{lem:app_direction_error},
\[
    -\gamma\left\langle \nabla f(x^k),d^k\right\rangle
    \le
    -\gamma\|\nabla f(x^k)\|
    +
    2\gamma\|v^k-\nabla f(x^k)\|.
\]
Subtracting \(f^{\inf}\) and taking expectations gives
\eqref{eq:app_nstorm_desc_mss}.
\end{proof}

\begin{lemma}[Sample-gradient moment under \(\mathsf{BG}\)-0]
\label{lem:app_nstorm_alpha_moment}
For \(\alpha\in(0,1)\),
\[
    \E_\xi\|\nabla f(x;\xi)\|^\alpha
    \le
    2^{\alpha/2}
    \left(
        \|\nabla f(x)\|^\alpha+q(x)^\alpha
    \right).
\]
\end{lemma}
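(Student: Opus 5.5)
The plan is to pass from the $\alpha$-th moment to the second moment by Jensen's inequality, and then split the second moment into signal and noise using unbiasedness.

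First, since $t\mapsto t^{\alpha/2}$ is concave on $[0,\infty)$ for $\alpha\in(0,1)$, Jensen's inequality gives
\[
\E_\xi\|\nabla f(x;\xi)\|^\alpha=\E_\xi\bigl(\|\nabla f(x;\xi)\|^2\bigr)^{\alpha/2}\le\bigl(\E_\xi\|\nabla f(x;\xi)\|^2\bigr)^{\alpha/2}.
\]

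Second, write $\nabla f(x;\xi)=\nabla f(x)+\bigl(\nabla f(x;\xi)-\nabla f(x)\bigr)$. The cross term vanishes in expectation by the unbiasedness part of Assumption~\ref{as:bg0_oracle}, so
\[
\E_\xi\|\nabla f(x;\xi)\|^2=\|\nabla f(x)\|^2+\E_\xi\|\nabla f(x;\xi)-\nabla f(x)\|^2\le\|\nabla f(x)\|^2+q(x)^2 .
\]
This uses the variance part of Assumption~\ref{as:bg0_oracle} together with the definition $q(x)^2=B^2\|x-x^0\|^2+G^2$. Combining the two displays yields
\[
\E_\xi\|\nabla f(x;\xi)\|^\alpha\le\bigl(\|\nabla f(x)\|^2+q(x)^2\bigr)^{\alpha/2}.
\]

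Finally, apply subadditivity of $t\mapsto t^{\alpha/2}$ (valid because $\alpha/2\in(0,1)$):
\[
\bigl(\|\nabla f(x)\|^2+q(x)^2\bigr)^{\alpha/2}\le\|\nabla f(x)\|^\alpha+q(x)^\alpha .
\]
This already gives the claim with constant $1$, and since $2^{\alpha/2}\ge1$ the stated bound follows a fortiori. Alternatively, if one uses the cruder split $\|a+b\|^2\le2\|a\|^2+2\|b\|^2$ instead of the exact orthogonal decomposition, one obtains exactly the factor $2^{\alpha/2}$; either route works.

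There is no real obstacle here. The only point to check is that the mean-zero cross term is justified at a fixed query point $x$, which holds because $\E_\xi$ is taken with $x$ fixed, exactly as Assumption~\ref{as:bg0_oracle} is stated.
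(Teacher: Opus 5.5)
Your proof is correct and follows the same route as the paper: Jensen to pass to the second moment, a signal--noise split controlled by Assumption~\ref{as:bg0_oracle}, and subadditivity of $t\mapsto t^{\alpha/2}$. The only difference is that the paper uses the cruder bound $\|a+b\|^2\le 2\|a\|^2+2\|b\|^2$, which is where its factor $2^{\alpha/2}$ comes from; your exact orthogonal decomposition via unbiasedness gives the sharper constant $1$.
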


\begin{proof}
By Jensen's inequality,
\[
    \E_\xi\|\nabla f(x;\xi)\|^\alpha
    \le
    \left(\E_\xi\|\nabla f(x;\xi)\|^2\right)^{\alpha/2}.
\]
Using
\[
    \nabla f(x;\xi)
    =
    \nabla f(x)
    +
    \bigl(\nabla f(x;\xi)-\nabla f(x)\bigr),
\]
and Assumption~\ref{as:bg0_oracle},
\[
    \E_\xi\|\nabla f(x;\xi)\|^2
    \le
    2\|\nabla f(x)\|^2+2q(x)^2.
\]
Since \(\alpha/2\in(0,1)\),
\[
    \left(\|\nabla f(x)\|^2+q(x)^2\right)^{\alpha/2}
    \le
    \|\nabla f(x)\|^\alpha+q(x)^\alpha.
\]
Combining the last three displays proves the claim.
\end{proof}

\begin{lemma}[Descent under expected \(\alpha\)-symmetric generalized smoothness]
\label{lem:app_nstorm_desc_alpha}
Suppose Assumptions~\ref{as:bg0_oracle} and~\ref{as:elsym_alpha} hold with
\(\alpha\in(0,1)\). Let
\[
    p:=\frac{\alpha}{1-\alpha},
    \qquad
    K_0:=2^{\frac{2-\alpha}{1-\alpha}}L_0,
    \qquad
    K_1:=2^{\frac{2-\alpha}{1-\alpha}}L_1,
    \qquad
    K_2:=(5L_1)^{\frac1{1-\alpha}}.
\]
Define
\[
    L_\alpha:=2^{\alpha/2}K_1,
    \qquad
    c_\alpha:=(1-\alpha)\alpha^{\alpha/(1-\alpha)}.
\]
Let \(\lambda>0\), and suppose
\[
    L_\alpha\lambda\gamma\le1.
\]
Then, for every \(k=0,\ldots,K\),
\begin{equation}
\label{eq:app_nstorm_desc_alpha}
\begin{aligned}
    \Delta^{k+1}
    +
    \frac{\gamma}{2}g_k
    \le\;&
    \Delta^k
    +
    2\gamma b_k
    +
    \frac{\gamma^2}{2}
    \left(
        K_0
        +
        c_\alpha L_\alpha\lambda^{-p}
        +
        2K_2\gamma^p
    \right)
    +
    \frac{\gamma^2}{2}L_\alpha\E[q_k^\alpha].
\end{aligned}
\end{equation}
\end{lemma}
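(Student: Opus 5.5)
The plan is to follow the template of Lemma~\ref{lem:app_alpha_onestep}, with the population smoothness inequality replaced by the one obtained from the expected condition. First I will derive a population-level descent inequality for $f$ from Assumption~\ref{as:elsym_alpha}, using the expected reduction of~\citet{chen2023generalized} (their Proposition 4, items (1)--(2)). For all $w,w'$ it should give
\[
    f(w')\le f(w)+\langle\nabla f(w),w'-w\rangle+\frac12\|w'-w\|^2\Big(K_0+K_1\E_\xi\|\nabla f(w;\xi)\|^\alpha+2K_2\|w'-w\|^{p}\Big),
\]
with exactly the constants $K_0=2^{\frac{2-\alpha}{1-\alpha}}L_0$, $K_1=2^{\frac{2-\alpha}{1-\alpha}}L_1$ and $K_2=(5L_1)^{1/(1-\alpha)}$ defined in the statement. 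The route is as follows. Jensen's inequality gives $\|\nabla f(y)-\nabla f(x)\|\le(\E_\xi\|\nabla f(y;\xi)-\nabla f(x;\xi)\|^2)^{1/2}$. Chen et al.'s argument then bounds the maximum over the segment of the sample gradient moment by the moment at the endpoint $x$ plus a power of $\|y-x\|$. Integrating along the segment produces the quadratic upper bound.

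Next I apply this inequality with $w=x^k$ and $w'=x^{k+1}$, using $\|x^{k+1}-x^k\|\le\gamma$ from Lemma~\ref{lem:app_nstorm_traj}. The linear term is handled by Lemma~\ref{lem:app_direction_error}:
\[
    -\gamma\langle\nabla f(x^k),d^k\rangle\le-\gamma\|\nabla f(x^k)\|+2\gamma\|v^k-\nabla f(x^k)\|.
\]
The sample-gradient moment at $x^k$ is bounded by Lemma~\ref{lem:app_nstorm_alpha_moment}. Multiplied by $K_1$, this gives
\[
    L_\alpha\big(\|\nabla f(x^k)\|^\alpha+q_k^\alpha\big),\qquad L_\alpha=2^{\alpha/2}K_1.
\]
The $q_k^\alpha$ part yields the term $\frac{\gamma^2}{2}L_\alpha q_k^\alpha$ directly. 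For the fractional gradient part I use Remark~\ref{lem:app_nstorm_young}, $u^\alpha\le\lambda u+c_\alpha\lambda^{-p}$, which gives
\[
    \frac{\gamma^2}{2}L_\alpha\|\nabla f(x^k)\|^\alpha\le\frac{\gamma}{2}\big(L_\alpha\lambda\gamma\big)\|\nabla f(x^k)\|+\frac{\gamma^2}{2}c_\alpha L_\alpha\lambda^{-p}.
\]
The condition $L_\alpha\lambda\gamma\le1$ makes the first piece at most $\frac{\gamma}{2}\|\nabla f(x^k)\|$. That piece is absorbed into the $-\gamma\|\nabla f(x^k)\|$ term, leaving $-\frac{\gamma}{2}\|\nabla f(x^k)\|$. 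The cubic-type remainder $\frac12\gamma^2\cdot 2K_2\gamma^p$ matches the $2K_2\gamma^p$ term in the statement. Finally I subtract $f^{\inf}$, take full expectations (conditioning on $\mathcal F_k$ is harmless since $x^k$ is $\mathcal F_k$-measurable), and rearrange. This gives \eqref{eq:app_nstorm_desc_alpha}.

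The main obstacle is the first step: justifying the population descent inequality, with these specific constants, from the expected condition, which involves a maximum over $\theta$ of sample-gradient norms. I would invoke Chen et al.'s Proposition 4 as a black box rather than re-derive it, checking only that their constants match $K_0,K_1,K_2$ as defined here. The remaining steps are routine bookkeeping in the same pattern as Lemma~\ref{lem:app_alpha_onestep}.
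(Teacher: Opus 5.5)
Your proposal is correct and follows essentially the same route as the paper: invoke Chen et al.'s Proposition~4 as a black box for the descent inequality, bound the sample-gradient moment via Lemma~\ref{lem:app_nstorm_alpha_moment}, handle the linear term with Lemma~\ref{lem:app_direction_error}, and absorb the $\|\nabla f(x^k)\|^\alpha$ contribution with Young's inequality under $L_\alpha\lambda\gamma\le1$. The only cosmetic difference is that you apply Young's inequality pathwise before taking expectations, whereas the paper first takes expectations and uses concavity to get $\E\|\nabla f(x^k)\|^\alpha\le g_k^\alpha$; both orderings yield the same bound.
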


\begin{proof}
The expected generalized smoothness reduction gives (see~\citet{chen2023generalized}, Proposition 4), for
\(y=x^{k+1}=x^k-\gamma d^k\),
\[
\begin{aligned}
    f(x^{k+1})
    \le\;&
    f(x^k)
    -
    \gamma\langle \nabla f(x^k),d^k\rangle 
    +
    \frac{\gamma^2}{2}
    \left(
        K_0
        +
        K_1\E_\xi\|\nabla f(x^k;\xi)\|^\alpha
        +
        2K_2\gamma^p
    \right).
\end{aligned}
\]
By Lemma~\ref{lem:app_nstorm_alpha_moment},
\[
    K_1\E_\xi\|\nabla f(x^k;\xi)\|^\alpha
    \le
    L_\alpha
    \left(
        \|\nabla f(x^k)\|^\alpha+q_k^\alpha
    \right).
\]
Using Lemma~\ref{lem:app_direction_error}, subtracting \(f^{\inf}\), and
taking expectations,
\[
\begin{aligned}
    \Delta^{k+1}
    \le\;&
    \Delta^k
    -
    \gamma g_k
    +
    2\gamma b_k
    +
    \frac{\gamma^2}{2}
    \left(
        K_0+2K_2\gamma^p
    \right) 
    +
    \frac{\gamma^2}{2}L_\alpha
    \E\|\nabla f(x^k)\|^\alpha
    +
    \frac{\gamma^2}{2}L_\alpha\E[q_k^\alpha].
\end{aligned}
\]
Since \(u\mapsto u^\alpha\) is concave,
\[
    \E\|\nabla f(x^k)\|^\alpha
    \le
    g_k^\alpha.
\]
By Lemma~\ref{lem:app_nstorm_young},
\[
    g_k^\alpha
    \le
    \lambda g_k+c_\alpha\lambda^{-p}.
\]
Therefore,
\[
    \frac{\gamma^2}{2}L_\alpha g_k^\alpha
    \le
    \frac{\gamma^2}{2}L_\alpha\lambda g_k
    +
    \frac{\gamma^2}{2}c_\alpha L_\alpha\lambda^{-p}.
\]
The condition \(L_\alpha\lambda\gamma\le1\) gives
\[
    \frac{\gamma^2}{2}L_\alpha\lambda g_k
    \le
    \frac{\gamma}{2}g_k.
\]
Moving this term to the left proves \eqref{eq:app_nstorm_desc_alpha}.
\end{proof}

\begin{lemma}[Descent under expected \(1\)-symmetric generalized smoothness]
\label{lem:app_nstorm_desc_one}
Suppose Assumptions~\ref{as:bg0_oracle} and~\ref{as:elsym_alpha} hold with
\(\alpha=1\). If
\[
    0<\gamma\le \frac{1}{4\sqrt2L_1},
\]
then, for every \(k=0,\ldots,K\),
\begin{equation}
\label{eq:app_nstorm_desc_one}
    \Delta^{k+1}
    +
    \frac{\gamma}{2}g_k
    \le
    \Delta^k
    +
    2\gamma b_k
    +
    \sqrt2L_0\gamma^2
    +
    2\sqrt2L_1\gamma^2\E[q_k].
\end{equation}
\end{lemma}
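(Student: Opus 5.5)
Since $\E\Lcal_{\rm sym}^*(1)$ implies a population-level descent inequality, the plan mirrors Lemma~\ref{lem:app_nstorm_desc_alpha} with $\alpha=1$, except that no Young step is needed because $\|\nabla f(x^k)\|$ enters linearly.

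\textbf{Step 1: population descent inequality.} First I extract a descent inequality for $f$ itself from Assumption~\ref{as:elsym_alpha} with $\alpha=1$. By Jensen, $\|\nabla f(y)-\nabla f(x)\|^2\le \E_\xi\|\nabla f(y;\xi)-\nabla f(x;\xi)\|^2$. Following the expected reduction of \citet{chen2023generalized} (Proposition 4, $\alpha=1$ case), this yields, for $\|y-x\|\le\gamma$ and $L_1\gamma$ small enough,
\[
f(y)\le f(x)+\langle\nabla f(x),y-x\rangle+\frac{\|y-x\|^2}{2}\,c\left(L_0+L_1\bigl(\E_\xi\|\nabla f(x;\xi)\|^2\bigr)^{1/2}\right),
\]
with an absolute constant $c$. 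The step condition $\gamma\le 1/(4\sqrt2L_1)$ is what keeps the exponential factor $e^{\sqrt2 L_1\gamma}$ (or its Gronwall analogue along the segment) bounded by a numerical constant. I expect this to be the main obstacle: the factor $c$ must come out as $\sqrt2$ after the moment bound in Step 2, and the maximum over $x_\theta$ must be handled by a Gronwall-type argument on $\theta\mapsto (\E_\xi\|\nabla f(x_\theta;\xi)\|^2)^{1/2}$. Using $\E_\xi\|\nabla f(x_\theta;\xi)\|^2\le 2\|\nabla f(x_\theta)\|^2+2q(x_\theta)^2$ together with the smoothness bound itself closes this argument under the stated step restriction. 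If the constants do not match, I would first try absorbing the extra factor into the condition on $\gamma$.

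\textbf{Step 2: moment bound.} As in Lemma~\ref{lem:app_nstorm_alpha_moment} with exponent one, I use
\[
\bigl(\E_\xi\|\nabla f(x^k;\xi)\|^2\bigr)^{1/2}\le\sqrt2\bigl(\|\nabla f(x^k)\|+q_k\bigr).
\]
Substituting this with $y=x^{k+1}=x^k-\gamma d^k$ gives a curvature term bounded by $\sqrt2L_0\gamma^2+\sqrt2L_1\gamma^2\|\nabla f(x^k)\|\cdot 2+2\sqrt2L_1\gamma^2q_k$, after fixing the constant from Step 1 to be at most $2$.

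\textbf{Step 3: absorption.} Lemma~\ref{lem:app_direction_error} gives $-\gamma\langle\nabla f(x^k),d^k\rangle\le-\gamma\|\nabla f(x^k)\|+2\gamma\|v^k-\nabla f(x^k)\|$. The gradient term $2\sqrt2L_1\gamma^2\|\nabla f(x^k)\|$ is at most $\frac{\gamma}{2}\|\nabla f(x^k)\|$ precisely when $\gamma\le 1/(4\sqrt2L_1)$. Moving that term to the left leaves $-\frac{\gamma}{2}\|\nabla f(x^k)\|$. Subtracting $f^{\inf}$ and taking total expectations then yields \eqref{eq:app_nstorm_desc_one}.
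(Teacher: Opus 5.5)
Your proposal is correct and follows the paper's proof. The paper likewise treats the $\alpha=1$ expected reduction of \citet{chen2023generalized}, passed to $f$ by Jensen, as a black box to get the segment bound with curvature $\|y-x\|^2\bigl(\sqrt2L_0+2\sqrt2L_1\|\nabla f(x)\|+2\sqrt2L_1q(x)\bigr)$, then applies Lemma~\ref{lem:app_direction_error} and absorbs $2\sqrt2L_1\gamma^2\|\nabla f(x^k)\|$ into $\frac{\gamma}{2}\|\nabla f(x^k)\|$ using $\gamma\le 1/(4\sqrt2L_1)$.

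The constant you flagged does close, but you need $c\le 2\sqrt2$, not $c\le 2$. This uses the reduction behind Lemma~\ref{lem:app_nstorm_estimator_bounds}(iii), as implied by its $\sqrt{2e^{3/4}}$ constant: $\E_\xi\|\nabla f(y;\xi)-\nabla f(x;\xi)\|^2\le 2\|y-x\|^2\bigl(L_0^2+2L_1^2\E_\xi\|\nabla f(x;\xi)\|^2\bigr)e^{12L_1^2\|y-x\|^2}$. Integrated along the segment, it gives $c=2e^{6L_1^2\gamma^2}\le 2e^{3/16}<2\sqrt2$. Apply $\mathsf{BG}$-0 only at the base point $x^k$ after this reduction, not inside a Gronwall step along $x_\theta$, since that would introduce $q(x_\theta)$ and extra $B$-terms absent from the statement.
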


\begin{proof}
The expected \(\alpha=1\) generalized smoothness reduction yields the segment
bound
\[
\begin{aligned}
    f(y)
    \le
    f(x)
    +
    \langle\nabla f(x),y-x\rangle
    +
    \|y-x\|^2
    \left(
        \sqrt2L_0
        +
        2\sqrt2L_1\|\nabla f(x)\|
        +
        2\sqrt2L_1q(x)
    \right),
\end{aligned}
\]
whenever \(\|y-x\|\le\gamma\) and
\(\gamma\le1/(4\sqrt2L_1)\). Applying this with
\(x=x^k\), \(y=x^{k+1}\), and using
Lemma~\ref{lem:app_direction_error}, we obtain
\[
\begin{aligned}
    f(x^{k+1})
    \le\;&
    f(x^k)
    -
    \gamma\|\nabla f(x^k)\|
    +
    2\gamma\|v^k-\nabla f(x^k)\| 
    +
    \sqrt2L_0\gamma^2
    +
    2\sqrt2L_1\gamma^2\|\nabla f(x^k)\|
    \\
    &+
    2\sqrt2L_1\gamma^2q_k.
\end{aligned}
\]
The stepsize condition implies
\[
    2\sqrt2L_1\gamma^2\|\nabla f(x^k)\|
    \le
    \frac{\gamma}{2}\|\nabla f(x^k)\|.
\]
Subtracting \(f^{\inf}\), taking expectations, and moving this term to the
left proves \eqref{eq:app_nstorm_desc_one}.
\end{proof}

\subsection{STORM decomposition and centered difference bound}
\label{app:nstorm_decomposition}

\begin{lemma}[STORM decomposition]
\label{lem:app_nstorm_decomp}
Let
\[
    E_k:=v^k-\nabla f(x^k).
\]
Define
\[
\begin{aligned}
    Z_{k+1}
    &:=
    \left(
        \nabla f(x^{k+1};\xi^{k+1})
        -
        \nabla f(x^k;\xi^{k+1})
    \right)
    -
    \left(
        \nabla f(x^{k+1})
        -
        \nabla f(x^k)
    \right),\\
    \zeta_{k+1}
    &:=
    \nabla f(x^k;\xi^{k+1})-\nabla f(x^k).
\end{aligned}
\]
Then
\begin{equation}
\label{eq:app_nstorm_decomp}
    E_{k+1}
    =
    (1-\eta)E_k+Z_{k+1}+\eta\zeta_{k+1}.
\end{equation}
Moreover,
\[
    \E[Z_{k+1}\mid\cF_k]=0,
    \qquad
    \E[\zeta_{k+1}\mid\cF_k]=0.
\]
\end{lemma}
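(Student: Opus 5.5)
The identity \eqref{eq:app_nstorm_decomp} follows by direct algebra from the $\mathsf{NSTORM}$ update in~\eqref{eq:nstorm_updates}; the martingale claims then follow from unbiasedness, once we confirm that both query points are $\cF_k$-measurable.

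\emph{Algebra.} Starting from
\[
v^{k+1}=\nabla f(x^{k+1};\xi^{k+1})+(1-\eta)\bigl(v^k-\nabla f(x^k;\xi^{k+1})\bigr),
\]
subtract $\nabla f(x^{k+1})$ from both sides. Write $v^k=E_k+\nabla f(x^k)$ and group the terms:
\[
E_{k+1}=(1-\eta)E_k+\Bigl[\nabla f(x^{k+1};\xi^{k+1})-\nabla f(x^{k+1})\Bigr]-(1-\eta)\Bigl[\nabla f(x^k;\xi^{k+1})-\nabla f(x^k)\Bigr].
\]
The second bracket is exactly $\zeta_{k+1}$. For the first bracket, add and subtract $\nabla f(x^k;\xi^{k+1})$ and $\nabla f(x^k)$. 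This gives
\[
\nabla f(x^{k+1};\xi^{k+1})-\nabla f(x^{k+1})=Z_{k+1}+\zeta_{k+1}.
\]
Substituting, the right-hand side becomes $(1-\eta)E_k+Z_{k+1}+\zeta_{k+1}-(1-\eta)\zeta_{k+1}$, which equals $(1-\eta)E_k+Z_{k+1}+\eta\zeta_{k+1}$. This is \eqref{eq:app_nstorm_decomp}.

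\emph{Conditional means.} The iterate $x^k$ is $\cF_k$-measurable. The next iterate $x^{k+1}=x^k-\gamma v^k/\|v^k\|$ depends only on $x^k$ and $v^k$, so it is also $\cF_k$-measurable, with the convention $v/\|v\|=0$ ensuring measurability when $v^k=0$. The sample $\xi^{k+1}$ is fresh and conditionally independent of $\cF_k$. Therefore, conditional on $\cF_k$, both query points are fixed, and Assumption~\ref{as:bg0_oracle} applies at each of them:
\[
\E[\nabla f(x^k;\xi^{k+1})\mid\cF_k]=\nabla f(x^k),\qquad \E[\nabla f(x^{k+1};\xi^{k+1})\mid\cF_k]=\nabla f(x^{k+1}).
\]
The first identity gives $\E[\zeta_{k+1}\mid\cF_k]=0$. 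Subtracting the two identities gives $\E[Z_{k+1}\mid\cF_k]=0$.

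\emph{Integrability.} For these conditional expectations to be well defined, the relevant quantities must be integrable. The iterates satisfy $\|x^k-x^0\|\le k\gamma$ by Lemma~\ref{lem:app_nstorm_traj}, so the $\mathsf{BG}$-0 bound gives finite second moments for $\zeta_{k+1}$ and for the stochastic gradient at $x^{k+1}$.

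The only point needing care is this measurability of $x^{k+1}$ with respect to $\cF_k$: it is what allows the same sample $\xi^{k+1}$ to be used at both points without introducing bias. Everything else is routine.
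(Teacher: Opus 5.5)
Your proof is correct and follows the paper's argument. The paper adds and subtracts $(1-\eta)\nabla f(x^k)$ and asserts that the remaining two brackets equal $Z_{k+1}+\eta\zeta_{k+1}$; you make that step explicit via $\nabla f(x^{k+1};\xi^{k+1})-\nabla f(x^{k+1})=Z_{k+1}+\zeta_{k+1}$, then conclude from unbiasedness and the freshness of $\xi^{k+1}$, with your check that $x^{k+1}$ is $\cF_k$-measurable filling in a detail the paper leaves implicit. One small correction to your integrability remark: $\mathsf{BG}$-0 together with $\|x^{k+1}-x^0\|\le(k+1)\gamma$ bounds the second moment of the noise $\nabla f(x^{k+1};\xi^{k+1})-\nabla f(x^{k+1})$, not of the stochastic gradient itself, but that noise bound is all you need, since $Z_{k+1}$ is the difference of the noises at $x^{k+1}$ and $x^k$.
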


\begin{proof}
Subtract \(\nabla f(x^{k+1})\) from the STORM update:
\[
\begin{aligned}
    E_{k+1}
    &=
    \nabla f(x^{k+1};\xi^{k+1})
    -
    \nabla f(x^{k+1})
    +(1-\eta)
    \left(
        v^k-\nabla f(x^k;\xi^{k+1})
    \right).
\end{aligned}
\]
Add and subtract \((1-\eta)\nabla f(x^k)\). Then
\[
\begin{aligned}
    E_{k+1}
    &=
    (1-\eta)
    \left(v^k-\nabla f(x^k)\right)
    +
    \left(
        \nabla f(x^{k+1};\xi^{k+1})
        -
        \nabla f(x^{k+1})
    \right) \\
    &\quad
    -
    (1-\eta)
    \left(
        \nabla f(x^k;\xi^{k+1})
        -
        \nabla f(x^k)
    \right).
\end{aligned}
\]
The last two lines equal \(Z_{k+1}+\eta\zeta_{k+1}\). The conditional
mean-zero properties follow from unbiasedness and the freshness of
\(\xi^{k+1}\).
\end{proof}

\begin{lemma}[Centered Difference Bound]
\label{lem:app_nstorm_peeling}
Suppose that for every \(k=0,\ldots,K\),
\[
    \left(
        \E[\|Z_{k+1}\|^2\mid\cF_k]
    \right)^{1/2}
    \le
    \gamma
    \left(
        a+h\|\nabla f(x^k)\|
    \right),
\]
and
\[
    \left(
        \E[\|\zeta_{k+1}\|^2\mid\cF_k]
    \right)^{1/2}
    \le
    \tau.
\]
Then
\begin{equation}
\label{eq:app_nstorm_Sb_generic}
    S_b(K)
    \le
    \frac{b_0}{\eta}
    +
    T\frac{a\gamma}{\sqrt\eta}
    +
    T\sqrt\eta\,\tau
    +
    \frac{h\gamma}{\eta}
    \sum_{k=0}^K g_k.
\end{equation}
\end{lemma}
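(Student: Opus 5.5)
The plan is to unroll the STORM decomposition \eqref{eq:app_nstorm_decomp} of Lemma~\ref{lem:app_nstorm_decomp}:
\[
    E_k=(1-\eta)^kE_0+\sum_{t=0}^{k-1}(1-\eta)^{k-1-t}Z_{t+1}+\eta\sum_{t=0}^{k-1}(1-\eta)^{k-1-t}\zeta_{t+1}.
\]
Taking norms and expectations, the first term contributes $(1-\eta)^kb_0$. Summing this over $k=0,\ldots,K$ gives at most $b_0/\eta$.

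For the two martingale sums I will use the conditional mean-zero property from Lemma~\ref{lem:app_nstorm_decomp}. This makes the cross terms vanish, so $\E\|\sum_t c_tZ_{t+1}\|^2=\sum_t c_t^2\E\|Z_{t+1}\|^2$, and similarly for $\zeta$. Jensen then moves from the first moment to the second.

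\emph{The $\zeta$ part.} The hypothesis gives $\E\|\zeta_{t+1}\|^2\le\tau^2$. Together with $\sum_t(1-\eta)^{2(k-1-t)}\le 1/\eta$, this yields the bound $\eta\cdot\tau/\sqrt\eta=\sqrt\eta\,\tau$ for each $k$. Summing over $k$ gives $T\sqrt\eta\,\tau$.

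\emph{The $Z$ part.} The hypothesis gives
\[
    \E\|Z_{t+1}\|^2\le\gamma^2\,\E\bigl(a+h\|\nabla f(x^t)\|\bigr)^2 .
\]
I would split it as $(a+hu)^2\le$ a sum, so that the square root of the sum is at most $a\gamma\bigl(\sum_t c_t^2\bigr)^{1/2}+h\gamma\bigl(\sum_t c_t^2\,\E\|\nabla f(x^t)\|^2\bigr)^{1/2}$ via $\sqrt{x+y}\le\sqrt x+\sqrt y$. The constant piece gives $a\gamma/\sqrt\eta$ per $k$, hence $Ta\gamma/\sqrt\eta$ in total.

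\emph{Main obstacle.} The gradient-dependent piece is the hard step. Its natural bound involves $\sqrt{\sum_t c_t^2\,\E\|\nabla f(x^t)\|^2}$, but the target bound is linear in $g_t=\E\|\nabla f(x^t)\|$ and carries the factor $1/\eta$. My first approach avoids second moments entirely. I would write the contribution pathwise, use the triangle inequality rather than orthogonality for this piece, and bound
\[
    \E\Bigl\|\sum_t(1-\eta)^{k-1-t}Z^{(h)}_{t+1}\Bigr\|\le\sum_t(1-\eta)^{k-1-t}\,\E\|Z^{(h)}_{t+1}\|\le\sum_t(1-\eta)^{k-1-t}h\gamma\,g_t .
\]
Here the conditional second moment bound is applied through conditional Jensen as $\E[\|Z\|\mid\cF_t]\le\gamma(a+h\|\nabla f(x^t)\|)$. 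Summing over $k$ then gives $\sum_k\sum_{t<k}(1-\eta)^{k-1-t}h\gamma g_t\le (h\gamma/\eta)\sum_k g_k$.

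To make this rigorous, I would decompose $Z_{t+1}$ itself into a part controlled by $a$ and a part controlled by $h$. If that decomposition is unavailable, I would instead handle the whole $Z$-sum with a case split. On the event where $a$ dominates, I use orthogonality, which gives the $\sqrt\eta$ gain. Otherwise, I use the triangle inequality, which gives the $1/\eta$ factor. Either way, the sum of the per-$k$ bounds over $k=0,\ldots,K$ produces \eqref{eq:app_nstorm_Sb_generic}.
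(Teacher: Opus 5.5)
Your outline matches the paper's: unroll \eqref{eq:app_nstorm_decomp}, use orthogonality for the $\zeta$-sum and for the constant part of the $Z$-sum, use the triangle inequality with conditional Jensen for the gradient-dependent part, then apply $\sum_k\sum_{t<k}(1-\eta)^{k-1-t}g_t\le\frac1\eta\sum_t g_t$. The gap is that the one non-routine step is left unresolved. You never construct the split of $Z_{t+1}$ into an ``$a$-part'' and an ``$h$-part'' that are \emph{both} still martingale differences, with conditional $L^2$ norms at most $\gamma a$ and $\gamma h\|\nabla f(x^t)\|$ respectively.

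None of your alternatives closes this step.
\begin{itemize}
\item The triangle inequality on the whole $Z$-sum gives $Ta\gamma/\eta$ rather than $Ta\gamma/\sqrt\eta$.
\item Orthogonality on the whole sum leaves $\bigl(\sum_t c_t^2\,\E\|\nabla f(x^t)\|^2\bigr)^{1/2}$, which is not controlled by $g_t$.
\item A case split on a global, pathwise event is not $\cF_t$-measurable. Then $\mathbf{1}_{\mathcal E}Z_{t+1}$ is no longer a martingale difference, and orthogonality is lost.
\item A per-step split with the $\cF_t$-measurable indicator $\mathbf{1}\{a\ge h\|\nabla f(x^t)\|\}$ does preserve the martingale structure. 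But it only yields $a+h\|\nabla f(x^t)\|\le 2a$ on one piece and $<2h\|\nabla f(x^t)\|$ on the other. You get $2Ta\gamma/\sqrt\eta$ and $\frac{2h\gamma}{\eta}\sum_k g_k$, which is not \eqref{eq:app_nstorm_Sb_generic} as stated. No threshold removes both factors at once. This weaker version would still give the rates, but only after tightening the absorption conditions in Theorem~\ref{thm:app_nstorm_master}.
\end{itemize}

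The missing idea is a predictable \emph{fractional} split. Put $A=\gamma a$ and $H_t=\gamma h\|\nabla f(x^t)\|$. Let $\lambda_t:=A/(A+H_t)$, with $\lambda_t:=0$ when $A+H_t=0$. Define $Z^{(0)}_{t+1}:=\lambda_tZ_{t+1}$ and $Z^{(1)}_{t+1}:=(1-\lambda_t)Z_{t+1}$.

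Because $\lambda_t\in[0,1]$ is $\cF_t$-measurable, both pieces are martingale differences. Their conditional $L^2$ norms are bounded exactly: $\bigl(\E[\|Z^{(0)}_{t+1}\|^2\mid\cF_t]\bigr)^{1/2}\le\lambda_t(A+H_t)=A$ and $\bigl(\E[\|Z^{(1)}_{t+1}\|^2\mid\cF_t]\bigr)^{1/2}\le(1-\lambda_t)(A+H_t)=H_t$.

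Orthogonality on the $Z^{(0)}$-sum then gives $\gamma a/\sqrt\eta$ for each $k$. The triangle inequality with $\E\|Z^{(1)}_{t+1}\|\le\E H_t=\gamma h g_t$ gives $\gamma h\sum_{t<k}(1-\eta)^{k-1-t}g_t$. Summing over $k$ produces \eqref{eq:app_nstorm_Sb_generic} with the stated constants.
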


\begin{proof}
Unrolling \eqref{eq:app_nstorm_decomp}, for \(k\ge1\),
\[
    E_k
    =
    (1-\eta)^kE_0
    +
    \sum_{t=0}^{k-1}(1-\eta)^{k-1-t}Z_{t+1}
    +
    \eta\sum_{t=0}^{k-1}(1-\eta)^{k-1-t}\zeta_{t+1}.
\]
The \(Z\)-sum is split into a gradient-independent part and a
gradient-dependent part. Let
\[
    A:=\gamma a,
    \qquad
    H_t:=\gamma h\|\nabla f(x^t)\|.
\]
If \(A+H_t>0\), define
\[
    \lambda_t:=\frac{A}{A+H_t};
\]
otherwise set \(\lambda_t:=0\). Then
\[
    Z_{t+1}^{(0)}:=\lambda_tZ_{t+1},
    \qquad
    Z_{t+1}^{(1)}:=(1-\lambda_t)Z_{t+1}
\]
are martingale differences and satisfy
\[
    \left(\E[\|Z_{t+1}^{(0)}\|^2\mid\cF_t]\right)^{1/2}\le A,
    \qquad
    \left(\E[\|Z_{t+1}^{(1)}\|^2\mid\cF_t]\right)^{1/2}\le H_t.
\]
By martingale orthogonality,
\[
    \E\left\|
    \sum_{t=0}^{k-1}(1-\eta)^{k-1-t}Z_{t+1}^{(0)}
    \right\|
    \le
    \frac{\gamma a}{\sqrt\eta}.
\]
For the gradient-dependent part, triangle inequality gives
\[
    \E\left\|
    \sum_{t=0}^{k-1}(1-\eta)^{k-1-t}Z_{t+1}^{(1)}
    \right\|
    \le
    \gamma h
    \sum_{t=0}^{k-1}(1-\eta)^{k-1-t}g_t.
\]
Similarly, for the fresh-noise martingale term,
\[
    \E\left\|
    \eta\sum_{t=0}^{k-1}(1-\eta)^{k-1-t}\zeta_{t+1}
    \right\|
    \le
    \sqrt\eta\,\tau.
\]
Thus,
\[
    b_k
    \le
    (1-\eta)^kb_0
    +
    \frac{\gamma a}{\sqrt\eta}
    +
    \sqrt\eta\,\tau
    +
    \gamma h
    \sum_{t=0}^{k-1}(1-\eta)^{k-1-t}g_t.
\]
Summing over \(k=0,\ldots,K\) and using
\[
    \sum_{k=0}^K(1-\eta)^k\le\frac1\eta,
    \qquad
    \sum_{k=0}^K\sum_{t=0}^{k-1}(1-\eta)^{k-1-t}g_t
    \le
    \frac1\eta\sum_{t=0}^K g_t,
\]
gives \eqref{eq:app_nstorm_Sb_generic}.
\end{proof}

\subsection{Estimator-difference bounds}
\label{app:nstorm_estimator_bounds}

\begin{lemma}[Single-sample estimator bounds]
\label{lem:app_nstorm_estimator_bounds}
Let
\[
    Q_T:=G+BT\gamma.
\]
The following bounds hold.

\begin{enumerate}[label=(\roman*), leftmargin=*]
\item Under Assumption~\ref{as:mss},
\[
    \left(
        \E[\|Z_{k+1}\|^2\mid\cF_k]
    \right)^{1/2}
    \le
    L\gamma,
    \qquad
    \left(
        \E[\|\zeta_{k+1}\|^2\mid\cF_k]
    \right)^{1/2}
    \le
    Q_T.
\]

\item Under Assumption~\ref{as:elsym_alpha} with \(\alpha\in(0,1)\), define
\[
    p:=\frac{\alpha}{1-\alpha},
    \qquad
    K_0:=2^{\frac{2-\alpha}{1-\alpha}}L_0,
    \qquad
    K_1:=2^{\frac{2-\alpha}{1-\alpha}}L_1,
    \qquad
    K_2:=(5L_1)^{\frac1{1-\alpha}},
\]
and
\[
    L_\alpha:=2^{\alpha/2}K_1,
    \qquad
    c_\alpha:=(1-\alpha)\alpha^{\alpha/(1-\alpha)}.
\]
Then, for every \(\lambda>0\),
\[
\begin{aligned}
    \left(
        \E[\|Z_{k+1}\|^2\mid\cF_k]
    \right)^{1/2}
    \le
    \gamma
    \left(
        K_0
        +
        c_\alpha L_\alpha\lambda^{-p}
        +
        L_\alpha Q_T^\alpha
        +
        K_2\gamma^p
        +
        L_\alpha\lambda\|\nabla f(x^k)\|
    \right),
\end{aligned}
\]
and
\[
    \left(
        \E[\|\zeta_{k+1}\|^2\mid\cF_k]
    \right)^{1/2}
    \le
    Q_T.
\]

\item Under Assumption~\ref{as:elsym_alpha} with \(\alpha=1\), if
\(\gamma\le 1/(4L_1)\), then
\[
\begin{aligned}
    \left(
        \E[\|Z_{k+1}\|^2\mid\cF_k]
    \right)^{1/2}
    \le
    \sqrt{2e^{3/4}}\gamma
    \left(
        L_0+2L_1Q_T+2L_1\|\nabla f(x^k)\|
    \right),
\end{aligned}
\]
and
\[
    \left(
        \E[\|\zeta_{k+1}\|^2\mid\cF_k]
    \right)^{1/2}
    \le
    Q_T.
\]
\end{enumerate}
\end{lemma}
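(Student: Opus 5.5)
The plan is to treat the three parts separately, starting from two ingredients in each: (a) the $\zeta$-bound, which is common to all regimes, and (b) a conditional second-moment bound for $Z_{k+1}$ obtained from the relevant smoothness assumption.

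\textbf{The $\zeta$-bound.} Since $\xi^{k+1}$ is fresh and $x^k$ is $\cF_k$-measurable, Assumption~\ref{as:bg0_oracle} gives $\E[\|\zeta_{k+1}\|^2\mid\cF_k]\le q_k^2$. Lemma~\ref{lem:app_nstorm_traj} gives $q_k\le G+Bk\gamma\le Q_T$, and this settles the second inequality in every item. For $Z_{k+1}$, write $\Delta_\xi:=\nabla f(x^{k+1};\xi)-\nabla f(x^k;\xi)$. Then $Z_{k+1}=\Delta_{\xi^{k+1}}-\E_\xi\Delta_\xi$, and the variance is at most the second moment. Hence
\[
\E[\|Z_{k+1}\|^2\mid\cF_k]\le\E_\xi\|\Delta_\xi\|^2 .
\]

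\textbf{Item (i).} Assumption~\ref{as:mss} combined with $\|x^{k+1}-x^k\|\le\gamma$ gives $\E_\xi\|\Delta_\xi\|^2\le L^2\gamma^2$, which is the claim.

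\textbf{Item (ii).} I would invoke the expected generalized-smoothness reduction of \citet{chen2023generalized} (Proposition 4), the same one used in Lemma~\ref{lem:app_nstorm_desc_alpha}. Its mean-square form reads
\[
\bigl(\E_\xi\|\nabla f(y;\xi)-\nabla f(x;\xi)\|^2\bigr)^{1/2}\le\|y-x\|\bigl(K_0+K_1\E_\xi\|\nabla f(x;\xi)\|^\alpha+K_2\|y-x\|^{p}\bigr).
\]
This is the step I expect to be the main obstacle: I must check that the cited proposition delivers the root-mean-square bound with the endpoint moment $\E_\xi\|\nabla f(x;\xi)\|^\alpha$, rather than a max over the segment. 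If it does not, I would reprove it by the Grönwall-type argument along the segment $x_\theta$. That argument bounds $\|\nabla f(x_\theta;\xi)\|\le\|\nabla f(x;\xi)\|+\|\nabla f(x_\theta;\xi)-\nabla f(x;\xi)\|$, and it is this self-bounding step that produces the constants $2^{(2-\alpha)/(1-\alpha)}$ and $(5L_1)^{1/(1-\alpha)}$.

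Granting the reduction, I take $y=x^{k+1}$, $x=x^k$, and use $\|y-x\|\le\gamma$. Lemma~\ref{lem:app_nstorm_alpha_moment} bounds the moment term:
\[
K_1\E_\xi\|\nabla f(x^k;\xi)\|^\alpha\le L_\alpha\bigl(\|\nabla f(x^k)\|^\alpha+q_k^\alpha\bigr).
\]
Then $q_k^\alpha\le Q_T^\alpha$. Remark~\ref{lem:app_nstorm_young} gives $\|\nabla f(x^k)\|^\alpha\le\lambda\|\nabla f(x^k)\|+c_\alpha\lambda^{-p}$. Finally $\gamma^{p}\cdot\gamma$ matches the $K_2\gamma^p$ term, and collecting the pieces yields (ii).

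\textbf{Item (iii).} For $\alpha=1$, I would use the corresponding reduction with the exponential factor. Taking $y=x^{k+1}$, $x=x^k$, it gives
\[
\E_\xi\|\Delta_\xi\|^2\le\gamma^2 e^{c L_1\gamma}\,\E_\xi\bigl(L_0+L_1\|\nabla f(x^k;\xi)\|\bigr)^2 .
\]
I would expand this with $(a+b)^2\le2a^2+2b^2$. The condition $\gamma\le1/(4L_1)$ controls the exponential by $e^{3/4}$. Next, $\E_\xi\|\nabla f(x^k;\xi)\|^2\le2\|\nabla f(x^k)\|^2+2q_k^2$ follows from Assumption~\ref{as:bg0_oracle}. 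Taking square roots with $\sqrt{a^2+b^2+c^2}\le a+b+c$ and $q_k\le Q_T$ then gives
\[
\sqrt{2e^{3/4}}\,\gamma\,(L_0+2L_1Q_T+2L_1\|\nabla f(x^k)\|).
\]
The only delicate point is bookkeeping the exact numeric constants so that they match $\sqrt{2e^{3/4}}$ and the factor $2$. I would adjust the splitting (for instance, pulling $L_0$ out before squaring) until they do.
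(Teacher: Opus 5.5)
Your proposal follows the paper's proof. Item (i) uses variance $\le$ second moment together with $\mathsf{MSS}$. Item (ii) uses the cited expected-smoothness reduction, Lemma~\ref{lem:app_nstorm_alpha_moment}, the Young bound of Remark~\ref{lem:app_nstorm_young}, and $q_k\le Q_T$. Item (iii) uses the $\alpha=1$ reduction, which the paper simply cites without further detail.

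For (iii), the form you guess for the reduction is not quite the right one. The stated constants come out exactly from the form $2\|y-x\|^2\bigl(L_0^2+2L_1^2\E_\xi\|\nabla f(x;\xi)\|^2\bigr)\exp(12L_1^2\|y-x\|^2)$. Under this form, $\gamma\le 1/(4L_1)$ bounds the exponential by $e^{3/4}$. Your bound $\E_\xi\|\nabla f(x^k;\xi)\|^2\le 2\|\nabla f(x^k)\|^2+2q_k^2$ plus a square root then gives $\sqrt{2e^{3/4}}\,\gamma\,(L_0+2L_1q_k+2L_1\|\nabla f(x^k)\|)$, so no further adjustment of the splitting is needed.
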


\begin{proof}
For item (i), conditioning on \(\cF_k\), mean-square smoothness gives
\[
    \E[\|Z_{k+1}\|^2\mid\cF_k]
    \le
    \E_{\xi^{k+1}}
    \|\nabla f(x^{k+1};\xi^{k+1})-\nabla f(x^k;\xi^{k+1})\|^2
    \le
    L^2\gamma^2.
\]
The fresh-noise bound follows from Assumption~\ref{as:bg0_oracle} and
Lemma~\ref{lem:app_nstorm_traj}.

For item (ii), conditioning on \(\cF_k\) and using the expected generalized
smoothness reduction,
\[
\begin{aligned}
    \left(
        \E[\|Z_{k+1}\|^2\mid\cF_k]
    \right)^{1/2}
    \le
    \gamma
    \left(
        K_0
        +
        K_1\E_\xi\|\nabla f(x^k;\xi)\|^\alpha
        +
        K_2\gamma^p
    \right).
\end{aligned}
\]
By Lemma~\ref{lem:app_nstorm_alpha_moment},
\[
    K_1\E_\xi\|\nabla f(x^k;\xi)\|^\alpha
    \le
    L_\alpha
    \left(
        \|\nabla f(x^k)\|^\alpha+q_k^\alpha
    \right).
\]
Using \(q_k\le Q_T\) and
\[
    \|\nabla f(x^k)\|^\alpha
    \le
    \lambda\|\nabla f(x^k)\|
    +
    c_\alpha\lambda^{-p},
\]
gives the displayed \(Z\)-bound. The bound for \(\zeta_{k+1}\) follows from
BG-0 and \(q_k\le Q_T\).

Item (iii) is the corresponding \(\alpha=1\) reduction. It gives
\[
\begin{aligned}
    \left(
        \E[\|Z_{k+1}\|^2\mid\cF_k]
    \right)^{1/2}
    \le
    \sqrt{2e^{3/4}}\gamma
    \left(
        L_0+2L_1q_k+2L_1\|\nabla f(x^k)\|
    \right),
\end{aligned}
\]
and \(q_k\le Q_T\) completes the proof.
\end{proof}

\subsection{Single-sample master bounds}
\label{app:nstorm_master}

\begin{theorem}[Single-sample NSTORM master bounds]
\label{thm:app_nstorm_master}
Consider the single-sample \(\mathsf{NSTORM}\) update
in~\eqref{eq:nstorm_updates}. Suppose Assumptions~\ref{as:lower} and
\ref{as:bg0_oracle} hold. Then the following statements hold.

\begin{enumerate}[label=(\roman*), leftmargin=*]

\item Suppose Assumption~\ref{as:mss} holds. Then
\begin{equation}
\label{eq:app_nstorm_master_mss}
\begin{aligned}
    \E\|\nabla f(\widehat x)\|
    \le\;&
    \frac{\Delta}{\gamma T}
    +
    \frac{2b_0}{\eta T}
    +
    \frac{2L\gamma}{\sqrt\eta}
    +
    2\sqrt\eta(G+BT\gamma)
    +
    \frac{L}{2}\gamma .
\end{aligned}
\end{equation}

\item Suppose Assumption~\ref{as:elsym_alpha} holds with
\(\alpha\in(0,1)\). Define
\[
    p:=\frac{\alpha}{1-\alpha},
    \qquad
    K_0:=2^{\frac{2-\alpha}{1-\alpha}}L_0,
    \qquad
    K_1:=2^{\frac{2-\alpha}{1-\alpha}}L_1,
    \qquad
    K_2:=(5L_1)^{\frac1{1-\alpha}},
\]
and
\[
    L_\alpha:=2^{\alpha/2}K_1,
    \qquad
    c_\alpha:=(1-\alpha)\alpha^{\alpha/(1-\alpha)}.
\]
Let
\[
    Q_T:=G+BT\gamma.
\]
If
\[
    L_\alpha\lambda\gamma\le1,
    \qquad
    8L_\alpha\lambda\frac{\gamma}{\eta}\le1,
\]
then
\begin{equation}
\label{eq:app_nstorm_master_alpha}
\begin{aligned}
    \E\|\nabla f(\widehat x)\|
    \le\;&
    \frac{4\Delta}{\gamma T}
    +
    \frac{8b_0}{\eta T}+
    \frac{8\gamma}{\sqrt\eta}
    \left(
        K_0
        +
        c_\alpha L_\alpha\lambda^{-p}
        +
        L_\alpha Q_T^\alpha
        +
        K_2\gamma^p
    \right)
    +
    8\sqrt\eta\,Q_T
    \\
    &+
    2\gamma
    \left(
        K_0
        +
        c_\alpha L_\alpha\lambda^{-p}
        +
        2K_2\gamma^p
    \right)
    +
    2L_\alpha\gamma
    \frac1T\sum_{k=0}^K \E[q_k^\alpha].
\end{aligned}
\end{equation}

\item Suppose Assumption~\ref{as:elsym_alpha} holds with \(\alpha=1\). If
\[
    \gamma\le \frac{1}{4\sqrt2L_1},
    \qquad
    16\sqrt{2e^{3/4}}L_1\frac{\gamma}{\eta}\le1,
\]
then
\begin{equation}
\label{eq:app_nstorm_master_one}
\begin{aligned}
    \E\|\nabla f(\widehat x)\|
    \le\;&
    \frac{4\Delta}{\gamma T}
    +
    \frac{8b_0}{\eta T}
    +
    \frac{8\gamma}{\sqrt\eta}
    \sqrt{2e^{3/4}}
    \left(
        L_0+2L_1(G+BT\gamma)
    \right)
    +
    8\sqrt\eta(G+BT\gamma)
    \\
    &+
    4\sqrt2L_0\gamma
    +
    8\sqrt2L_1\gamma
    \frac1T\sum_{k=0}^K\E[q_k].
\end{aligned}
\end{equation}
\end{enumerate}
\end{theorem}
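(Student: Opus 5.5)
The three master bounds share one template, which I apply regime by regime. First sum the appropriate descent inequality over $k=0,\ldots,K$ and telescope $\Delta^k$, using $\Delta^{K+1}\ge0$. Then bound $S_b(K)$ by the Centered Difference Bound (Lemma~\ref{lem:app_nstorm_peeling}), feeding in the constants $(a,h,\tau)$ from Lemma~\ref{lem:app_nstorm_estimator_bounds}. Any gradient-feedback term $\frac{h\gamma}{\eta}\sum_k g_k$ is absorbed into the left-hand side. Finally divide by $\gamma T$ times the surviving coefficient and invoke \eqref{eq:app_nstorm_uniform_output}.

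\textbf{Case (i).} Summing Lemma~\ref{lem:app_nstorm_desc_mss} gives
\[
\gamma\sum_{k=0}^K g_k\le \Delta+2\gamma S_b(K)+\tfrac{L}{2}\gamma^2T.
\]
Lemma~\ref{lem:app_nstorm_estimator_bounds}(i) supplies $a=L$, $h=0$, and $\tau=Q_T=G+BT\gamma$. Lemma~\ref{lem:app_nstorm_peeling} then yields
\[
S_b(K)\le b_0/\eta+TL\gamma/\sqrt\eta+T\sqrt\eta\,Q_T.
\]
Substituting and dividing by $\gamma T$ gives \eqref{eq:app_nstorm_master_mss} directly, since no absorption is needed.

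\textbf{Case (ii).} Summing Lemma~\ref{lem:app_nstorm_desc_alpha}, which needs $L_\alpha\lambda\gamma\le1$, gives
\[
\tfrac{\gamma}{2}\sum_k g_k\le \Delta+2\gamma S_b+\tfrac{\gamma^2T}{2}(K_0+c_\alpha L_\alpha\lambda^{-p}+2K_2\gamma^p)+\tfrac{\gamma^2}{2}L_\alpha\sum_k\E[q_k^\alpha].
\]
Lemma~\ref{lem:app_nstorm_estimator_bounds}(ii) supplies $a=K_0+c_\alpha L_\alpha\lambda^{-p}+L_\alpha Q_T^\alpha+K_2\gamma^p$, $h=L_\alpha\lambda$, and $\tau=Q_T$. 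The feedback term is $2\gamma\cdot\frac{h\gamma}{\eta}\sum g_k$. The condition $8L_\alpha\lambda\gamma/\eta\le1$ makes it at most $\frac{\gamma}{4}\sum g_k$, which leaves $\frac{\gamma}{4}\sum g_k$ on the left. Dividing by $\gamma T/4$ produces exactly the constants $4,8,8,8,2,2$ in \eqref{eq:app_nstorm_master_alpha}.

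\textbf{Case (iii).} Summing Lemma~\ref{lem:app_nstorm_desc_one} requires $\gamma\le1/(4\sqrt2L_1)$. This also implies $\gamma\le 1/(4L_1)$, so Lemma~\ref{lem:app_nstorm_estimator_bounds}(iii) applies. It gives $a=\sqrt{2e^{3/4}}(L_0+2L_1Q_T)$, $h=2\sqrt{2e^{3/4}}L_1$, and $\tau=Q_T$. The feedback coefficient is $2\gamma\cdot h\gamma/\eta=4\sqrt{2e^{3/4}}L_1\gamma^2/\eta$. Under $16\sqrt{2e^{3/4}}L_1\gamma/\eta\le1$ this is at most $\gamma/4$, again leaving $\frac{\gamma}{4}\sum g_k$. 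Dividing by $\gamma T/4$ gives \eqref{eq:app_nstorm_master_one}: the terms $\sqrt2L_0\gamma^2T$ and $2\sqrt2L_1\gamma^2\sum\E[q_k]$ become $4\sqrt2L_0\gamma$ and $8\sqrt2L_1\gamma\frac1T\sum\E q_k$.

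The conceptual content is already contained in Lemma~\ref{lem:app_nstorm_peeling}, namely the split of $Z_{k+1}$ into gradient-free and gradient-proportional martingale parts. The only delicate point is the absorption step in (ii) and (iii), where the stated conditions on $\lambda\gamma/\eta$ and $\gamma/\eta$ must be exactly strong enough. I would check that the constants $8L_\alpha$ and $16\sqrt{2e^{3/4}}L_1$ indeed leave a positive fraction of $\gamma\sum g_k$; the remaining bookkeeping is routine.
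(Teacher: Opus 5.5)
Your proposal is correct and follows the paper's proof essentially step for step. In each case you sum the regime-specific descent lemma, bound $S_b(K)$ via Lemma~\ref{lem:app_nstorm_peeling} with the same $(a,h,\tau)$ taken from Lemma~\ref{lem:app_nstorm_estimator_bounds}, absorb the feedback so that $\frac{\gamma}{4}\sum_k g_k$ remains on the left, and divide by $\gamma T/4$ (or by $\gamma T$ in case (i)). Your remark that $\gamma\le 1/(4\sqrt2 L_1)$ implies the hypothesis $\gamma\le 1/(4L_1)$ of Lemma~\ref{lem:app_nstorm_estimator_bounds}(iii) is a check the paper leaves implicit.
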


\begin{proof}
For item (i), summing \eqref{eq:app_nstorm_desc_mss} gives
\[
    \gamma\sum_{k=0}^K g_k
    \le
    \Delta
    +
    2\gamma S_b(K)
    +
    \frac{L}{2}\gamma^2T.
\]
Apply Lemma~\ref{lem:app_nstorm_peeling} with
\[
    a=L,
    \qquad
    h=0,
    \qquad
    \tau=G+BT\gamma.
\]
Then
\[
    S_b(K)
    \le
    \frac{b_0}{\eta}
    +
    T\frac{L\gamma}{\sqrt\eta}
    +
    T\sqrt\eta(G+BT\gamma).
\]
Substituting this into the summed descent inequality and dividing by
\(\gamma T\) gives \eqref{eq:app_nstorm_master_mss}.

For item (ii), summing \eqref{eq:app_nstorm_desc_alpha} gives
\[
\begin{aligned}
    \frac{\gamma}{2}\sum_{k=0}^K g_k
    \le\;&
    \Delta
    +
    2\gamma S_b(K)
    +
    \frac{\gamma^2T}{2}
    \left(
        K_0+c_\alpha L_\alpha\lambda^{-p}+2K_2\gamma^p
    \right)+
    \frac{\gamma^2}{2}L_\alpha
    \sum_{k=0}^K\E[q_k^\alpha].
\end{aligned}
\]
Apply Lemma~\ref{lem:app_nstorm_peeling} with
\[
    a=
    K_0+c_\alpha L_\alpha\lambda^{-p}+L_\alpha Q_T^\alpha+K_2\gamma^p,
    \qquad
    h=L_\alpha\lambda,
    \qquad
    \tau=Q_T.
\]
Then
\[
    S_b(K)
    \le
    \frac{b_0}{\eta}
    +
    T\frac{a\gamma}{\sqrt\eta}
    +
    T\sqrt\eta Q_T
    +
    \frac{L_\alpha\lambda\gamma}{\eta}
    \sum_{k=0}^K g_k.
\]
Substituting this bound into the descent inequality gives a feedback term
\[
    2\gamma\cdot
    \frac{L_\alpha\lambda\gamma}{\eta}
    \sum_{k=0}^K g_k.
\]
The condition
\[
    8L_\alpha\lambda\frac{\gamma}{\eta}\le1
\]
implies that this feedback can be absorbed into the left-hand side. After
absorption and multiplication by \(4\), division by \(\gamma T\) gives
\eqref{eq:app_nstorm_master_alpha}.

For item (iii), sum \eqref{eq:app_nstorm_desc_one}:
\[
    \frac{\gamma}{2}\sum_{k=0}^K g_k
    \le
    \Delta
    +
    2\gamma S_b(K)
    +
    \sqrt2L_0\gamma^2T
    +
    2\sqrt2L_1\gamma^2\sum_{k=0}^K\E[q_k].
\]
Apply Lemma~\ref{lem:app_nstorm_peeling} with
\[
    a=
    \sqrt{2e^{3/4}}
    \left(L_0+2L_1(G+BT\gamma)\right),
    \qquad
    h=2\sqrt{2e^{3/4}}L_1,
    \qquad
    \tau=G+BT\gamma.
\]
The condition
\[
    16\sqrt{2e^{3/4}}L_1\frac{\gamma}{\eta}\le1
\]
absorbs the resulting feedback term. Dividing by \(\gamma T/4\) gives
\eqref{eq:app_nstorm_master_one}.
\end{proof}

\subsection{Proof of Theorem~\ref{thm:nstorm_three_regimes}}
\label{app:nstorm_theorem_proof}

\begin{corollary}[Mean-square smoothness with sharp initialization]
\label{cor:app_nstorm_mss}
Suppose Assumptions~\ref{as:lower}, \ref{as:bg0_oracle}, and
\ref{as:mss} hold. Initialize
\[
    v^0
    =
    \frac1{N_{\rm init}}
    \sum_{i=1}^{N_{\rm init}}\nabla f(x^0;\xi_i^{\rm init}),
    \qquad
    N_{\rm init}
    :=
    \max\left\{
        1,\left\lceil G^2T^{1/2}\right\rceil
    \right\}.
\]
Choose
\[
    \eta=T^{-1},
    \qquad
    \gamma=\gamma_0T^{-3/4}.
\]
Then
\[
\begin{aligned}
\E\|\nabla f(\widehat x)\|
\le\;&
\left(
    \frac{\Delta}{\gamma_0}
    +2
    +2L\gamma_0
    +2B\gamma_0
\right)T^{-1/4}
+
2GT^{-1/2}
+
\frac{L\gamma_0}{2}T^{-3/4}.
\end{aligned}
\]
Consequently,
\[
    \E\|\nabla f(\widehat x)\|
    =
    \Ocal(T^{-1/4}),
\]
and the SFO complexity is \(\Ocal(\varepsilon^{-4})\).
\end{corollary}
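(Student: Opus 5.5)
The plan is to specialize the master bound \eqref{eq:app_nstorm_master_mss} of Theorem~\ref{thm:app_nstorm_master}(i). That bound already holds for an arbitrary initialization $v^0$, so only two things remain: bound $b_0$ under the sharp initialization, and substitute the schedule $\eta=T^{-1}$, $\gamma=\gamma_0T^{-3/4}$ term by term.

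\emph{Initial error.} The samples $\xi_i^{\rm init}$ are independent and, by Assumption~\ref{as:bg0_oracle} at $x=x^0$, each has mean $\nabla f(x^0)$ and variance at most $G^2$. The variance of the average therefore scales as $1/N_{\rm init}$, and Jensen's inequality gives
\[
b_0\le \bigl(\E\|v^0-\nabla f(x^0)\|^2\bigr)^{1/2}\le \frac{G}{\sqrt{N_{\rm init}}}.
\]
If $G^2T^{1/2}\ge 1$, then $N_{\rm init}\ge G^2T^{1/2}$, so $b_0\le T^{-1/4}$. If $G^2T^{1/2}<1$, then $G<T^{-1/4}$, so again $b_0\le G\le T^{-1/4}$. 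In both cases $b_0\le T^{-1/4}$, and the term $\frac{2b_0}{\eta T}=2b_0$ contributes $2T^{-1/4}$. This is the constant $2$ inside the leading coefficient.

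\emph{Substitution.} With $\eta=T^{-1}$ and $\gamma=\gamma_0T^{-3/4}$:
\begin{itemize}
\item $\frac{\Delta}{\gamma T}=\frac{\Delta}{\gamma_0}T^{-1/4}$;
\item $\frac{2L\gamma}{\sqrt\eta}=2L\gamma_0T^{-1/4}$;
\item $2\sqrt\eta\,G=2GT^{-1/2}$;
\item $2\sqrt\eta\,BT\gamma=2B\gamma_0T^{-1/2}T^{1/4}=2B\gamma_0T^{-1/4}$;
\item $\frac{L}{2}\gamma=\frac{L\gamma_0}{2}T^{-3/4}$.
\end{itemize}
Collecting the $T^{-1/4}$ terms gives exactly the stated coefficient $\frac{\Delta}{\gamma_0}+2+2L\gamma_0+2B\gamma_0$, with the two lower-order terms as displayed.

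\emph{Complexity count.} The rate is $\Ocal(T^{-1/4})$, so $T=\Ocal(\varepsilon^{-4})$ iterations suffice. Each iteration uses two oracle calls. The initialization costs $N_{\rm init}=\Ocal(G^2T^{1/2})=\Ocal(\varepsilon^{-2})$ calls, which is dominated by the iteration cost. The total is therefore $\Ocal(\varepsilon^{-4})$.

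There is no real obstacle; the step needing care is the case split for $b_0$ when $N_{\rm init}=1$ (i.e.\ $G^2T^{1/2}<1$). There one uses $b_0\le G<T^{-1/4}$ directly rather than the $1/\sqrt{N_{\rm init}}$ bound.
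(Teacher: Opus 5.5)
Your proposal is correct and follows the paper's proof exactly: bound $b_0\le G/\sqrt{N_{\rm init}}\le T^{-1/4}$, substitute $\eta=T^{-1}$, $\gamma=\gamma_0T^{-3/4}$ term by term into the master bound \eqref{eq:app_nstorm_master_mss}, and count $N_{\rm init}+2T=\Ocal(T)$ oracle calls. Your case split on $G^2T^{1/2}<1$ is harmless but unnecessary, since the definition of $N_{\rm init}$ already gives $N_{\rm init}\ge G^2T^{1/2}$ in every case (including $G=0$), so $G/\sqrt{N_{\rm init}}\le T^{-1/4}$ holds directly.
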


\begin{proof}
By the initialization choice,
\[
    b_0
    =
    \E\|v^0-\nabla f(x^0)\|
    \le
    \frac{G}{\sqrt{N_{\rm init}}}
    \le
    T^{-1/4}.
\]
Apply \eqref{eq:app_nstorm_master_mss}. Under the schedule,
\[
    \frac{\Delta}{\gamma T}
    =
    \frac{\Delta}{\gamma_0}T^{-1/4},
    \qquad
    \frac{2b_0}{\eta T}
    =
    2b_0
    \le
    2T^{-1/4}.
\]
Also,
\[
    \frac{2L\gamma}{\sqrt\eta}
    =
    2L\gamma_0T^{-1/4},
    \qquad
    \frac{L}{2}\gamma
    =
    \frac{L\gamma_0}{2}T^{-3/4},
\]
and
\[
    2\sqrt\eta(G+BT\gamma)
    =
    2GT^{-1/2}
    +
    2B\gamma_0T^{-1/4}.
\]
Combining these estimates proves the displayed bound. Since
\(N_{\rm init}=\Ocal(T^{1/2})\) and each transition uses two SFO calls, the
total SFO cost is \(N_{\rm init}+2T=\Ocal(T)\). Setting
\(T=\Ocal(\varepsilon^{-4})\) gives \(\Ocal(\varepsilon^{-4})\).
\end{proof}

\begin{corollary}[Expected \(\alpha\)-symmetric generalized smoothness]
\label{cor:app_nstorm_alpha}
Suppose Assumptions~\ref{as:lower}, \ref{as:bg0_oracle}, and
\ref{as:elsym_alpha} hold with \(\alpha\in(0,1)\). Define
\[
    p:=\frac{\alpha}{1-\alpha},
    \qquad
    K_0:=2^{\frac{2-\alpha}{1-\alpha}}L_0,
    \qquad
    K_1:=2^{\frac{2-\alpha}{1-\alpha}}L_1,
    \qquad
    K_2:=(5L_1)^{\frac1{1-\alpha}},
\]
and let
\[
    T:=K+1,
    \qquad
    \theta:=\frac{1}{4+\alpha}.
\]
Initialize
\[
    v^0
    =
    \frac1{N_{\rm init}}
    \sum_{i=1}^{N_{\rm init}}\nabla f(x^0;\xi_i^{\rm init}),
    \qquad
    N_{\rm init}
    :=
    \max\left\{
        1,\left\lceil G^2T^{2(1-\alpha)\theta}\right\rceil
    \right\}.
\]
Choose
\[
    \gamma=\gamma_0T^{-(3+\alpha)\theta},
    \qquad
    \eta=\eta_0T^{-4\theta},
    \qquad
    \lambda=\lambda_0T^{-(1-\alpha)\theta},
\]
where \(\eta_0\in(0,1]\), \(\gamma_0>0\), and \(\lambda_0>0\). Suppose
\[
    2^{\alpha/2}K_1\lambda_0\gamma_0\le1,
    \qquad
    2^{(6+\alpha)/2}K_1\lambda_0\frac{\gamma_0}{\eta_0}\le1.
\]
Then the bound stated in Theorem~\ref{thm:nstorm_three_regimes}(ii) holds.
In particular,
\[
    \E\|\nabla f(\widehat x)\|
    =
    \Ocal(T^{-1/(4+\alpha)}),
\]
and the SFO complexity is \(\Ocal(\varepsilon^{-(4+\alpha)})\).
\end{corollary}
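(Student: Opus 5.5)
The proof is a direct instantiation of Theorem~\ref{thm:app_nstorm_master}(ii) with the schedule $\gamma=\gamma_0T^{-(3+\alpha)\theta}$, $\eta=\eta_0T^{-4\theta}$, $\lambda=\lambda_0T^{-(1-\alpha)\theta}$, $\theta=1/(4+\alpha)$. The plan has three parts: verify the two step conditions of the master bound, bound $b_0$ using the initialization batch, and then substitute the schedule term by term into \eqref{eq:app_nstorm_master_alpha}, recording the resulting exponents.

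\textbf{Conditions.} Here $L_\alpha=2^{\alpha/2}K_1$, and
\[
L_\alpha\lambda\gamma=2^{\alpha/2}K_1\lambda_0\gamma_0T^{-(4)\theta}\le 2^{\alpha/2}K_1\lambda_0\gamma_0\le1 .
\]
Also $\lambda\gamma/\eta=(\lambda_0\gamma_0/\eta_0)\,T^{-(1-\alpha+3+\alpha-4)\theta}=\lambda_0\gamma_0/\eta_0$, which is $T$-independent. Hence $8L_\alpha\lambda\gamma/\eta=2^{3+\alpha/2}K_1\lambda_0\gamma_0/\eta_0\le1$ by the second hypothesis, since $2^{(6+\alpha)/2}=2^{3+\alpha/2}$.

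\textbf{Initialization.} Averaging $N_{\rm init}$ independent samples at $x^0$ gives
\[
b_0\le G/\sqrt{N_{\rm init}}\le T^{-(1-\alpha)\theta}.
\]
Consequently
\[
\frac{8b_0}{\eta T}\le \frac{8}{\eta_0}\,T^{4\theta-1-(1-\alpha)\theta}=\frac{8}{\eta_0}\,T^{-\theta},
\]
because $1=(4+\alpha)\theta$.

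\textbf{Substitution.} The remaining terms are handled as follows.
\begin{itemize}
\item $4\Delta/(\gamma T)=(4\Delta/\gamma_0)\,T^{-\theta}$.
\item $\gamma/\sqrt\eta=(\gamma_0/\sqrt{\eta_0})\,T^{-(1+\alpha)\theta}$.
\item Multiplying $\gamma/\sqrt\eta$ by $c_\alpha L_\alpha\lambda^{-p}$, where $\lambda^{-p}=\lambda_0^{-p}T^{\alpha\theta}$, gives $T^{-\theta}$.
\item For $L_\alpha Q_T^\alpha$, use $Q_T^\alpha\le G^\alpha+B^\alpha(T\gamma)^\alpha$ with $T\gamma=\gamma_0T^{\theta}$. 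This produces a $G^\alpha$ term at $T^{-(1+\alpha)\theta}$ and a $B^\alpha\gamma_0^{1+\alpha}$ term at $T^{-\theta}$.
\item $K_2\gamma^{p}\cdot\gamma/\sqrt\eta=K_2\gamma_0^{1/(1-\alpha)}\eta_0^{-1/2}\,T^{-(3+\alpha)\theta/(1-\alpha)+2\theta}$. This equals the stated exponent $-(1+3\alpha)\theta/(1-\alpha)$.
\item $8\sqrt\eta\,Q_T=8\sqrt{\eta_0}\,(G\,T^{-2\theta}+B\gamma_0\,T^{-\theta})$.
\item In the second line of the master bound, $2\gamma K_0$, $2\gamma c_\alpha L_\alpha\lambda^{-p}$ (at $T^{-3\theta}$) and $4K_2\gamma^{1/(1-\alpha)}$ follow directly.
\item The averaged term $\frac1T\sum_k\E[q_k^\alpha]$ is bounded by Lemma~\ref{lem:app_nstorm_traj} with $\beta=\alpha$, giving $G^\alpha+B^\alpha\gamma_0^\alpha T^{\alpha\theta}$. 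The $G^\alpha$ part contributes at $T^{-(3+\alpha)\theta}$ and the $B^\alpha$ part at $T^{-3\theta}$.
\end{itemize}

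\textbf{Assembly and main obstacle.} Collecting all coefficients reproduces the displayed bound in Theorem~\ref{thm:nstorm_three_regimes}(ii). The step requiring care is the exponent bookkeeping, in particular confirming that every exponent is at least $\theta$. The delicate case is the $K_2$ term, whose exponent $(1+3\alpha)/((1-\alpha)(4+\alpha))$ is at least $\theta$ because $1+3\alpha\ge1-\alpha$.

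\textbf{Complexity.} The total number of SFO calls is $N_{\rm init}+2T=\Ocal(T)$, since $2(1-\alpha)\theta<1$. Therefore $T=\Ocal(\varepsilon^{-(4+\alpha)})$ suffices.
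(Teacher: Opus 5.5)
Your proposal is correct and follows the paper's proof essentially step for step. You verify the two step-size conditions of Theorem~\ref{thm:app_nstorm_master}(ii), bound $b_0\le T^{-(1-\alpha)\theta}$ via the initialization batch, substitute the schedule term by term using $Q_T^\alpha\le G^\alpha+B^\alpha\gamma_0^\alpha T^{\alpha\theta}$ and Lemma~\ref{lem:app_nstorm_traj}, and count $N_{\rm init}+2T=\Ocal(T)$ oracle calls; your exponent bookkeeping (the $K_2$ term, and the note that the first hypothesis implies rather than equals $L_\alpha\lambda\gamma\le1$) is slightly more careful than the paper's.
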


\begin{proof}
By the initialization choice,
\[
    b_0
    \le
    \frac{G}{\sqrt{N_{\rm init}}}
    \le
    T^{-(1-\alpha)\theta}.
\]
Let
\[
    L_\alpha:=2^{\alpha/2}K_1,
    \qquad
    c_\alpha:=(1-\alpha)\alpha^{\alpha/(1-\alpha)}.
\]
The two conditions in the corollary are exactly
\[
    L_\alpha\lambda\gamma\le1,
    \qquad
    8L_\alpha\lambda\frac{\gamma}{\eta}\le1.
\]
Therefore, \eqref{eq:app_nstorm_master_alpha} applies.

We now substitute the parameter choices. First,
\[
    \frac{4\Delta}{\gamma T}
    =
    \frac{4\Delta}{\gamma_0}T^{-\theta},
    \qquad
    \frac{8b_0}{\eta T}
    \le
    \frac{8}{\eta_0}T^{-\theta}.
\]
Also,
\[
    Q_T=G+BT\gamma
    =
    G+B\gamma_0T^\theta.
\]
Using \((a+b)^\alpha\le a^\alpha+b^\alpha\),
\[
    Q_T^\alpha
    \le
    G^\alpha+B^\alpha\gamma_0^\alpha T^{\alpha\theta}.
\]
Substituting these estimates into
\eqref{eq:app_nstorm_master_alpha} gives exactly
\[
\begin{aligned}
\E\|\nabla f(\widehat x)\|
\le \Bigg[&
\frac{4\Delta}{\gamma_0}
+
\frac{8}{\eta_0}
+
\frac{
8(1-\alpha)\alpha^{\frac{\alpha}{1-\alpha}}
2^{\alpha/2}K_1\gamma_0\lambda_0^{-\frac{\alpha}{1-\alpha}}
}{\sqrt{\eta_0}}
\\
&+
\frac{
8\cdot 2^{\alpha/2}K_1B^\alpha\gamma_0^{1+\alpha}
}{\sqrt{\eta_0}}
+
8\sqrt{\eta_0}B\gamma_0
\Bigg]T^{-\theta}
\\
&+
\left[
\frac{8K_0\gamma_0}{\sqrt{\eta_0}}
+
\frac{8\cdot 2^{\alpha/2}K_1G^\alpha\gamma_0}{\sqrt{\eta_0}}
\right]T^{-(1+\alpha)\theta}
+
8\sqrt{\eta_0}GT^{-2\theta}
\\
&+
\frac{
8K_2\gamma_0^{\frac1{1-\alpha}}
}{\sqrt{\eta_0}}
T^{-\frac{1+3\alpha}{(1-\alpha)(4+\alpha)}}
+
2K_0\gamma_0T^{-(3+\alpha)\theta}
\\
&+
2(1-\alpha)\alpha^{\frac{\alpha}{1-\alpha}}
2^{\alpha/2}K_1\gamma_0
\lambda_0^{-\frac{\alpha}{1-\alpha}}
T^{-3\theta}
\\
&+
4K_2\gamma_0^{\frac1{1-\alpha}}
T^{-\frac{3+\alpha}{(1-\alpha)(4+\alpha)}}
+
2^{1+\alpha/2}K_1G^\alpha\gamma_0
T^{-(3+\alpha)\theta}
\\
&+
2^{1+\alpha/2}K_1B^\alpha\gamma_0^{1+\alpha}
T^{-3\theta}.
\end{aligned}
\]
This is the displayed bound in Theorem~\ref{thm:nstorm_three_regimes}(ii),
with \(\theta=1/(4+\alpha)\).

Every exponent is at least \(\theta\), so
\[
    \E\|\nabla f(\widehat x)\|
    =
    \Ocal(T^{-\theta})
    =
    \Ocal(T^{-1/(4+\alpha)}).
\]
The initialization cost satisfies
\[
    N_{\rm init}
    =
    \Ocal(T^{2(1-\alpha)/(4+\alpha)})
    =
    o(T),
\]
and each transition uses two SFO calls. Hence the total SFO cost is
\[
    N_{\rm init}+2T=\Ocal(T).
\]
Taking \(T=\Ocal(\varepsilon^{-(4+\alpha)})\) gives
\(\Ocal(\varepsilon^{-(4+\alpha)})\).
\end{proof}

\begin{corollary}[Expected \(1\)-symmetric generalized smoothness]
\label{cor:app_nstorm_one}
Suppose Assumptions~\ref{as:lower}, \ref{as:bg0_oracle}, and
\ref{as:elsym_alpha} hold with \(\alpha=1\). Use one-sample initialization, so
that
\[
    b_0:=\E\|v^0-\nabla f(x^0)\|\le G.
\]
Choose
\[
    \eta=T^{-4/5},
    \qquad
    \gamma=\gamma_0T^{-4/5},
    \qquad
    0<\gamma_0\le
    \frac{1}{16\sqrt{2e^{3/4}}L_1}.
\]
Then
\[
\begin{aligned}
\E\|\nabla f(\widehat x)\|
&\le
\left(
    \frac{4\Delta}{\gamma_0}
    +8b_0
    +8B\gamma_0
    +16\sqrt{2e^{3/4}}L_1B\gamma_0^2
\right)T^{-1/5}
\\
&\quad
+
\left(
    8\sqrt{2e^{3/4}}\gamma_0(L_0+2L_1G)
    +8G
\right)T^{-2/5}
\\
&\quad
+
8\sqrt2L_1B\gamma_0^2T^{-3/5}
+
\left(
    4\sqrt2L_0\gamma_0
    +8\sqrt2L_1G\gamma_0
\right)T^{-4/5}.
\end{aligned}
\]
Consequently,
\[
    \E\|\nabla f(\widehat x)\|
    =
    \Ocal(T^{-1/5}),
\]
and the SFO complexity is \(\Ocal(\varepsilon^{-5})\).
\end{corollary}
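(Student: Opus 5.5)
The bound follows from item (iii) of Theorem~\ref{thm:app_nstorm_master} once we check its two hypotheses and substitute the schedule term by term.

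\emph{Step 1: hypotheses.} With $\gamma=\gamma_0T^{-4/5}\le\gamma_0\le \frac{1}{16\sqrt{2e^{3/4}}L_1}$ we get $\gamma\le\frac{1}{4\sqrt2 L_1}$, since $16\sqrt{2e^{3/4}}\ge 4\sqrt2$. Because $\gamma/\eta=\gamma_0$, the second condition reads $16\sqrt{2e^{3/4}}L_1\gamma_0\le1$, which is exactly the assumed restriction on $\gamma_0$. This explains the choice $\eta$ and $\gamma$ of the same order: the feedback coefficient $h$ is a fixed constant at $\alpha=1$, so $\gamma/\eta$ must stay bounded. Lemma~\ref{lem:app_nstorm_estimator_bounds}(iii) requires $\gamma\le 1/(4L_1)$, and this is also implied.

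\emph{Step 2: substitution into \eqref{eq:app_nstorm_master_one}.} Use $T\gamma=\gamma_0T^{1/5}$, $\sqrt\eta=T^{-2/5}$, and $\gamma/\sqrt\eta=\gamma_0T^{-2/5}$. Then:
\begin{itemize}[leftmargin=*,label={}]
\item $\frac{4\Delta}{\gamma T}=\frac{4\Delta}{\gamma_0}T^{-1/5}$;
\item $\frac{8b_0}{\eta T}=8b_0T^{-1/5}$;
\item $\frac{8\gamma}{\sqrt\eta}\sqrt{2e^{3/4}}\bigl(L_0+2L_1G+2L_1B\gamma_0T^{1/5}\bigr)$ gives $8\sqrt{2e^{3/4}}\gamma_0(L_0+2L_1G)T^{-2/5}+16\sqrt{2e^{3/4}}L_1B\gamma_0^2T^{-1/5}$;
\item $8\sqrt\eta(G+BT\gamma)=8GT^{-2/5}+8B\gamma_0T^{-1/5}$;
\item $4\sqrt2L_0\gamma=4\sqrt2L_0\gamma_0T^{-4/5}$.
\end{itemize}
For the last term, Lemma~\ref{lem:app_nstorm_traj} with $\beta=1$ gives $\frac1T\sum_k\E[q_k]\le G+B\gamma_0T^{1/5}$. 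Hence $8\sqrt2L_1\gamma\cdot(\cdot)\le 8\sqrt2L_1G\gamma_0T^{-4/5}+8\sqrt2L_1B\gamma_0^2T^{-3/5}$.

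\emph{Step 3: grouping.} Collecting these terms by power of $T$ reproduces the stated display exactly. The slowest exponent is $1/5$, so $\E\|\nabla f(\widehat x)\|=\Ocal(T^{-1/5})$. With one-sample initialization and two SFO calls per transition, the total cost is $1+2T=\Ocal(T)$. Setting $T\asymp\varepsilon^{-5}$ therefore gives $\Ocal(\varepsilon^{-5})$ complexity.

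The only delicate point is Step 1, namely confirming that the single constant restriction on $\gamma_0$ covers both stepsize conditions, including the $\gamma\le 1/(4L_1)$ requirement of the $\alpha=1$ centered-difference bound. Everything else is bookkeeping, with care taken to track which $B$-terms pick up the factor $T\gamma=\gamma_0T^{1/5}$.
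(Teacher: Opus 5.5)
Your proposal is correct and follows the paper's proof essentially line by line. You verify both hypotheses of Theorem~\ref{thm:app_nstorm_master}(iii) via $\gamma/\eta=\gamma_0$, substitute $T\gamma=\gamma_0T^{1/5}$ and $\sqrt\eta=T^{-2/5}$ into \eqref{eq:app_nstorm_master_one}, and bound the averaged $q_k$ term by Lemma~\ref{lem:app_nstorm_traj}, exactly as the paper does; your remark that $\gamma\le 1/(4L_1)$, required by Lemma~\ref{lem:app_nstorm_estimator_bounds}(iii), is also implied is a small check the paper leaves implicit.
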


\begin{proof}
The stepsize condition implies
\[
    \gamma\le \frac{1}{4\sqrt2L_1},
    \qquad
    16\sqrt{2e^{3/4}}L_1\frac{\gamma}{\eta}
    =
    16\sqrt{2e^{3/4}}L_1\gamma_0
    \le1.
\]
Thus \eqref{eq:app_nstorm_master_one} applies. Under the schedule,
\[
    \frac{4\Delta}{\gamma T}
    =
    \frac{4\Delta}{\gamma_0}T^{-1/5},
    \qquad
    \frac{8b_0}{\eta T}
    =
    8b_0T^{-1/5}.
\]
Moreover,
\[
    Q_T=G+BT\gamma
    =
    G+B\gamma_0T^{1/5}.
\]
Therefore,
\[
\begin{aligned}
    \frac{8\gamma}{\sqrt\eta}
    \sqrt{2e^{3/4}}
    (L_0+2L_1Q_T)
    &=
    8\sqrt{2e^{3/4}}\gamma_0(L_0+2L_1G)T^{-2/5} \\
    &\quad
    +
    16\sqrt{2e^{3/4}}L_1B\gamma_0^2T^{-1/5},
\end{aligned}
\]
and
\[
    8\sqrt\eta Q_T
    =
    8GT^{-2/5}
    +
    8B\gamma_0T^{-1/5}.
\]
Finally, by Lemma~\ref{lem:app_nstorm_traj},
\[
    \frac1T\sum_{k=0}^K\E[q_k]
    \le
    G+BT\gamma
    =
    G+B\gamma_0T^{1/5}.
\]
Hence
\[
\begin{aligned}
    8\sqrt2L_1\gamma
    \frac1T\sum_{k=0}^K\E[q_k]
    &\le
    8\sqrt2L_1G\gamma_0T^{-4/5}
    +
    8\sqrt2L_1B\gamma_0^2T^{-3/5}.
\end{aligned}
\]
Together with
\[
    4\sqrt2L_0\gamma
    =
    4\sqrt2L_0\gamma_0T^{-4/5},
\]
this proves the displayed bound. The method uses one SFO call for
initialization and two SFO calls per transition, so the total cost is
\(1+2T=\Ocal(T)\). Taking \(T=\Ocal(\varepsilon^{-5})\) gives
\(\Ocal(\varepsilon^{-5})\).
\end{proof}

%%%%%%%%%%%%%%%%%%%%%%%%%%%%%%%%%%%%%%%%%%%%%%%%%%%%%%%%%%%%%%%%%%%%%%%%%%%%
%% Recovery of known rates for NSTORM (deterministic and bounded variance)
%%%%%%%%%%%%%%%%%%%%%%%%%%%%%%%%%%%%%%%%%%%%%%%%%%%%%%%%%%%%%%%%%%%%%%%%%%%%

\subsection{Recovery of bounded variance rates (\texorpdfstring{$B=0$}{B=0})}
\label{app:nstorm_bv_recovery}

When $B=0$, the $\mathsf{BG}$-0 condition reduces to bounded variance
$\E_\xi\|\nabla f(x;\xi)-\nabla f(x)\|^2\le G^2$, and the local noise
scale simplifies to $q_k=G$ for every $k$. In particular, the
trajectory-dependent terms $BT\gamma$ vanish from the master bounds.
Under the bounded variance schedule
$\eta=\eta_0 T^{-2/3}$, $\gamma=\gamma_0 T^{-2/3}$, $\mathsf{NSTORM}$
recovers the standard variance-reduced rate $\Ocal(\varepsilon^{-3})$ for
all three smoothness regimes, matching the SPIDER rate
of~\citet{fang2018spider,chen2023generalized}.

\begin{corollary}[Mean-square smoothness under bounded variance]
\label{cor:app_nstorm_mss_bv}
Suppose Assumptions~\ref{as:lower}, \ref{as:bg0_oracle} (with $B=0$), and
\ref{as:mss} hold. Use one-sample initialization $v^0=\nabla f(x^0;\xi^0)$,
so that $b_0\le G$. Choose
\[
    \eta = \eta_0\,T^{-2/3},
    \qquad
    \gamma = \gamma_0\,T^{-2/3},
\]
where $\eta_0\in(0,1]$ and $\gamma_0>0$. Then
\begin{equation}
\label{eq:app_nstorm_mss_bv_final}
    \E\|\nabla f(\widehat x)\|
    \le
    \left(
        \frac{\Delta}{\gamma_0}
        + \frac{2b_0}{\eta_0}
        + \frac{2L\gamma_0}{\sqrt{\eta_0}}
        + 2\sqrt{\eta_0}\,G
    \right) T^{-1/3}
    + \frac{L\gamma_0}{2}\,T^{-2/3}.
\end{equation}
Consequently,
$\E\|\nabla f(\widehat x)\| = \Ocal(T^{-1/3})$,
and the SFO complexity is $\Ocal(\varepsilon^{-3})$.
\end{corollary}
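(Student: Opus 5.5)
The plan is to specialize the mean-square smoothness master bound \eqref{eq:app_nstorm_master_mss} of Theorem~\ref{thm:app_nstorm_master}(i) to $B=0$. That master bound was derived under Assumptions~\ref{as:lower}, \ref{as:bg0_oracle} and \ref{as:mss} alone, with no constraint on $\eta$ or $\gamma$ beyond $\eta\in(0,1]$. Setting $B=0$ gives $q_k=G$, hence $Q_T=G+BT\gamma=G$. The bound then reads
\[
\E\|\nabla f(\widehat x)\|\le \frac{\Delta}{\gamma T}+\frac{2b_0}{\eta T}+\frac{2L\gamma}{\sqrt\eta}+2\sqrt\eta\,G+\frac{L}{2}\gamma .
\]
With one-sample initialization, $b_0\le G$ follows exactly as in Lemma~\ref{lem:app_traj_control}, since $\|x^0-x^0\|=0$. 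The value of $b_0$ is not needed, however, because the statement keeps $b_0$ symbolic.

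Next I substitute the schedule $\eta=\eta_0T^{-2/3}$ and $\gamma=\gamma_0T^{-2/3}$ term by term:
\[
\frac{\Delta}{\gamma T}=\frac{\Delta}{\gamma_0}T^{-1/3},\qquad
\frac{2b_0}{\eta T}=\frac{2b_0}{\eta_0}T^{-1/3},\qquad
\frac{2L\gamma}{\sqrt\eta}=\frac{2L\gamma_0}{\sqrt{\eta_0}}T^{-2/3+1/3}=\frac{2L\gamma_0}{\sqrt{\eta_0}}T^{-1/3},
\]
\[
2\sqrt\eta\,G=2\sqrt{\eta_0}\,G\,T^{-1/3},\qquad \frac{L}{2}\gamma=\frac{L\gamma_0}{2}T^{-2/3}.
\]
Collecting these terms gives \eqref{eq:app_nstorm_mss_bv_final} exactly. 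The condition $\eta_0\in(0,1]$ guarantees $\eta\le1$, as the recursion requires.

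For the complexity claim, every coefficient is independent of $T$, so the bound is $\Ocal(T^{-1/3})$. The cost is one initialization call plus two SFO calls per transition, for a total of $1+2T=\Ocal(T)$. Taking $T=\Ocal(\varepsilon^{-3})$ then yields $\Ocal(\varepsilon^{-3})$.

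There is no real obstacle here. The only point to check is that Lemma~\ref{lem:app_nstorm_peeling}, with $h=0$ and $\tau=G$, feeds into the master bound without any hidden stepsize requirement. It does not: in the $\mathsf{MSS}$ case no feedback term needs absorbing.
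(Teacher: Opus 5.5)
Your proposal is correct and follows the paper's proof essentially verbatim: set $B=0$ so that $Q_T=G$, apply the mean-square smoothness master bound \eqref{eq:app_nstorm_master_mss}, substitute $\eta=\eta_0T^{-2/3}$ and $\gamma=\gamma_0T^{-2/3}$ term by term, and count $1+2T$ SFO calls. Your remark that the $\mathsf{MSS}$ case uses $h=0$ and therefore needs no absorption condition on $\gamma/\eta$ is also accurate.
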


\begin{proof}
Since $B=0$, we have $q_k=G$ for every $k$ and $Q_T=G$. Apply
\eqref{eq:app_nstorm_master_mss} from Theorem~\ref{thm:app_nstorm_master}(i).
Under the schedule $\eta=\eta_0 T^{-2/3}$ and $\gamma=\gamma_0 T^{-2/3}$,
\[
    \frac{\Delta}{\gamma T}
    = \frac{\Delta}{\gamma_0}\,T^{-1/3},
    \qquad
    \frac{2b_0}{\eta T}
    = \frac{2b_0}{\eta_0}\,T^{-1/3},
\]
\[
    \frac{2L\gamma}{\sqrt{\eta}}
    = \frac{2L\gamma_0}{\sqrt{\eta_0}}\,T^{-1/3},
    \qquad
    \frac{L}{2}\gamma
    = \frac{L\gamma_0}{2}\,T^{-2/3},
\]
and, since $B=0$,
\[
    2\sqrt{\eta}(G+BT\gamma)
    = 2\sqrt{\eta_0}\,G\,T^{-1/3}.
\]
Substitution gives \eqref{eq:app_nstorm_mss_bv_final}. The method uses one
SFO call for initialization and two per transition, giving
$M_K=1+2T=\Ocal(\varepsilon^{-3})$.
\end{proof}

\begin{corollary}[Expected $\alpha$-symmetric generalized smoothness under bounded variance]
\label{cor:app_nstorm_alpha_bv}
Suppose Assumptions~\ref{as:lower}, \ref{as:bg0_oracle} (with $B=0$), and
\ref{as:elsym_alpha} hold with $\alpha\in(0,1)$. Define
$L_\alpha:=2^{\alpha/2}K_1$, $c_\alpha:=(1-\alpha)\alpha^{\alpha/(1-\alpha)},$ $p:=\alpha/(1-\alpha).$
Use one-sample initialization, so that $b_0\le G$. Choose
\[
    \eta = \eta_0\,T^{-2/3},
    \qquad
    \gamma = \gamma_0\,T^{-2/3},
    \qquad
    \lambda = \lambda_0,
\]
where $\eta_0\in(0,1]$, $\gamma_0>0$, and $\lambda_0>0$ satisfy
\[
    L_\alpha\lambda_0\gamma_0\le1,
    \qquad
    8L_\alpha\lambda_0\frac{\gamma_0}{\eta_0}\le1.
\]
Then
\begin{align}
    \E\|\nabla f(\widehat x)\|
    &\le
    \left[
        \frac{4\Delta}{\gamma_0}
        + \frac{8b_0}{\eta_0}
        + \frac{8\gamma_0}{\sqrt{\eta_0}}
        \left(
            K_0 + c_\alpha L_\alpha\lambda_0^{-p} + L_\alpha G^\alpha
        \right)
        + 8\sqrt{\eta_0}\,G
    \right] T^{-1/3}
    \notag\\
    &\quad
    + \frac{8K_2\gamma_0^{p+1}}{\sqrt{\eta_0}}\,T^{-(1+2p)/3}
    + 2\gamma_0
    \left(
        K_0 + c_\alpha L_\alpha\lambda_0^{-p}
    \right) T^{-2/3}
    \notag\\
    &\quad
    + 4K_2\gamma_0^{p+1}\,T^{-2(p+1)/3}
    + 2L_\alpha G^\alpha\gamma_0\,T^{-2/3}.
    \label{eq:app_nstorm_alpha_bv_final}
\end{align}
Consequently,
$\E\|\nabla f(\widehat x)\| = \Ocal(T^{-1/3})$,
and the SFO complexity is $\Ocal(\varepsilon^{-3})$.
\end{corollary}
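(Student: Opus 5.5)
This corollary follows by specializing Theorem~\ref{thm:app_nstorm_master}(ii) to $B=0$ and substituting the bounded-variance schedule. The derivation mirrors Corollary~\ref{cor:app_nstorm_alpha}, but it is simpler: $\lambda=\lambda_0$ is a constant and there is no initialization batch.

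\textbf{Step 1: check the hypotheses of the master bound.} With $\gamma=\gamma_0T^{-2/3}$, $\eta=\eta_0T^{-2/3}$ and $\lambda=\lambda_0$, we have $L_\alpha\lambda\gamma\le L_\alpha\lambda_0\gamma_0\le1$, since $T\ge1$. We also have $8L_\alpha\lambda\gamma/\eta=8L_\alpha\lambda_0\gamma_0/\eta_0\le1$. Hence \eqref{eq:app_nstorm_master_alpha} applies.

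\textbf{Step 2: handle the noise scale.} Since $B=0$, we have $q_k=G$ and $Q_T=G$. Therefore $Q_T^\alpha=G^\alpha$ and $\frac1T\sum_k\E[q_k^\alpha]=G^\alpha$. The one-sample initialization gives $b_0\le G$ by Lemma~\ref{lem:app_nstorm_traj}, but we keep $b_0$ symbolic in the bound.

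\textbf{Step 3: substitute the schedule term by term.}
\begin{itemize}
\item $4\Delta/(\gamma T)=(4\Delta/\gamma_0)T^{-1/3}$ and $8b_0/(\eta T)=(8b_0/\eta_0)T^{-1/3}$.
\item $\gamma/\sqrt\eta=(\gamma_0/\sqrt{\eta_0})T^{-1/3}$. This multiplies $K_0+c_\alpha L_\alpha\lambda_0^{-p}+L_\alpha G^\alpha$ and so contributes to the $T^{-1/3}$ bracket.
\item The $K_2\gamma^p$ piece of that same product gives $8K_2\gamma_0^{p+1}\eta_0^{-1/2}T^{-1/3-2p/3}=8K_2\gamma_0^{p+1}\eta_0^{-1/2}T^{-(1+2p)/3}$.
\item $8\sqrt\eta\,G=8\sqrt{\eta_0}\,G\,T^{-1/3}$.
\item The deterministic group $2\gamma(K_0+c_\alpha L_\alpha\lambda_0^{-p}+2K_2\gamma^p)$ splits into $2\gamma_0(K_0+c_\alpha L_\alpha\lambda_0^{-p})T^{-2/3}$ plus $4K_2\gamma_0^{p+1}T^{-2(p+1)/3}$.
\item The last term becomes $2L_\alpha\gamma G^\alpha=2L_\alpha G^\alpha\gamma_0T^{-2/3}$.
\end{itemize}
Collecting these terms gives exactly \eqref{eq:app_nstorm_alpha_bv_final}.

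\textbf{Step 4: read off the rate and complexity.} Since $p>0$, every exponent appearing is at least $1/3$: the exponents are $(1+2p)/3$, $2/3$ and $2(p+1)/3$. So the bound is $\Ocal(T^{-1/3})$. The SFO cost is $1+2T$, which gives $\Ocal(\varepsilon^{-3})$ complexity.

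There is no genuine obstacle here. The only place that needs care is the $K_2\gamma^p$ exponent bookkeeping, namely confirming that $\gamma^{p+1}/\sqrt\eta$ scales as $T^{-(1+2p)/3}$. The $\lambda$-feedback absorption, which is the delicate part in the $\mathsf{BG}$-0 case, is already handled inside Theorem~\ref{thm:app_nstorm_master}(ii) once Step~1 is verified.
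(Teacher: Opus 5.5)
Your proposal is correct and follows the paper's proof essentially step for step: verify $L_\alpha\lambda\gamma\le1$ and $8L_\alpha\lambda\gamma/\eta\le1$, set $Q_T=G$ and $\frac1T\sum_k\E[q_k^\alpha]=G^\alpha$, substitute the schedule into \eqref{eq:app_nstorm_master_alpha} term by term (including the $\gamma^{p+1}/\sqrt\eta\propto T^{-(1+2p)/3}$ bookkeeping), and read off the exponents and the $1+2T$ SFO cost. One minor correction: $b_0\le G$ follows from Assumption~\ref{as:bg0_oracle} at $x^0$ via Jensen, not from Lemma~\ref{lem:app_nstorm_traj}, although the statement takes it as given anyway.
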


\begin{proof}
Since $B=0$, we have $q_k=G$ for every $k$, $Q_T=G$,
and
\[
    \frac{1}{T}\sum_{k=0}^{K}\E[q_k^\alpha] = G^\alpha.
\]
We verify the conditions of Theorem~\ref{thm:app_nstorm_master}(ii). Since
$T\ge1$,
\[
    L_\alpha\lambda\gamma
    = L_\alpha\lambda_0\gamma_0\,T^{-2/3}
    \le L_\alpha\lambda_0\gamma_0
    \le 1,
\]
and
\[
    8L_\alpha\lambda\frac{\gamma}{\eta}
    = 8L_\alpha\lambda_0\frac{\gamma_0}{\eta_0}
    \le 1.
\]
Apply \eqref{eq:app_nstorm_master_alpha}. Since $B=0$, $Q_T=G$ is constant.
The first two terms give
\[
    \frac{4\Delta}{\gamma T}
    = \frac{4\Delta}{\gamma_0}\,T^{-1/3},
    \qquad
    \frac{8b_0}{\eta T}
    = \frac{8b_0}{\eta_0}\,T^{-1/3}.
\]
The fresh-noise term gives
\[
    8\sqrt{\eta}\,G
    = 8\sqrt{\eta_0}\,G\,T^{-1/3}.
\]
For the estimator-difference term,
\[
\begin{aligned}
    \frac{8\gamma}{\sqrt{\eta}}
    &\left(
        K_0 + c_\alpha L_\alpha\lambda_0^{-p}
        + L_\alpha G^\alpha + K_2\gamma^p
    \right)
    \\
    &= \frac{8\gamma_0}{\sqrt{\eta_0}}
    \left(
        K_0 + c_\alpha L_\alpha\lambda_0^{-p}
        + L_\alpha G^\alpha
    \right) T^{-1/3}
    + \frac{8K_2\gamma_0^{p+1}}{\sqrt{\eta_0}}\,T^{-(1+2p)/3}.
\end{aligned}
\]
The deterministic descent term gives
\[
    2\gamma
    \left(
        K_0 + c_\alpha L_\alpha\lambda_0^{-p}
        + 2K_2\gamma^p
    \right)
    = 2\gamma_0
    \left(
        K_0 + c_\alpha L_\alpha\lambda_0^{-p}
    \right) T^{-2/3}
    + 4K_2\gamma_0^{p+1}\,T^{-2(p+1)/3}.
\]
The averaged BG-0 term gives
\[
    2L_\alpha\gamma\,G^\alpha
    = 2L_\alpha G^\alpha\gamma_0\,T^{-2/3}.
\]
Since $p>0$, the exponents $(1+2p)/3 > 1/3$ and $2(p+1)/3 > 2/3$, so the
dominant rate is $T^{-1/3}$. This proves
\eqref{eq:app_nstorm_alpha_bv_final}.

The method uses one SFO call for initialization and two per transition, giving
$M_K=1+2T=\Ocal(\varepsilon^{-3})$.
\end{proof}

\begin{corollary}[Expected $1$-symmetric generalized smoothness under bounded variance]
\label{cor:app_nstorm_one_bv}
Suppose Assumptions~\ref{as:lower}, \ref{as:bg0_oracle} (with $B=0$), and
\ref{as:elsym_alpha} hold with $\alpha=1$ and constants $L_0,L_1>0$. Use
one-sample initialization, so that $b_0\le G$. Choose
\[
    \eta = \eta_0\,T^{-2/3},
    \qquad
    \gamma = \gamma_0\,T^{-2/3},
\]
where $\eta_0\in(0,1]$ and $\gamma_0>0$. If $L_1>0$, assume
\[
    \gamma_0\le\frac{1}{4\sqrt{2}\,L_1},
    \qquad
    16\sqrt{2e^{3/4}}\,L_1\frac{\gamma_0}{\eta_0}\le1.
\]
Then
\begin{align}
    \E\|\nabla f(\widehat x)\|
    &\le
    \left[
        \frac{4\Delta}{\gamma_0}
        + \frac{8b_0}{\eta_0}
        + \frac{8\sqrt{2e^{3/4}}\,\gamma_0}{\sqrt{\eta_0}}
        \left(L_0+2L_1 G\right)
        + 8\sqrt{\eta_0}\,G
    \right] T^{-1/3}
    \notag\\
    &\quad
    + \left(
        4\sqrt{2}\,L_0\gamma_0
        + 8\sqrt{2}\,L_1 G\gamma_0
    \right) T^{-2/3}.
    \label{eq:app_nstorm_one_bv_final}
\end{align}
Consequently,
$\E\|\nabla f(\widehat x)\| = \Ocal(T^{-1/3})$,
and the SFO complexity is $\Ocal(\varepsilon^{-3})$.
\end{corollary}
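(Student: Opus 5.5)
The proof will follow the template of Corollaries~\ref{cor:app_nstorm_mss_bv} and~\ref{cor:app_nstorm_alpha_bv}. We invoke the master bound \eqref{eq:app_nstorm_master_one} from Theorem~\ref{thm:app_nstorm_master}(iii) and specialize it to $B=0$.

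In that regime, Lemma~\ref{lem:app_nstorm_traj} gives $q_k\le G$ for every $k$. Hence $Q_T=G+BT\gamma=G$ and $\frac1T\sum_{k=0}^K\E[q_k]\le G$, so every trajectory-growth contribution $BT\gamma$ vanishes.

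\textbf{Step 1: the hypotheses of Theorem~\ref{thm:app_nstorm_master}(iii).} Since $T\ge1$, we have $\gamma=\gamma_0T^{-2/3}\le\gamma_0\le 1/(4\sqrt2L_1)$. Because $\gamma/\eta=\gamma_0/\eta_0$ is independent of $T$, the second condition reads $16\sqrt{2e^{3/4}}L_1\gamma/\eta=16\sqrt{2e^{3/4}}L_1\gamma_0/\eta_0\le1$, which is exactly the assumed inequality. This $T$-independence of $\gamma/\eta$ is the only delicate point. It is why the schedule takes $\eta$ and $\gamma$ with the same exponent $-2/3$: the gradient-feedback term with constant coefficient $h=2\sqrt{2e^{3/4}}L_1$ can then be absorbed uniformly in $T$.

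\textbf{Step 2: substitute the schedule term by term.} The terms evaluate as follows:
\begin{itemize}
\item $\frac{4\Delta}{\gamma T}=\frac{4\Delta}{\gamma_0}T^{-1/3}$;
\item $\frac{8b_0}{\eta T}=\frac{8b_0}{\eta_0}T^{-1/3}$;
\item $\frac{8\gamma}{\sqrt\eta}\sqrt{2e^{3/4}}(L_0+2L_1G)=\frac{8\sqrt{2e^{3/4}}\gamma_0}{\sqrt{\eta_0}}(L_0+2L_1G)T^{-2/3+1/3}$, which is of order $T^{-1/3}$;
\item $8\sqrt\eta\,G=8\sqrt{\eta_0}GT^{-1/3}$;
\item $4\sqrt2L_0\gamma=4\sqrt2L_0\gamma_0T^{-2/3}$;
\item $8\sqrt2L_1\gamma\cdot\frac1T\sum_k\E[q_k]\le 8\sqrt2L_1G\gamma_0T^{-2/3}$.
\end{itemize}
Collecting these terms yields \eqref{eq:app_nstorm_one_bv_final} exactly.

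\textbf{Step 3: the rate.} Every exponent in the bound is at least $1/3$, so $\E\|\nabla f(\widehat x)\|=\Ocal(T^{-1/3})$. With one initialization call and two SFO calls per transition, the total cost is $1+2T$. Taking $T=\Ocal(\varepsilon^{-3})$ gives the $\Ocal(\varepsilon^{-3})$ complexity. No step is a genuine obstacle; the only thing to check carefully is that the conditions of the master bound hold uniformly in $T$, as noted in Step~1.
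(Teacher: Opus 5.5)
Your proposal is correct and follows the paper's proof essentially line for line. You verify both hypotheses of Theorem~\ref{thm:app_nstorm_master}(iii) via $T\ge1$ and the $T$-independence of $\gamma/\eta=\gamma_0/\eta_0$, then specialize \eqref{eq:app_nstorm_master_one} to $B=0$ (so $Q_T=G$ and $\frac1T\sum_k\E[q_k]=G$), substitute the schedule term by term, and finish with the same $1+2T=\Ocal(\varepsilon^{-3})$ cost count.
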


\begin{proof}
Since $B=0$, we have $q_k=G$ for every $k$ and
\[
    \frac{1}{T}\sum_{k=0}^{K}\E[q_k] = G.
\]
If $L_1>0$, the conditions of Theorem~\ref{thm:app_nstorm_master}(iii) hold
because
\[
    \gamma = \gamma_0\,T^{-2/3} \le \gamma_0 \le \frac{1}{4\sqrt{2}\,L_1},
\]
and
\[
    16\sqrt{2e^{3/4}}\,L_1\frac{\gamma}{\eta}
    = 16\sqrt{2e^{3/4}}\,L_1\frac{\gamma_0}{\eta_0}
    \le 1.
\]
Apply \eqref{eq:app_nstorm_master_one} with $B=0$. We have
\[
    \frac{4\Delta}{\gamma T}
    = \frac{4\Delta}{\gamma_0}\,T^{-1/3},
    \qquad
    \frac{8b_0}{\eta T}
    = \frac{8b_0}{\eta_0}\,T^{-1/3}.
\]
Since $G+BT\gamma=G$,
\[
    \frac{8\gamma}{\sqrt{\eta}}\sqrt{2e^{3/4}}
    \left(L_0+2L_1 G\right)
    = \frac{8\sqrt{2e^{3/4}}\,\gamma_0}{\sqrt{\eta_0}}
    (L_0+2L_1 G)\,T^{-1/3},
\]
\[
    8\sqrt{\eta}\,G
    = 8\sqrt{\eta_0}\,G\,T^{-1/3},
\]
\[
    4\sqrt{2}\,L_0\gamma
    = 4\sqrt{2}\,L_0\gamma_0\,T^{-2/3},
\]
and
\[
    8\sqrt{2}\,L_1\gamma\,G
    = 8\sqrt{2}\,L_1 G\gamma_0\,T^{-2/3}.
\]
Combining gives \eqref{eq:app_nstorm_one_bv_final}. The SFO cost is
$1+2T=\Ocal(\varepsilon^{-3})$.
\end{proof}

\subsection{Recovery of deterministic rates (\texorpdfstring{$B=G=0$}{B=G=0})}
\label{app:nstorm_det_recovery}

When the oracle is deterministic ($B=G=0$), the stochastic gradient noise
vanishes. Therefore $v^k=\nabla f(x^k)$, $b_k=0$, and $q_k=0$ for all $k$.
The $\mathsf{NSTORM}$ update reduces to normalized gradient descent, and the
estimator recursion plays no role. The classical $\Ocal(\varepsilon^{-2})$
rate is recovered for all three smoothness regimes.

\begin{corollary}[Mean-square smoothness, deterministic]
\label{cor:app_nstorm_mss_det}
Suppose Assumptions~\ref{as:lower} and~\ref{as:mss} hold, and the oracle is
deterministic ($B=G=0$). Consider the normalized gradient descent update
\[
    x^{k+1} = x^k - \gamma\,\frac{\nabla f(x^k)}{\|\nabla f(x^k)\|}.
\]
Choose $\gamma = \gamma_0\,T^{-1/2}$. Then
\begin{equation}
\label{eq:app_nstorm_mss_det_final}
    \|\nabla f(\widehat x)\|
    \le
    \frac{\Delta}{\gamma_0}\,T^{-1/2}
    + \frac{L\gamma_0}{2}\,T^{-1/2}.
\end{equation}
Consequently,
$\|\nabla f(\widehat x)\| = \Ocal(T^{-1/2})$,
and the iteration complexity is $\Ocal(\varepsilon^{-2})$.
\end{corollary}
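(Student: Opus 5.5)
The approach is to specialize the mean-square-smooth descent inequality of Lemma~\ref{lem:app_nstorm_desc_mss} to the deterministic oracle. When $B=G=0$, every stochastic gradient equals $\nabla f(x^k)$. The STORM recursion then gives $v^{k+1}=\nabla f(x^{k+1})+(1-\eta)(v^k-\nabla f(x^k))$. Together with $v^0=\nabla f(x^0)$ (exact, since $G=0$), induction yields $v^k=\nabla f(x^k)$ for all $k$, so $b_k=0$ and the update is exactly normalized gradient descent as displayed in the corollary. All quantities are deterministic, so expectations can be dropped and $\E\|\nabla f(\widehat x)\|$ coincides with the uniform average $\frac1T\sum_{k=0}^K\|\nabla f(x^k)\|$ from \eqref{eq:app_nstorm_uniform_output}. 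We interpret $\|\nabla f(\widehat x)\|$ in the statement in this averaged sense, as in the earlier deterministic corollaries.

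The key steps are as follows. First, Assumption~\ref{as:mss} with Jensen gives $L$-smoothness of $f$. Applying the descent lemma at $y=x^{k+1}=x^k-\gamma d^k$, with $\|x^{k+1}-x^k\|\le\gamma$, gives
\[
f(x^{k+1})\le f(x^k)-\gamma\langle\nabla f(x^k),d^k\rangle+\tfrac L2\gamma^2 .
\]
Since $d^k=\nabla f(x^k)/\|\nabla f(x^k)\|$, we have $\langle\nabla f(x^k),d^k\rangle=\|\nabla f(x^k)\|$; the convention $d^k=0$ keeps this valid when the gradient vanishes. This is \eqref{eq:app_nstorm_desc_mss} with $b_k=0$. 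Second, we telescope over $k=0,\dots,K$ and use $f(x^{K+1})\ge f^{\inf}$ (Assumption~\ref{as:lower}) to obtain
\[
\gamma\sum_{k=0}^K\|\nabla f(x^k)\|\le\Delta+\tfrac L2\gamma^2T .
\]
Third, dividing by $\gamma T$ and substituting $\gamma=\gamma_0T^{-1/2}$ gives $\frac{\Delta}{\gamma_0}T^{-1/2}+\frac{L\gamma_0}{2}T^{-1/2}$, which is \eqref{eq:app_nstorm_mss_det_final}. Each iteration uses one (exact) gradient evaluation, so reaching $\varepsilon$ requires $T=\Ocal(\varepsilon^{-2})$.

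There is no real obstacle here. The only point needing care is justifying $v^k=\nabla f(x^k)$ irrespective of $\eta$, so that the estimator recursion and the peeling Lemma~\ref{lem:app_nstorm_peeling} are not needed. This follows from the induction above, because the correction term $v^k-\nabla f(x^k;\xi^{k+1})$ vanishes identically once the oracle is exact.
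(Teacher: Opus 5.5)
Your proposal is correct and matches the paper's proof essentially step for step. Both derive $L$-smoothness of $f$ from Assumption~\ref{as:mss} via Jensen, use the one-step descent with $\langle\nabla f(x^k),d^k\rangle=\|\nabla f(x^k)\|$, telescope with $f(x^{K+1})\ge f^{\inf}$, and divide by $\gamma T$ before substituting $\gamma=\gamma_0T^{-1/2}$. Your induction showing $v^k=\nabla f(x^k)$ from the STORM recursion, which the paper simply asserts ``under determinism'', is a sensible clarification rather than a different route. So is your reading of $\|\nabla f(\widehat x)\|$ as the uniform average.
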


\begin{proof}
Under determinism, $v^k=\nabla f(x^k)$ and $b_k=0$ for all $k$. Since
$d^k=\nabla f(x^k)/\|\nabla f(x^k)\|$ (when nonzero), we have
$\langle\nabla f(x^k),d^k\rangle=\|\nabla f(x^k)\|$.

Assumption~\ref{as:mss} implies $L$-smoothness via Jensen
(Lemma~\ref{lem:app_nstorm_desc_mss}). The one-step descent becomes
\[
    f(x^{k+1})
    \le f(x^k)
    - \gamma\|\nabla f(x^k)\|
    + \frac{L}{2}\gamma^2.
\]
Summing from $k=0$ to $K$ and using $f(x^{K+1})\ge f^{\inf}$,
\[
    \gamma\sum_{k=0}^{K}\|\nabla f(x^k)\|
    \le \Delta + \frac{L}{2}\gamma^2 T.
\]
Dividing by $\gamma T$ and substituting $\gamma=\gamma_0 T^{-1/2}$ gives
\eqref{eq:app_nstorm_mss_det_final}.
\end{proof}

\begin{corollary}[Expected $\alpha$-symmetric generalized smoothness, deterministic]
\label{cor:app_nstorm_alpha_det}
Suppose Assumptions~\ref{as:lower} and~\ref{as:elsym_alpha} hold with
$\alpha\in(0,1)$ and constants $K_0, K_1, K_2>0$, and the oracle is
deterministic ($B=G=0$). Define $L_\alpha:=2^{\alpha/2}K_1$,
$c_\alpha:=(1-\alpha)\alpha^{\alpha/(1-\alpha)}$, and $p:=\alpha/(1-\alpha)$.
Consider the normalized gradient descent update. Choose
$\gamma = \gamma_0\,T^{-1/2}$ and let $\lambda_0>0$ satisfy
$L_\alpha\lambda_0\gamma_0\le1$. Then
\begin{equation}
\label{eq:app_nstorm_alpha_det_final}
    \|\nabla f(\widehat x)\|
    \le
    \frac{2\Delta}{\gamma_0}\,T^{-1/2}
    + \gamma_0
    \left(
        K_0 + c_\alpha L_\alpha\lambda_0^{-p} + 2K_2\gamma_0^p T^{-p/2}
    \right) T^{-1/2}.
\end{equation}
Consequently,
$\|\nabla f(\widehat x)\| = \Ocal(T^{-1/2})$,
and the iteration complexity is $\Ocal(\varepsilon^{-2})$.
\end{corollary}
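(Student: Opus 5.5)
The plan is to run the pure descent argument, with no estimator recursion, in the deterministic setting. Here $B=G=0$, so $v^k=\nabla f(x^k)$ and $b_k=0$ for every $k$. I would start from Lemma~\ref{lem:app_nstorm_desc_alpha}. Its proof uses only the expected generalized smoothness reduction from \citet{chen2023generalized}, Lemma~\ref{lem:app_nstorm_alpha_moment}, Lemma~\ref{lem:app_direction_error}, and the Young inequality in the form of Remark~\ref{lem:app_nstorm_young}.

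In the deterministic case the sample gradient equals $\nabla f(x)$ and $q_k=0$. The moment term therefore reduces to $K_1\|\nabla f(x^k)\|^\alpha\le L_\alpha\|\nabla f(x^k)\|^\alpha$, and the $\E[q_k^\alpha]$ term vanishes. Since $L_\alpha\ge K_1$, using $L_\alpha$ is a harmless enlargement.

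Applying the lemma with $\lambda=\lambda_0$ needs $L_\alpha\lambda_0\gamma\le 1$. This follows from $\gamma=\gamma_0T^{-1/2}\le\gamma_0$ and the hypothesis $L_\alpha\lambda_0\gamma_0\le1$. Because everything is deterministic, there is no expectation or concavity step, and $g_k^\alpha$ is simply $\|\nabla f(x^k)\|^\alpha$. I would then verify the pathwise inequality
\[
f(x^{k+1})+\frac{\gamma}{2}\|\nabla f(x^k)\|\le f(x^k)+\frac{\gamma^2}{2}\left(K_0+c_\alpha L_\alpha\lambda_0^{-p}+2K_2\gamma^p\right).
\]

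Next, I would sum this over $k=0,\ldots,K$ and use $f(x^{K+1})\ge f^{\inf}$. This gives
\[
\frac{\gamma}{2}\sum_{k=0}^K\|\nabla f(x^k)\|\le \Delta+\frac{\gamma^2T}{2}\left(K_0+c_\alpha L_\alpha\lambda_0^{-p}+2K_2\gamma^p\right).
\]
Dividing by $\gamma T/2$ and using the uniform-output identity \eqref{eq:app_nstorm_uniform_output} bounds the average gradient norm by $2\Delta/(\gamma T)+\gamma(K_0+c_\alpha L_\alpha\lambda_0^{-p}+2K_2\gamma^p)$. The output statement should be read as this average, with no expectation left beyond the uniform sampling of $\widehat x$.

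Substituting $\gamma=\gamma_0T^{-1/2}$ turns $2\Delta/(\gamma T)$ into $(2\Delta/\gamma_0)T^{-1/2}$ and $\gamma^p$ into $\gamma_0^pT^{-p/2}$. This yields exactly \eqref{eq:app_nstorm_alpha_det_final}. Since $p>0$, the last term decays faster than $T^{-1/2}$, so the rate is $\Ocal(T^{-1/2})$ and the iteration complexity is $\Ocal(\varepsilon^{-2})$.

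The only delicate point is justifying that the descent reduction from \citet{chen2023generalized}, Proposition~4, which is invoked in Lemma~\ref{lem:app_nstorm_desc_alpha}, specializes correctly when the oracle is exact. With a deterministic oracle, Assumption~\ref{as:elsym_alpha} collapses to the $\Lcal_{\rm sym}^*(\alpha)$ condition, so the same segment bound holds with $K_1\|\nabla f(x^k)\|^\alpha$ in place of the sample moment. I would state this explicitly rather than route it through Lemma~\ref{lem:app_nstorm_alpha_moment}. If that specialization needed checking, I would fall back on \eqref{eq:app_alpha_func_diff}, whose constants differ, but the constants $K_0,K_1,K_2$ here are taken as given by the statement.
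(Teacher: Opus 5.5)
Your proposal is correct and matches the paper's proof step for step: specialize Lemma~\ref{lem:app_nstorm_desc_alpha} to $b_k=0$, $q_k=0$, $\lambda=\lambda_0$, sum over $k=0,\ldots,K$ using $f(x^{K+1})\ge f^{\inf}$, divide by $\gamma T/2$, and substitute $\gamma=\gamma_0T^{-1/2}$. You also make explicit two things the paper leaves implicit, and both are right: the check $L_\alpha\lambda_0\gamma\le L_\alpha\lambda_0\gamma_0\le 1$, and reading the left-hand side as the average $\frac1T\sum_{k}\|\nabla f(x^k)\|$ over the uniform choice of $\widehat x$ rather than as a bound holding for every realization.
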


\begin{proof}
Under determinism, $v^k=\nabla f(x^k)$, $b_k=0$, and $q_k=0$ for all $k$.
The one-step descent from Lemma~\ref{lem:app_nstorm_desc_alpha} with $B_k=0$
and $\E[q_k^\alpha]=0$ becomes
\[
    f(x^{k+1})
    + \frac{\gamma}{2}\|\nabla f(x^k)\|
    \le f(x^k)
    + \frac{\gamma^2}{2}
    \left(
        K_0 + c_\alpha L_\alpha\lambda_0^{-p} + 2K_2\gamma^p
    \right).
\]
Summing from $k=0$ to $K$, using $f(x^{K+1})\ge f^{\inf}$, and dividing by
$\gamma T/2$ gives
\[
    \frac{1}{T}\sum_{k=0}^{K}\|\nabla f(x^k)\|
    \le
    \frac{2\Delta}{\gamma T}
    + \gamma
    \left(
        K_0 + c_\alpha L_\alpha\lambda_0^{-p} + 2K_2\gamma^p
    \right).
\]
Substituting $\gamma=\gamma_0 T^{-1/2}$ gives
\[
    \frac{2\Delta}{\gamma T}
    = \frac{2\Delta}{\gamma_0}\,T^{-1/2},
\]
and
\[
    \gamma\left(K_0 + c_\alpha L_\alpha\lambda_0^{-p}\right)
    = \gamma_0\left(K_0 + c_\alpha L_\alpha\lambda_0^{-p}\right) T^{-1/2},
\]
and
\[
    2K_2\gamma^{p+1}
    = 2K_2\gamma_0^{p+1}\,T^{-(p+1)/2}.
\]
Since $p>0$, the last term decays faster than $T^{-1/2}$. This proves
\eqref{eq:app_nstorm_alpha_det_final} and the $\Ocal(T^{-1/2})$ rate.
\end{proof}

\begin{corollary}[Expected $1$-symmetric generalized smoothness, deterministic]
\label{cor:app_nstorm_one_det}
Suppose Assumptions~\ref{as:lower} and~\ref{as:elsym_alpha} hold with
$\alpha=1$ and constants $L_0,L_1>0$, and the oracle is deterministic
($B=G=0$). Consider the normalized gradient descent update. Choose
$\gamma = \gamma_0\,T^{-1/2}$ with $\gamma_0\le 1/(4\sqrt{2}\,L_1)$ if
$L_1>0$. Then
\begin{equation}
\label{eq:app_nstorm_one_det_final}
    \|\nabla f(\widehat x)\|
    \le
    \frac{2\Delta}{\gamma_0}\,T^{-1/2}
    + 2\sqrt{2}\,L_0\gamma_0\,T^{-1/2}.
\end{equation}
Consequently,
$\|\nabla f(\widehat x)\| = \Ocal(T^{-1/2})$,
and the iteration complexity is $\Ocal(\varepsilon^{-2})$.
\end{corollary}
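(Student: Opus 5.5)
The plan is to repeat the deterministic argument already used in Corollary~\ref{cor:app_nstorm_mss_det} and Corollary~\ref{cor:app_nsgdm_sym1_det}, now for the $\alpha=1$ expected-smoothness descent bound. Under a deterministic oracle ($B=G=0$) we have $\nabla f(x;\xi)=\nabla f(x)$ for every $\xi$. Hence $v^k=\nabla f(x^k)$, $b_k=0$, and $q_k=q(x^k)=0$ for all $k$, and the update reduces to normalized gradient descent. In this regime, Assumption~\ref{as:elsym_alpha} with $\alpha=1$ is just the deterministic $\Lcal^*_{\rm sym}(1)$ condition, since the expectation over $\xi$ is trivial.

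\textbf{Step 1: verify the stepsize hypothesis.} Lemma~\ref{lem:app_nstorm_desc_one} requires $\gamma\le 1/(4\sqrt2 L_1)$. This holds because $\gamma=\gamma_0T^{-1/2}\le\gamma_0\le 1/(4\sqrt2L_1)$, using $T\ge1$.

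\textbf{Step 2: apply the descent inequality.} Invoking \eqref{eq:app_nstorm_desc_one} with $b_k=0$ and $\E[q_k]=0$ gives
\[
f(x^{k+1})+\tfrac{\gamma}{2}\|\nabla f(x^k)\|\le f(x^k)+\sqrt2L_0\gamma^2 .
\]
Equivalently, one can redo that lemma's proof directly: use the segment bound with $q=0$, use $\langle\nabla f(x^k),d^k\rangle=\|\nabla f(x^k)\|$, and absorb $2\sqrt2L_1\gamma^2\|\nabla f(x^k)\|\le\frac{\gamma}{2}\|\nabla f(x^k)\|$.

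\textbf{Step 3: telescope and substitute the schedule.} Summing over $k=0,\dots,K$ and using $f(x^{K+1})\ge f^{\inf}$ gives
\[
\tfrac{\gamma}{2}\sum_{k=0}^K\|\nabla f(x^k)\|\le\Delta+\sqrt2L_0\gamma^2T .
\]
Dividing by $\gamma T/2$ and using the uniform-output identity \eqref{eq:app_nstorm_uniform_output} yields
\[
\|\nabla f(\widehat x)\|\le \frac{2\Delta}{\gamma T}+2\sqrt2L_0\gamma .
\]
Substituting $\gamma=\gamma_0T^{-1/2}$ gives exactly \eqref{eq:app_nstorm_one_det_final}. The $\Ocal(\varepsilon^{-2})$ iteration complexity then follows by setting $T\asymp\varepsilon^{-2}$.

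\textbf{Edge cases.} The case $L_1=0$ reduces to standard smoothness and needs no stepsize condition. The degenerate case $\nabla f(x^k)=0$ is handled by the convention $d^k=0$, under which Lemma~\ref{lem:app_direction_error} still applies.

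\textbf{Main obstacle.} The only delicate point is confirming that the segment bound behind Lemma~\ref{lem:app_nstorm_desc_one} applies with $q\equiv0$. This requires no new estimate, because the deterministic case is a special case of $\mathsf{BG}$-0 with $G=B=0$. Everything else is routine.
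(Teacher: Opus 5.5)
Your proposal is correct and follows essentially the same route as the paper's proof: set $b_k=q_k=0$ in Lemma~\ref{lem:app_nstorm_desc_one}, use the stepsize condition to absorb the $2\sqrt2L_1\gamma^2\|\nabla f(x^k)\|$ term, telescope, divide by $\gamma T/2$, and substitute $\gamma=\gamma_0T^{-1/2}$. As in the paper, what you actually bound is the average $\frac1T\sum_{k=0}^K\|\nabla f(x^k)\|$, i.e., $\E\|\nabla f(\widehat x)\|$ over the uniform draw of the output (which also bounds $\min_k\|\nabla f(x^k)\|$), rather than $\|\nabla f(\widehat x)\|$ pointwise.
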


\begin{proof}
Under determinism, $v^k=\nabla f(x^k)$, $b_k=0$, and $q_k=0$ for all $k$.
The one-step descent from Lemma~\ref{lem:app_nstorm_desc_one} becomes
\[
    f(x^{k+1})
    + \frac{\gamma}{2}\|\nabla f(x^k)\|
    \le f(x^k)
    + \sqrt{2}\,L_0\gamma^2.
\]
Indeed, the $2\sqrt{2}\,L_1\gamma^2\|\nabla f(x^k)\|$ term is absorbed by
$(\gamma/2)\|\nabla f(x^k)\|$ via the stepsize condition
$2\sqrt{2}\,L_1\gamma \le 1/2$, and the $q_k$-dependent term vanishes.

Summing from $k=0$ to $K$, using $f(x^{K+1})\ge f^{\inf}$, and dividing by
$\gamma T/2$ gives
\[
    \frac{1}{T}\sum_{k=0}^{K}\|\nabla f(x^k)\|
    \le
    \frac{2\Delta}{\gamma T}
    + 2\sqrt{2}\,L_0\gamma.
\]
Substituting $\gamma=\gamma_0 T^{-1/2}$ proves
\eqref{eq:app_nstorm_one_det_final}.
\end{proof}

\subsection{Discussion on recovered rates}
\label{app:nstorm_recovery_discussion}

The six recovery corollaries above confirm the claims made in the discussion
following Theorem~\ref{thm:nstorm_three_regimes}.

\begin{itemize}[leftmargin=*,label=\textbullet]
    \item \textit{Bounded variance recovery.}
    When $B=0$, Corollaries~\ref{cor:app_nstorm_mss_bv},
    \ref{cor:app_nstorm_alpha_bv}, and~\ref{cor:app_nstorm_one_bv} all achieve
    $\Ocal(\varepsilon^{-3})$ SFO complexity with the bounded variance
    schedule $\eta=\eta_0 T^{-2/3}$, $\gamma=\gamma_0 T^{-2/3}$. This
    recovers the standard variance-reduced rate
    of~\citet{fang2018spider,cutkosky2019momentum,chen2023generalized}
    under mean-square smoothness and expected generalized smoothness alike.

    \item \textit{Deterministic recovery.}
    When $B=G=0$, Corollaries~\ref{cor:app_nstorm_mss_det},
    \ref{cor:app_nstorm_alpha_det}, and~\ref{cor:app_nstorm_one_det} all
    achieve $\Ocal(\varepsilon^{-2})$ iteration complexity. Since $\mathsf{NSTORM}$ reduces to normalized gradient descent in the deterministic setting, these rates match the classical first-order complexity of~\citet{nesterov2004introductory} and the generalized smooth
    rates of~\citet{chen2023generalized}.

    \item \textit{Schedule transition.}
    As the noise weakens from $\mathsf{BG}$-0 to bounded variance to
    deterministic, the $\mathsf{NSTORM}$ schedules and rates transition as
    follows for the mean-square smoothness case:
    \[
        (\eta,\,\gamma)
        =
        \begin{cases}
            (T^{-1},\;\gamma_0 T^{-3/4}), & \mathsf{BG}\text{-}0
            \text{ (sharp init.)}, \\[3pt]
            (\eta_0 T^{-2/3},\;\gamma_0 T^{-2/3}), &
            \text{bounded variance}, \\[3pt]
            (1,\;\gamma_0 T^{-1/2}), & \text{deterministic},
        \end{cases}
    \]
    yielding rates $T^{-1/4}$, $T^{-1/3}$, and $T^{-1/2}$, respectively.
    The $\eta$ parameter decays more slowly as the noise weakens, because the
    STORM recursion needs less aggressive averaging when the fresh noise and
    trajectory-growth contributions are smaller.

    For the expected $\alpha$-generalized smoothness case, the transition is:
    \[
        \text{SFO complexity}
        =
        \begin{cases}
            \Ocal(\varepsilon^{-(4+\alpha)}), &
            \mathsf{BG}\text{-}0, \\[3pt]
            \Ocal(\varepsilon^{-3}), &
            \text{bounded variance}, \\[3pt]
            \Ocal(\varepsilon^{-2}), & \text{deterministic}.
        \end{cases}
    \]
    Thus the $\alpha$-dependent price $\Ocal(\varepsilon^{-(4+\alpha)})$
    under $\mathsf{BG}$-0 is entirely an artifact of the interaction
    between distance-dependent variance and gradient-dependent curvature; it
    disappears when either source of difficulty is removed.
\end{itemize}

\newpage
\section{Experimental Details}

\subsection{\texorpdfstring{$\mathsf{BG}$-0}{BG-0} Wrapper}\label{app:exp_bg0_oracle}
Given a fixed deterministic objective $f$, the $\mathsf{BG}$-0 wrapper turns the setup into a stochastic problem in which the stochastic gradients satisfy the $\mathsf{BG}$-0 variance model exactly.

Given the independent random variables $\rho$ and $u$ satisfying
\[
\E[\rho]=0,\qquad \E[\rho^2]=1,\qquad
\E[u]=0,\qquad \E\|u\|^2=1.
\]
where  $\rho\in\{-1,+1\}$ uniformly and
$u\sim\mathcal N(0,I_d/d)$. For each sample
$\xi=(\rho,u)$, the stochastic loss is given by
\[
f_\xi(x)
=
f(x)
+
\frac{B\rho}{2}\|x-x^0\|^2
+
G\langle u,x-x^0\rangle,
\]
where the corresponding stochastic gradient is
\[
\nabla f_\xi(x)
=
\nabla f(x)+B\rho(x-x^0)+Gu .
\]
Here, the objective is unchanged and the oracle is unbiased as
\[
\E_\xi[f_\xi(x)]=f(x),
\qquad
\E_\xi[\nabla f_\xi(x)]=\nabla f(x).
\]
Moreover,
\[
\E_\xi\|\nabla f_\xi(x)-\nabla f(x)\|^2
=
\E_\xi\|B\rho(x-x^0)+Gu\|^2
=
B^2\|x-x^0\|^2+G^2,
\]
where the cross term vanishes by independence and the zero-mean assumptions. Thus, our wrapper satisfies the exact \(\mathsf{BG}\)-0 oracle.

As the extension to the mini-batch case of size $b$ (which is essential to compare with the dynamic batching based models in \cite{fazla2026lower}), we draw independent
\(\xi_i=(\rho_i,u_i)\) and use
\[
g_b(x)
=
\frac1b\sum_{i=1}^b \nabla f_{\xi_i}(x)
=
\nabla f(x)
+
\frac1b\sum_{i=1}^b
\left(B\rho_i(x-x^0)+Gu_i\right),
\]
which satisfies
\[
\E[g_b(x)]=\nabla f(x),
\qquad
\E\|g_b(x)-\nabla f(x)\|^2
=
\frac{B^2\|x-x^0\|^2+G^2}{b}.
\]
For the STORM and NSTORM recursions, when a transition from \(x\) to \(y\) requires two stochastic gradients, we evaluate \(\nabla f_\xi(x)\) and \(\nabla f_\xi(y)\) with the same sample \(\xi\) and analogously reuse the same mini-batch in the batched case. Since this process requires the computation of two stochastic gradients, the corresponding SFO is doubled in our calculations.

\begin{lemma}
\label{lem:exp_scalar_bg0_wrapper}
Let \(f\in\mathcal L_{\rm sym}^*(\alpha)\) with constants \(L_0,L_1>0\), and let \(\alpha\in(0,1]\). Let \(\rho\in\{-1,+1\}\) uniformly and \(u\sim\mathcal N(0,I_d/d)\), independently. Thus
\[
\E[\rho]=0,\qquad \E[\rho^2]=1,\qquad
\E[u]=0,\qquad \E\|u\|^2=1.
\]
For each \(\xi=(\rho,u)\), define
\[
f_\xi(x)
=
f(x)
+
\frac{B\rho}{2}\|x-x^0\|^2
+
G\langle u,x-x^0\rangle.
\]
Then \(\E_\xi[f_\xi(x)]=f(x)\), the stochastic gradient is unbiased, and
\[
\E_\xi\|\nabla f_\xi(x)-\nabla f(x)\|^2
=
B^2\|x-x^0\|^2+G^2.
\]
Moreover, there exist constants \(\widetilde L_0,\widetilde L_1>0\) such that, for all \(x,y\),
\[
\E_\xi\|\nabla f_\xi(y)-\nabla f_\xi(x)\|^2
\le
\|y-x\|^2
\E_\xi\left[
\left(
\widetilde L_0+
\widetilde L_1
\max_{\theta\in[0,1]}
\|\nabla f_\xi(x_\theta)\|^\alpha
\right)^2
\right],
\]
where \(x_\theta=\theta y+(1-\theta)x\).
\end{lemma}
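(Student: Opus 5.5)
The unbiasedness and variance identity are direct computations, already sketched before the statement. Since $\E[\rho]=0$ and $\E[u]=0$, we get $\E_\xi[f_\xi(x)]=f(x)$. Since $\nabla f_\xi(x)=\nabla f(x)+B\rho(x-x^0)+Gu$, unbiasedness follows by linearity. Expanding $\|B\rho(x-x^0)+Gu\|^2$, independence of $\rho$ and $u$ together with zero means kills the cross term. Then $\E[\rho^2]=1$ and $\E\|u\|^2=1$ give exactly $B^2\|x-x^0\|^2+G^2$. Differentiability of $f_\xi$ is inherited from $f$.

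For the expected generalized smoothness, the key simplification is that the additive noise $Gu$ cancels in differences. Pathwise,
\[
\nabla f_\xi(y)-\nabla f_\xi(x)=\nabla f(y)-\nabla f(x)+B\rho(y-x).
\]
Applying Assumption~\ref{as:lsym_alpha} to $f$ and using $|\rho|=1$ gives, pointwise in $\xi$,
\[
\|\nabla f_\xi(y)-\nabla f_\xi(x)\|\le\Bigl(L_0+B+L_1\max_{\theta}\|\nabla f(x_\theta)\|^\alpha\Bigr)\|y-x\|.
\]
So it remains to bound the deterministic quantity $\max_\theta\|\nabla f(x_\theta)\|^\alpha$ by the sample-gradient factor in mean square.

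The natural route is to use $\nabla f(z)=\E_\xi[\nabla f_\xi(z)]$. Jensen's inequality then gives $\|\nabla f(z)\|\le\E_\xi\|\nabla f_\xi(z)\|$, and concavity of $t\mapsto t^\alpha$ gives $\|\nabla f(z)\|^\alpha\le\E_\xi\|\nabla f_\xi(z)\|^\alpha$. Taking the max over $\theta$ inside the expectation, with the max at a fixed $\theta$ bounded by the max over $\theta$ of the sample quantity, yields
\[
\max_\theta\|\nabla f(x_\theta)\|^\alpha\le\E_\xi\Bigl[\max_\theta\|\nabla f_\xi(x_\theta)\|^\alpha\Bigr].
\]
Write $M_\xi:=\max_\theta\|\nabla f_\xi(x_\theta)\|^\alpha$, which is continuous in $\theta$ and hence well defined and measurable. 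The pointwise bound then becomes $\|\nabla f_\xi(y)-\nabla f_\xi(x)\|\le(\widetilde L_0+\widetilde L_1\E_\xi M_\xi)\|y-x\|$ with $\widetilde L_0:=L_0+B$ (or any positive value if $L_0+B$ is degenerate) and $\widetilde L_1:=L_1$. Squaring, taking $\E_\xi$ (the right side is deterministic), and applying Jensen once more gives
\[
(\widetilde L_0+\widetilde L_1\E M_\xi)^2\le\E_\xi\bigl[(\widetilde L_0+\widetilde L_1M_\xi)^2\bigr].
\]
This is exactly the claimed inequality.

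The main obstacle is the step exchanging $\max_\theta$ and $\E_\xi$ and moving the $\alpha$-power inside the expectation. I will handle it carefully by fixing $\theta$ first: bound $\|\nabla f(x_\theta)\|^\alpha\le\E_\xi\|\nabla f_\xi(x_\theta)\|^\alpha\le\E_\xi M_\xi$, and only then take the supremum over $\theta$ on the left. This avoids any interchange issues. Integrability of $M_\xi$ needs a brief check. Since $\|\nabla f_\xi(z)\|\le\|\nabla f(z)\|+B\|z-x^0\|+G\|u\|$ and $\nabla f$ is continuous on the compact segment, $M_\xi$ has finite second moment because $u$ is Gaussian.
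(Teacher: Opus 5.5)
There is a genuine gap at the step where you pass from $\|\nabla f(z)\|\le\E_\xi\|\nabla f_\xi(z)\|$ to $\|\nabla f(z)\|^\alpha\le\E_\xi\|\nabla f_\xi(z)\|^\alpha$ ``by concavity.'' Concavity of $t\mapsto t^\alpha$ gives the reverse inequality, $\E_\xi\|\nabla f_\xi(z)\|^\alpha\le(\E_\xi\|\nabla f_\xi(z)\|)^\alpha$, so for $\alpha\in(0,1)$ the chain breaks. The claimed bound is in fact false for this very wrapper. Take $G=0$ and a point $z$ where $B(z-x^0)=\nabla f(z)\neq0$. Then $\nabla f_\xi(z)=(1+\rho)\nabla f(z)$, hence $\E_\xi\|\nabla f_\xi(z)\|^\alpha=2^{\alpha-1}\|\nabla f(z)\|^\alpha<\|\nabla f(z)\|^\alpha$. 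More generally, for arbitrary unbiased noise no constant $C$ yields $\|\nabla f(z)\|^\alpha\le C\,\E_\xi\|\nabla f_\xi(z)\|^\alpha$, because two-point heavy-tailed noise makes the right side arbitrarily small. So the interchange of $\max_\theta$ and $\E_\xi$ that you flagged is not the real obstacle. What is missing is a structural property of the noise, and without it your choice $\widetilde L_1=L_1$ is unsupported.

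The missing idea, which is what the paper uses, is that $Z_\theta:=B\rho(x_\theta-x^0)+Gu$ is symmetric: $(-\rho,-u)$ has the same law as $(\rho,u)$. Since $\|a+z\|^2+\|a-z\|^2\ge2\|a\|^2$, at least one of $\|a\pm z\|$ is at least $\|a\|$. Hence, for every fixed $a$ and $p>0$,
\[
\E_\xi\|a+Z_\theta\|^p=\tfrac12\E_\xi\bigl[\|a+Z_\theta\|^p+\|a-Z_\theta\|^p\bigr]\ge\tfrac12\|a\|^p .
\]
Take $p=\alpha$ and $a=\nabla f(x_\theta)$, then the supremum over $\theta$. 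This gives $\max_\theta\|\nabla f(x_\theta)\|^\alpha\le2\E_\xi M_\xi$ with your $M_\xi$. The rest of your proof then goes through with $\widetilde L_0=L_0+B$ and $\widetilde L_1=2L_1$: the pathwise bound using $|\rho|=1$, squaring the deterministic right-hand side, and Jensen for the square.

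The paper organizes it slightly differently. It computes $\E_\xi\|\nabla f_\xi(y)-\nabla f_\xi(x)\|^2=\|\nabla f(y)-\nabla f(x)\|^2+B^2\|y-x\|^2$ exactly and uses $(L_0+L_1R^\alpha)^2\le2L_0^2+2L_1^2R^{2\alpha}$. It then applies the same symmetry bound with $p=2\alpha$, ending with $\widetilde L_0^2\ge2L_0^2+B^2$ and $\widetilde L_1^2\ge4L_1^2$.

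For $\alpha\ge1/2$, a Jensen-only route can be rescued at the squared level, since $t\mapsto t^{2\alpha}$ is then convex. That does not cover $\alpha<1/2$, however. At $\alpha=1$ your original argument is fine as written.
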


\begin{proof}
Since $\E[\rho]=0$ and $\E[u]=0$ we have $\E_\xi[f_\xi(x)]=f(x)$ and
$\E_\xi[\nabla f_\xi(x)]=\nabla f(x)$. Moreover,
\[
\nabla f_\xi(x)-\nabla f(x)=B\rho(x-x^0)+Gu.
\]
Using independence and the properties of $\rho$ and $u$, we obtain
\[
\E_\xi\|\nabla f_\xi(x)-\nabla f(x)\|^2
=
B^2\|x-x^0\|^2+G^2.
\]
It remains to prove the expected generalized-smoothness bound. For any
$x,y$, let $d=y-x$. Since
\[
\nabla f_\xi(y)-\nabla f_\xi(x)
=
\nabla f(y)-\nabla f(x)+B\rho d,
\]
we have
\[
\E_\xi\|\nabla f_\xi(y)-\nabla f_\xi(x)\|^2
=
\|\nabla f(y)-\nabla f(x)\|^2+B^2\|d\|^2.
\]
Since \(f\in\mathcal L_{\rm sym}^*(\alpha)\),
\[
\|\nabla f(y)-\nabla f(x)\|^2
\le
\|d\|^2
\left(
L_0+
L_1\max_{\theta\in[0,1]}\|\nabla f(x_\theta)\|^\alpha
\right)^2.
\]
Thus, with
\[
R:=\max_{\theta\in[0,1]}\|\nabla f(x_\theta)\|,
\]
we obtain
\[
\E_\xi\|\nabla f_\xi(y)-\nabla f_\xi(x)\|^2
\le
\|d\|^2
\left[
2L_0^2+2L_1^2R^{2\alpha}+B^2
\right].
\]
Now define
\[
M_\xi:=\max_{\theta\in[0,1]}\|\nabla f_\xi(x_\theta)\|.
\]
For any fixed \(\theta\), write
\[
Z_\theta:=B\rho(x_\theta-x^0)+Gu.
\]
The random vector \(Z_\theta\) is symmetric: \(Z_\theta\) and \(-Z_\theta\)
have the same distribution. Hence, for any fixed vector \(a\) and any
\(p>0\),
\[
\E_\xi\|a+Z_\theta\|^p
=
\frac12\E_\xi\left[\|a+Z_\theta\|^p+\|a-Z_\theta\|^p\right]
\ge
\frac12\|a\|^p,
\]
because at least one of \(\|a+Z_\theta\|\) and \(\|a-Z_\theta\|\) is at
least \(\|a\|\). Taking \(a=\nabla f(x_\theta)\), \(p=2\alpha\), and using
\(M_\xi\ge \|\nabla f_\xi(x_\theta)\|\), we get
\[
\E_\xi[M_\xi^{2\alpha}]
\ge
\frac12\|\nabla f(x_\theta)\|^{2\alpha}
\quad\text{for all }\theta\in[0,1].
\]
Therefore,
\[
R^{2\alpha}\le 2\E_\xi[M_\xi^{2\alpha}].
\]
Choose constants \(\widetilde L_0,\widetilde L_1>0\) such that
\[
\widetilde L_0^2\ge 2L_0^2+B^2,
\qquad
\widetilde L_1^2\ge 4L_1^2.
\]
Then
\[
2L_0^2+B^2+2L_1^2R^{2\alpha}
\le
\widetilde L_0^2+\widetilde L_1^2\E_\xi[M_\xi^{2\alpha}]
\le
\E_\xi\left[
\left(
\widetilde L_0+\widetilde L_1M_\xi^\alpha
\right)^2
\right].
\]
Combining the preceding inequalities proves the claim.
\end{proof}

This wrapper lets us evaluate stochastic methods on deterministic test objectives while imposing an unbounded variance that grows with the distance from initialization. It therefore provides a controlled experimental model for the distance-dependent stochastic noise studied in the paper, without changing the underlying deterministic objective. Lemma~\ref{lem:exp_scalar_bg0_wrapper} verifies that for all $\alpha\in(0,1]$ (including the $\alpha=1/2$ and $\alpha=2/3$ experimental setups) the proposed framework is compatible with the expected symmetric generalized-smoothness framework.

\subsection{Phase Retrieval}

The first experiment is the phase retrieval objective from \citet{chen2023generalized}. Given measurements
$y_r=|a_r^\top x_\star|^2$, we minimize
\[
f(x)=\frac{1}{2m}\sum_{r=1}^m \left(y_r-|a_r^\top x|^2\right)^2 .
\]
This nonconvex objective is not globally $L$-smooth, but it belongs to
the $\alpha$-symmetric generalized-smooth class
$\mathcal L_{\rm sym}^*(2/3)$, and its stochastic finite-sum version satisfies the corresponding expected condition~\citet{chen2023generalized}. In our experiments, we use dimension \(d=100\) and \(m=3000\) measurements. The measurement matrix has independent entries
\(a_{r,j}\sim\mathcal N(0,0.1^2)\), and the target signal is drawn as
\(x_{\star,j}\sim\mathcal N(0,1)\). Hence, $y_r=|a_r^\top x_\star|^2$. The optimization is initialized independently from $x^0_j\sim\mathcal N(5,1)$, which places the initial point away from the target signal and makes the distance-dependent variance growth visible. After constructing this deterministic phase retrieval objective, we apply the $\mathsf{BG}$-0 wrapper from Appendix~\ref{app:exp_bg0_oracle}. Thus, the stochasticity is controlled through the $\mathsf{BG}$-0 oracle.

\subsection{Cubic Polynomial}
For the second experiment, we use the following polynomial function as the deterministic objective:
\[
f(x)=|x|^{(2-\alpha)/(1-\alpha)}
\]
which has been shown in \citep{chen2023generalized} to satisfy $f\in\mathcal{L}_{\mathrm{sym}}^{*}(\alpha)$ for $w\in\mathbb{R}$ and $\alpha\in(0,1)$. We instantiate this example with $\alpha=1/2$, so $f(x)=|x|^3$, and use the same distance-dependent $\mathsf{BG}$-0 oracle. This gives a minimal one-dimensional experiment in which both the generalized smoothness exponent and the $\mathsf{BG}$-0 variance growth are explicit.

\section{Hyperparameters}
\label{app:exp_hyperparams}
We report the hyperparameters used for the normalized methods in Figures~\ref{fig:phase_retrieval} and~\ref{fig:poly_separation}, established through our theorems. We denote $T:=K+1$ where $K$ is the total number of iterations for a given model. The schedules below specify the theorem-driven choices for the plotted $\mathsf{NSGDM}$ and $\mathsf{NSTORM}$ curves. For fair comparison, we apply grid search to set the learning rate of $\mathsf{SGD}$ (b=1), $\mathsf{SGD}$ (dynamic batch), and $\mathsf{STORM}$ (dynamic batch).

\paragraph{Normalized SGDM.}
For single sample \(\mathsf{NSGDM}\), we use the schedule from Theorem~\ref{thm:nsgdm_three_regimes}:
\[
    \eta = T^{-2/3},
    \qquad
    \gamma = \gamma_0 T^{-5/6},
\]
which commonly satisfies the standard smooth, \(\Lcal_{\rm sym}^*(\alpha)\), and \(\Lcal_{\rm sym}^*(1)\) cases in Theorem~\ref{thm:nsgdm_three_regimes}.

\paragraph{Normalized STORM.}
For \(\mathsf{NSTORM}\), the initialization batch and schedule depend on the smoothness regime in Theorem~\ref{thm:nstorm_three_regimes}. Since we consider expected \(\alpha\)-symmetric generalized smoothness in our experiments, the theorem results in
\[
    N_{\rm init}
    =
    \max\left\{
        1,\left\lceil G^2T^{\frac{2(1-\alpha)}{4+\alpha}}\right\rceil
    \right\},
    \qquad
    \gamma=\gamma_0T^{-\frac{3+\alpha}{4+\alpha}},
    \qquad
    \eta=\eta_0T^{-\frac{4}{4+\alpha}}.
\]

\paragraph{Phase retrieval instantiation.}
For Figure~\ref{fig:phase_retrieval}, the phase retrieval objective satisfies the \(\alpha\)-symmetric generalized smoothness condition with \(\alpha=2/3\). We use \(T=10001\), \(G=1\), \(\gamma_0=10\) for \(\mathsf{NSGDM}\), and \(\gamma_0=7.5\), \(\eta_0=1\) for \(\mathsf{NSTORM}\). Thus, the corresponding hyperparameters are set to
\[
    \mathsf{NSGDM:}\qquad
    \gamma \approx 4.641\times 10^{-3},
    \qquad
    \eta \approx 2.154\times 10^{-3},
\]
and
\[
    \mathsf{NSTORM:}\qquad
    \gamma \approx 5.397\times 10^{-3},
    \qquad
    \eta \approx 3.73\times 10^{-4},
    \qquad
    N_{\rm init}=4.
\]

\paragraph{Cubic polynomial instantiation.}
For Figure~\ref{fig:poly_separation}, the polynomial experiment is set to satisfy the $\alpha$-symmetric generalized smoothness condition with \(\alpha=1/2\), \(T=10001\), \(G=0.5\), and \(\gamma_0=\eta_0=1\). Thus, the corresponding hyperparameters are set to
\[
    \mathsf{NSGDM:} \qquad \gamma \approx 4.641\times 10^{-4},
    \qquad
    \eta \approx 2.154\times 10^{-3}.
\]
and
\[
    \mathsf{NSTORM:} \qquad \gamma \approx 7.742\times 10^{-4},
    \qquad
    \eta \approx 2.782\times 10^{-4},
    \qquad
    N_{\rm init}=2.
\]
\end{document}